\theoremstyle{plain}
\newtheorem{theorem}{Theorem}[section]
\newtheorem{proposition}[theorem]{Proposition}
\newtheorem{lemma}[theorem]{Lemma}
\newtheorem{corollary}[theorem]{Corollary}
\theoremstyle{definition}
\newtheorem{definition}[theorem]{Definition}
\newtheorem{assumption}[theorem]{Assumption}
\theoremstyle{remark}
\newtheorem{remark}[theorem]{Remark}
\newcommand{\Law}{\operatorname{Law}}
\newcommand{\diag}{\text{diag}}
\newcommand{\norm}[1]{\|#1\|}
\newcommand{\inn}[2]{\left\langle#1, #2\right\rangle}
\newcommand{\Gaus}{\mathcal{N}}
\def\vtheta{{\bm{\theta}}}
\definecolor{purple}{RGB}{128, 0, 128}
\def\eqref#1{Eq.~(\ref{#1})}
\def\1{\bm{1}}
\def\vtheta{{\bm{\theta}}}
\def\va{{\bm{a}}}
\def\vb{{\bm{b}}}
\def\vg{{\bm{g}}}
\def\vh{{\bm{h}}}
\def\vv{{\bm{v}}}
\def\vx{{\bm{x}}}
\def\vz{{\bm{z}}}
\def\mG{{\bm{G}}}
\def\mI{{\bm{I}}}
\def\mK{{\bm{K}}}
\def\mP{{\bm{P}}}
\def\mU{{\bm{U}}}
\def\mW{{\bm{W}}}
\DeclareMathAlphabet{\mathsfit}{\encodingdefault}{\sfdefault}{m}{sl}
\SetMathAlphabet{\mathsfit}{bold}{\encodingdefault}{\sfdefault}{bx}{n}
\def\gB{{\mathcal{B}}}
\newcommand{\E}{\mathbb{E}}
\newcommand{\Cov}{\mathrm{Cov}}
\DeclareMathOperator{\Tr}{Tr}
\title{Feature Learning Dynamics in Infinite-Depth Neural Networks}
\author{%
  Zihan Yao \\
  School of Computing \\
  DePaul University \\
  \texttt{zyao8@depaul.edu}
  \And
  Ruoyu Wu \\
  Department of Mathematics \\
  Iowa State University \\
  \texttt{ruoyu@iastate.edu}
  \And
  Tianxiang Gao\thanks{Corresponding author.}\\
  School of Computing \\
  DePaul University \\
  \texttt{tgao9@depaul.edu}
}
\begin{document}

\maketitle

\begin{abstract}
Deep neural networks (DNNs) have achieved remarkable success in practice, yet a mechanistic understanding of how features evolve during training remains incomplete, especially in the large-depth limit. For ResNets under depth-$\mu$P scaling, prior work treats the layer index $\ell$ as a continuous time $t_\ell = \ ell/L$, yielding SDE descriptions of the training dynamics. A key unresolved issue is that backpropagation reuses each forward weight matrix $\mW_\ell$ through its transpose $\mW_\ell^\top$, creating correlations between forward features and backward gradients whose behavior and role in feature learning remain unclear. We study this reused-weight forward--backward coupling in one-layer ResNets under depth-$\mu$P scaling. Using conditional Gaussian representations, we explicitly separate the coupling terms induced by weight reuse from decoupled Gaussian fluctuations before taking any limit. At initialization, we prove that the coupling is a finite-width effect and vanishes at rate $O(n^{-1})$, uniformly over depth. During training, however, SGD induces a nontrivial forward--backward correlation term that survives the infinite-width limit. The key depth effect is that, under depth-$\mu$P scaling, this surviving term is higher order in depth and its accumulated contribution over layers becomes negligible as $L\to\infty$. This depth-induced suppression motivates \textit{neural feature dynamics} (NFD), a forward--backward SDE system with decoupled backward weights that retains the feature-gradient covariance structure generated during training. Under nondegeneracy assumptions, we prove that the finite-network training dynamics converge to NFD with an $O(L^{-1})$ depth-discretization error, while the reused-weight coupling term has a faster $O(L^{-2})$  decay. These results provide a rigorous infinite-depth limit for the feature-learning dynamics of one-layer ResNets under depth-$\mu$P.
\end{abstract}

\vspace{-0.5em}
\section{Introduction}
\vspace{-0.6em}
Deep neural networks (DNNs) have achieved remarkable success in practice across a wide range of domains \cite{achiam2023gpt,Dosovitskiy2021,LeCun2015}. Despite this success, a mechanistic understanding of how useful features are learned during training remains incomplete \cite{simon2026there}. Ideally, a theory of deep learning should not only establish optimization and generalization guarantees, but also characterize the training-time evolution of internal features, explain how their interaction and co-evolution with backward gradients shape feature learning, and determine how forward--backward coupling influences this learning process.

A fruitful approach toward such a theory is to study limiting training dynamics of simple DNNs optimized via stochastic gradient descent (SGD) in the nontrivial feature-learning regime \cite{yang2020feature,Mei2018,yang2024tensor,bordelon2024depthwise}. Under suitable large-network limits, this complex high-dimensional feature-gradient co-evolution may converge to tractable deterministic or stochastic dynamics. These limits are naturally structured along two axes: \textit{width} and \textit{depth}. Along the width direction, mean-field theory \cite{Mei2018} and the Tensor Program \cite{yang2020feature} establish principled frameworks for the infinite-width feature learning dynamics. In particular, maximal update parameterization ($\mu$P) ensures that feature updates remain $O(1)$ at infinite width, enabling hyperparameter (HP) transfer across width \cite{yang2021tuning}.

Along the depth direction, however, the corresponding characterization of feature-learning dynamics remains less complete. Many existing analysis studies this problem through one-layer residual networks \cite{he2016deep,yang2024tensor,hayou2023width,bordelon2024depthwise}. 
Recent work shows that scaling the residual branch by $1/\sqrt{L}$, where $L$ is the depth, is essential for stable large-depth training and enables depth-wise HP transfer; this scaling is known as depth-$\mu$P \cite{yang2024tensor,bordelon2024depthwise}. Under this regime, the layer index $\ell$ transforms into a continuous time variable $t_\ell=\ell/L\in[0,1]$, by which prior works rigorously prove that, in the infinite-width and infinite-depth limit, forward feature propagation at initialization converges to a mean-field stochastic differential equation (SDE) \cite{hayou2023width,peluchetti2020infinitely}. Dynamic mean-field theory (DMFT), as an insightful physics-inspired heuristic, later extends this SDE view to training time by formulating feature learning as self-consistent forward-backward stochastic dynamics coupled through correlation kernels and response functions \cite{bordelon2024depthwise}. These developments raise the following question:
\vspace{-0.3em}
\begin{quote}
    \textit{Why do we still lack rigorous optimization and generalization theories comparable to those in kernel regimes? What is the main obstacle?}
\end{quote}
\vspace{-0.3em}
From our perspective, a central bottleneck is the \textit{forward--backward coupling effect} introduced by backpropagation, which reuses the forward weights $\mW_{\ell}$ in the backward pass via their transpose $\mW_{\ell}^{\top}$. This coupling is already present at initialization and also appears when computing the NTK in the kernel regime \citep{yang2017mean,jacot2018neural}. Fortunately, the \textit{gradient-independence heuristic} becomes valid in the infinite-width limit \citep{yang2020ntk}, where the backward weights can be treated as decoupled from the forward weights. Together with lazy training \citep{Chizat2019}, where features are essentially fixed at initialization, this makes the coupling negligible throughout training and enables a closed kernel description. On the other hand, feature-learning regimes are more delicate: features and gradients co-evolve nontrivially, and SGD updates create new dependencies among weights, forward features, and backward gradients. Hence, such training-time correlation effects can persist in the infinite-width limit \citep{yang2020feature}. However, their behavior in the large-depth limit remains much less understood. Although DMFT \cite{bordelon2024depthwise} encodes related dependencies through response functions, it does not directly characterize these coupling effects and their depth scaling. Hence, understanding how this coupling behaves during training is a necessary step toward a rigorous theory of infinite-depth feature-learning dynamics.

To address this bottleneck, we characterize the reused-weight forward--backward coupling and its asymptotic behavior in the infinite-width and infinite-depth limits. Our first step is to make this coupling explicit before taking any limit. In particular, we use conditional Gaussian representations for the forward and backward increments, thereby separating reused-weight correlations from decoupled Gaussian fluctuations. At initialization, we provide a quantitative refinement of the \textit{gradient independence assumption} (GIA) \citep{yang2017mean} by proving a finite-width decay rate for the coupling effect, uniformly over depth. During training, the same Gaussian representation explicitly shows how SGD updates induce a nontrivial forward--backward correlation term that does not vanish in the large-width limit. One of our key findings is that this surviving term is higher order in depth and becomes negligible as the depth increases. This motivates \textit{neural feature dynamics (NFD)} as a rigorous limiting dynamics for describing infinite-depth feature learning. Our contributions are summarized as follows:

\vspace{-0.4em}
\begin{itemize}[leftmargin=*]
    \item \textbf{Gaussian characterization of reused-weight coupling.}
    We derive conditional Gaussian representations that explicitly separate the reused-weight forward--backward coupling from the decoupled Gaussian fluctuations. This identifies the precise obstruction to a GIA-style approximation in finite-width, finite-depth networks, both at initialization and during training.

\vspace{-0.1em}
    \item \textbf{Initialization-time GIA with depth-uniform rate.}
    At initialization, we prove that the true backward process and its decoupled auxiliary process differ by $O(1/n)$ in coordinate-wise mean square, uniformly over depth. Consequently, the joint forward--backward propagation admits a forward--backward SDE limit driven by independent forward and backward Brownian motions.

\vspace{-0.1em}
    \item \textbf{Training-time coupling and depth suppression.}
    We show that after SGD updates, a forward--backward correlation term survives in the infinite-width limit, so width alone does not justify a GIA-style description during training. Under depth-$\mu$P scaling, however, this term is higher order in depth. In particular, its per-layer contribution is $O(L^{-2})$, while the main SGD drift is $O(L^{-1})$. Hence, its accumulated effect becomes negligible as $L\to\infty$.

% \vspace{-0.1em}
    \item \textbf{Neural feature dynamics.}
    Motivated by this depth-induced suppression, we define NFD as a candidate dynamics for infinite-depth feature learning, using decoupled backward weights while retaining the nontrivial feature and gradient covariance kernels generated during training. Under nondegeneracy assumptions, we prove convergence of the finite-network training dynamics to NFD with an $O(L^{-1})$ depth-discretization error, while the omitted coupling term has a faster $O(L^{-2})$ decay, providing a rigorous feature-learning dynamics for one-layer ResNets in large-depth limit.
\end{itemize}

\section{Related Work}\label{app sec: related works}
\vspace{-0.5em}
\paragraph{Width limits and feature learning.}
An important line of theoretical work studies the large-width limit of neural networks. In the Neural Tangent Kernel (NTK) regime \citep{jacot2018neural,lee2019wide,arora2019fine}, training is well approximated by kernel regression with an essentially fixed kernel, leading to strong optimization and generalization guarantees for overparameterized networks \citep{allen2019convergence,du2019gradient,zou2020gradient,gao2022a,gao2025global}. However, this lazy-training regime \citep{Chizat2019} does not capture substantial feature evolution. Mean-field parameterizations provide an infinite-width feature-learning alternative \citep{Mei2018,ChizatBach2018,SirignanoSpiliopoulos2018}, with convergence guarantees for shallow networks \citep{nitanda2022convex,pham2021global}, but standard mean-field scalings become degenerate in deep architectures, leading to vanishing signals or gradients \citep{fang2021modeling,nguyen2023rigorous}. Tensor Programs and $\mu$P \citep{yang2019wide,yang2020ntk,yang2020laws,yang2020feature,yang2021tuning} clarify which width scalings yield nontrivial feature learning and enable HP transfer.

% \vspace{-0.7em}
\paragraph{Depth limits and training dynamics.}
Large-depth analyses study how signals and gradients behave as depth increases, often in conjunction with width. Classical signal-propagation work identified stability conditions such as the edge of chaos \citep{poole2016exponential,schoenholz2017deep}. For ResNets, scaling the residual branch by $1/\sqrt{L}$ restores stable propagation and yields an SDE view of forward feature propagation at initialization in joint width-depth limits \citep{hayou2023width}; related large-depth stability studies also appear in implicit models and Neural ODEs \citep{gao2023wide,gao2022a,gao2024mastering}. More recent work shows that depth-$\mu$P scaling enables depthwise HP transfer \citep{yang2024tensor,bordelon2024depthwise} and that feature learning can persist in infinite-depth networks \citep{yang2024tensor}. These results motivate an SDE view of training dynamics in the large-depth limit, but they do not explicitly characterize the forward--backward correlations created by weight reuse.

% \vspace{-0.7em}

\paragraph{Relation to prior work.}
Our work focuses on this missing forward--backward coupling, which arises because the forward update uses $\mW_\ell$ while backpropagation uses $\mW_\ell^\top$. We rigorously characterize this coupling through conditional Gaussian representations and show how its role changes from initialization to training under depth-$\mu$P scaling, thereby motivating neural feature dynamics (NFDs) as a limiting description of one-layer ResNet training dynamics.
% \vspace{-0.5em}
\section{Preliminaries}\label{sec:preliminary}
% \vspace{-0.5em}
\paragraph{Model and Training Setup.}
In this paper, we consider a simple one-layer ResNet under depth-$\mu$P scaling \cite{yang2024tensor}, which maps an input $\vx\in\mathbb{R}^d$ through the scaled residual stream defined as follows
\begin{align}
    f(\vx;\vtheta) 
    &= \frac{1}{n} \, \vv^{\top} \vh_{L}, 
    \qquad
    \vh_{\ell} 
    = \vh_{\ell-1} 
    + \frac{1}{\sqrt{L n}} \, \mW_{\ell} \, \phi(\vh_{\ell-1}), 
    \qquad
    \vh_0 
    = \frac{1}{\sqrt{d}} \, \mU \vx, 
    \quad \forall\ell \in [L],
    \label{eq:resnet}
\end{align}
where $\phi$ is an activation function, and $\mU \in \mathbb{R}^{n\times d}$, $\mW_{\ell} \in \mathbb{R}^{n\times n}$, and $\vv \in \mathbb{R}^{n}$ are trainable parameters. We denote the collection of parameters by $\vtheta := (\vv,\{\mW_{\ell}\}_{\ell=1}^{L},\mU)$ and initialize all entries independently $\vtheta_i \sim \mathcal{N}(0,1)$. Here, the $1/\sqrt{n}$ width scaling and $1/\sqrt{L}$ depth scaling on the residual branches follow directly from the standard $\mu$P \citep{yang2020feature} and depth-$\mu$P \citep{yang2024tensor}, respectively.

Given a loss $\mathcal{L}$, we train the ResNet with stream SGD using one sample $(\vx^{(k)},y^{(k)})$ per iteration:
\begin{align}
    \vtheta^{(k+1)}
    = \vtheta^{(k)}
    - \eta \nabla_{\vtheta} \mathcal{L}(f^{(k)},y^{(k)})
    = \vtheta^{(k)}
    - \eta \chi^{(k)} \nabla_{\vtheta^{(k)}} f^{(k)},
\end{align}
where $f^{(k)} := f(\vx^{(k)};\vtheta^{(k)})$ and 
$\chi^{(k)} := \partial_f \mathcal{L}(f^{(k)},y^{(k)})$. 
To obtain nontrivial feature updates in the infinite-width and infinite-depth limits, following depth-$\mu$P, we use the learning rate $\eta=\eta_c n$, where $\eta_c>0$ is the fixed effective learning rate, independent of the network width and depth.

\paragraph{Pre- vs. Post-activation ResNets.}
While the original ResNet \citep{he2016deep} used the post-activation design \eqref{eq:post-act-style}, modern architectures commonly adopt the pre-activation form \eqref{eq:resnet} \citep{he2016identity}, including Transformers \citep{radford2019language,brown2020language}. Beyond empirical practice, the following proposition shows that post-activation ResNets can exhibit hidden-state divergence at large depth; see Appendix~\ref{app:post-act-style} for the proof.

\begin{proposition}
\label{prop:post-act}
Let $\phi$ satisfy the following \textbf{positive dominance} condition: there exist nonnegative constants $c_1,c_2$, not both zero, such that $\E[\phi(wx)] \geq c_1 |x| + c_2$, $\forall x \in \mathbb{R}$, where $w\sim\mathcal{N}(0,1)$. Then, in the post-activation ResNet \eqref{eq:post-act-style}, the expected hidden state obeys, for every $i\in[n]$,
\begin{equation*}
    \E[\vh_{L,i}]
    \geq
    c_1 \left(1+\frac{c_1}{\sqrt{Ln}}\right)^L
    \frac{\norm{\vx}}{\sqrt{d}}
    +
    c_2\sqrt{L}.
\end{equation*}
\end{proposition}
\vspace{-0.5em}

For activations such as ReLU, where $c_1=1/\sqrt{2\pi}$ and $c_2=0$, Proposition~\ref{prop:post-act} shows that the hidden states $\vh_\ell$ may diverge as depth grows, even under the residual scaling $1/\sqrt{L}$. Figure~\ref{fig:pre-act-vs-post-act} illustrates this behavior. The superior stability of the pre-activation design justifies focusing our subsequent analysis on \eqref{eq:resnet}; it also avoids additional normalization or carefully balanced depth-width scaling assumptions often needed in analyses of post-activation ResNets \citep{peluchetti2020infinitely,yang2024tensor,li2022neural}.
\section{Forward-Backward Coupling at Initialization}
\label{sec:at-initial}

\begin{figure*}[t]
    \centering
    \begin{minipage}{0.24\textwidth}
        \centering
        \includegraphics[width=\linewidth]{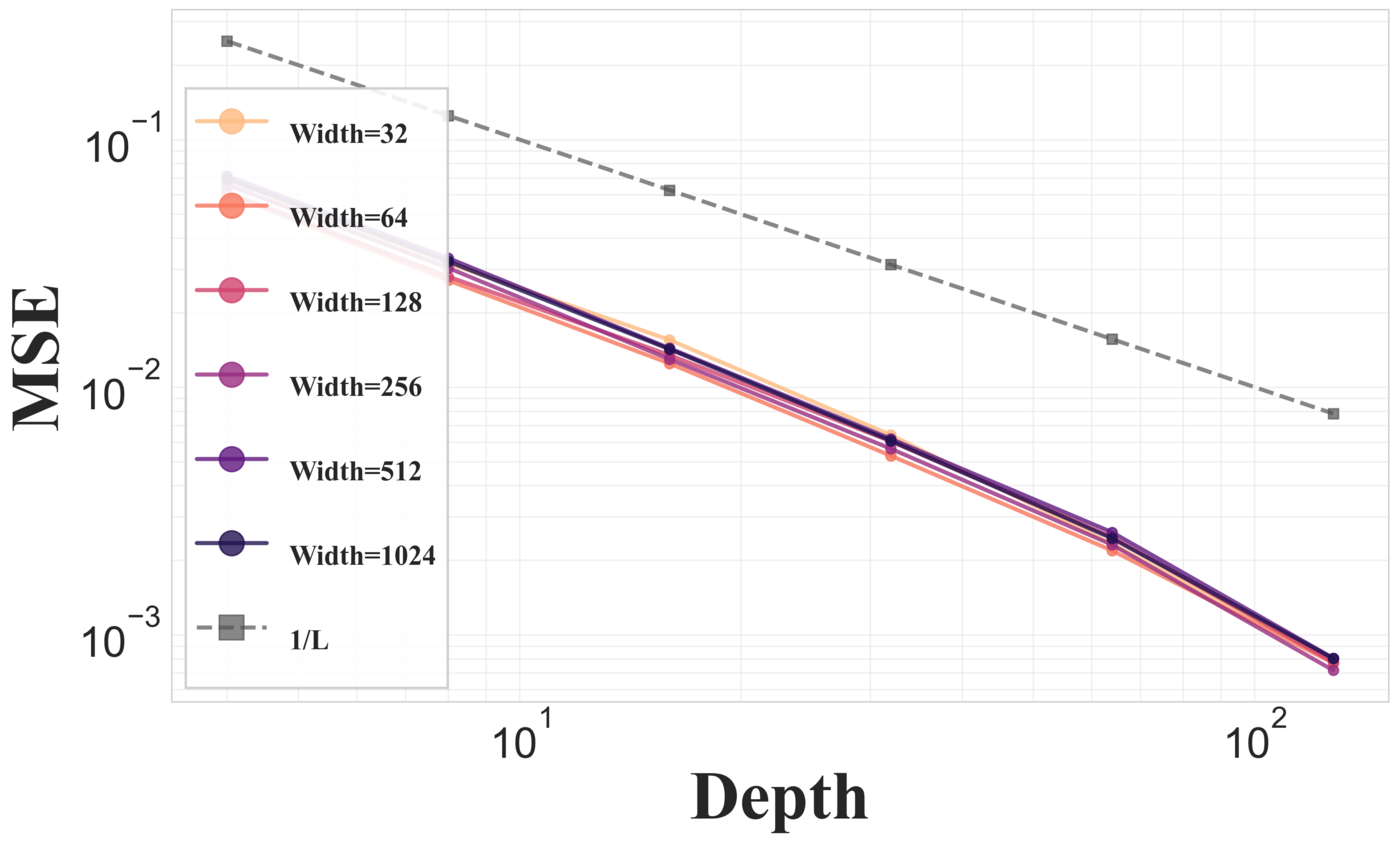}
    \end{minipage}
    \begin{minipage}{0.24\textwidth}
        \centering
        \includegraphics[width=\linewidth]{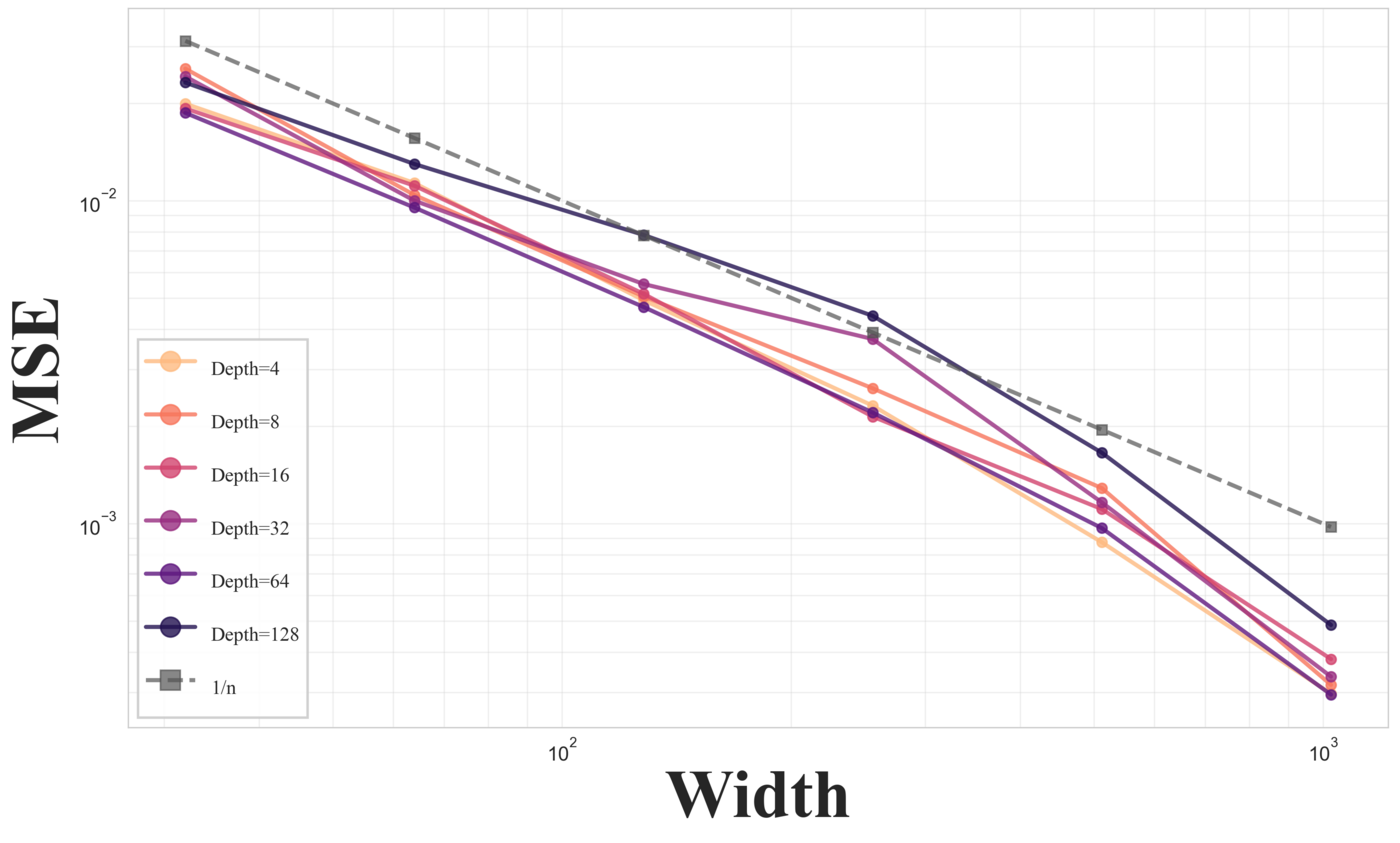}
    \end{minipage}
    \begin{minipage}{0.24\textwidth}
        \centering
        \includegraphics[width=\linewidth]{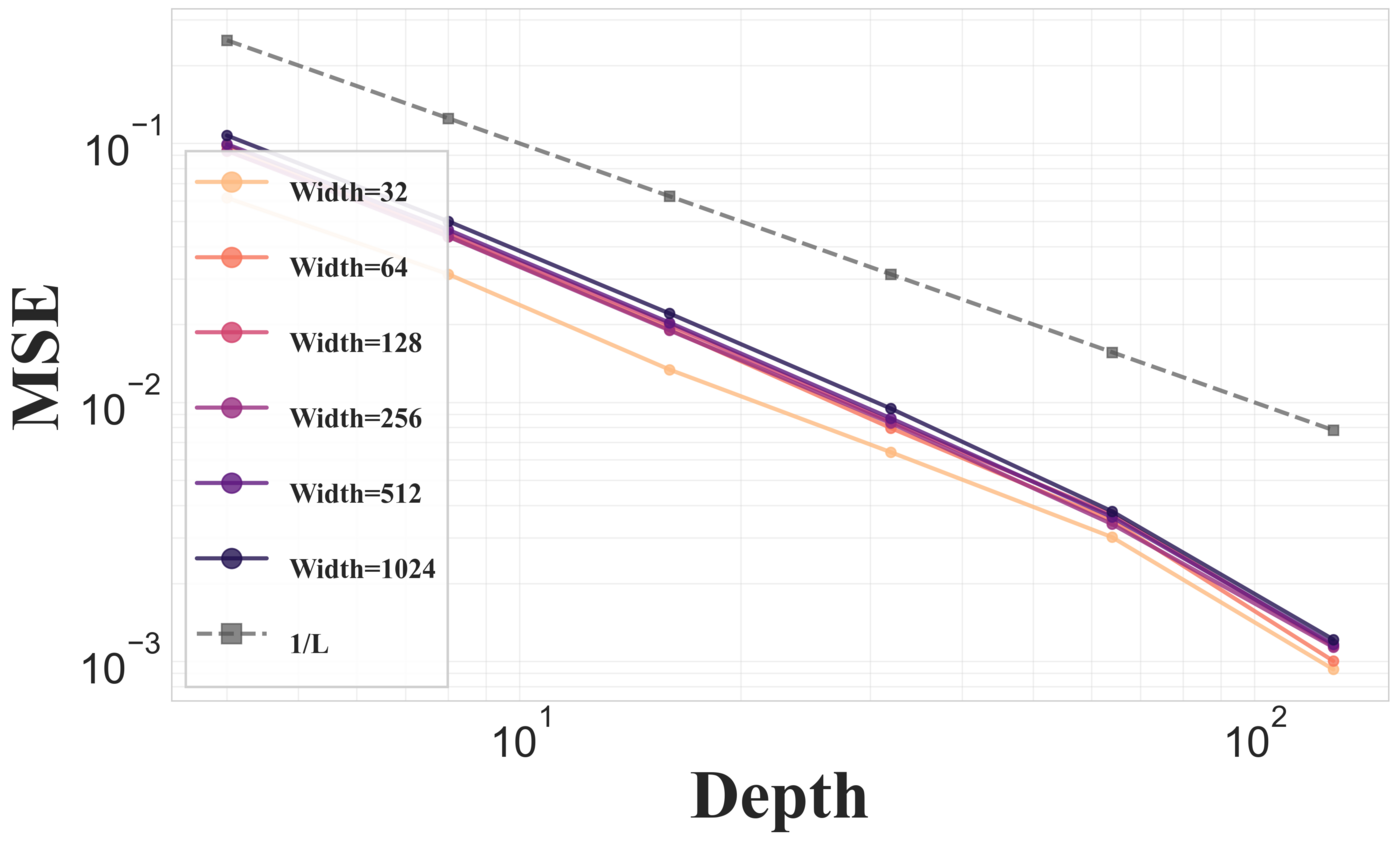}
    \end{minipage}
    \begin{minipage}{0.24\textwidth}
        \centering
        \includegraphics[width=\linewidth]{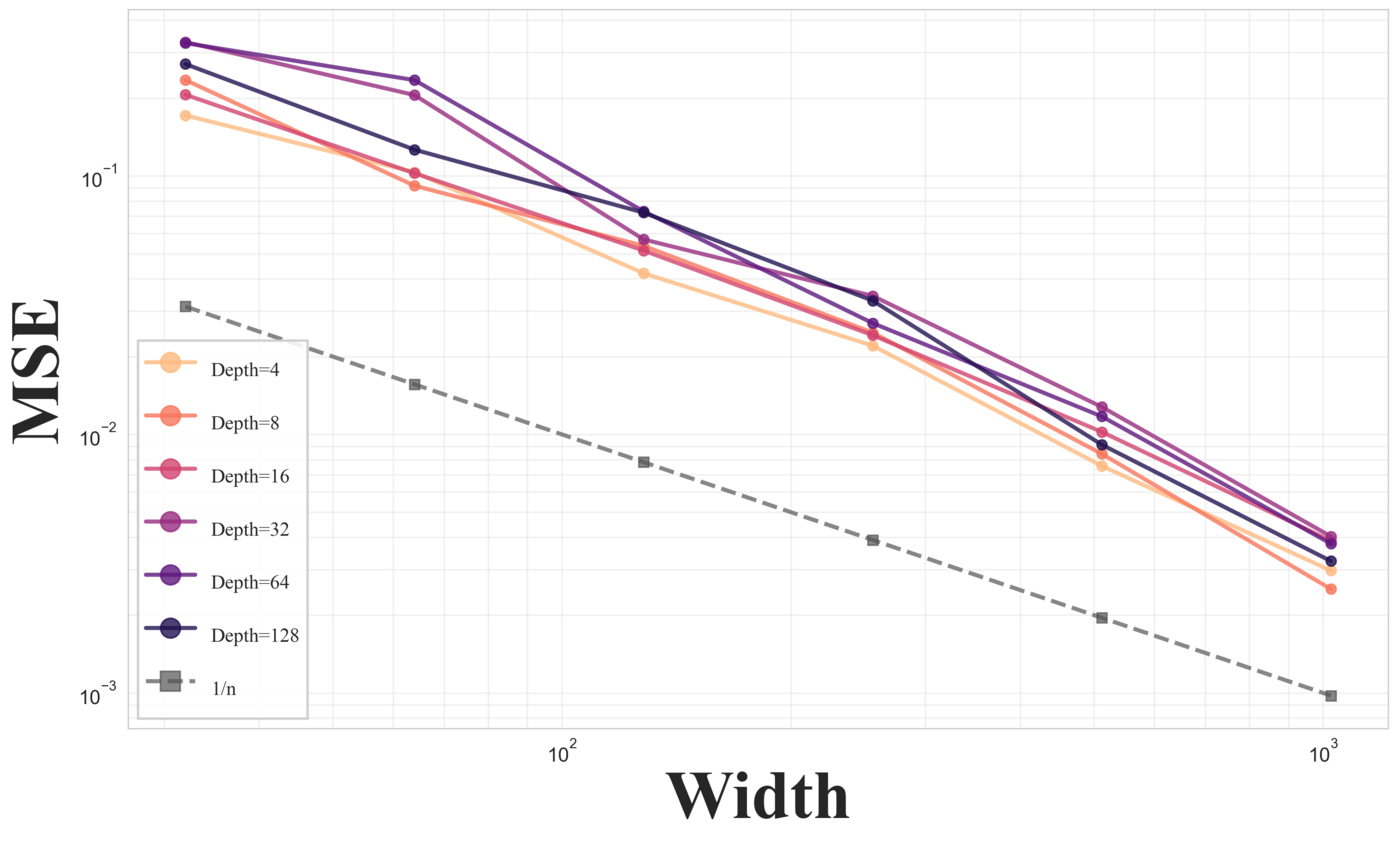}
    \end{minipage}
    \caption{\textbf{Convergence to NFD at initialization and after training.}
    Depth-$\mu$P ResNets are evaluated on CIFAR-10 with SGD. The first two panels show convergence with depth and width at initialization, and the last two after 30 training epochs. Across settings, the error decays consistently with $\mathcal{O}(1/L+1/n)$, supporting the commutativity of the width and depth limits.}
    \label{fig:depth-width-convergence}
\end{figure*}

We first analyze the forward--backward coupling at initialization. The goal is to show that the reused-weight correlation in backward propagation is a finite-width effect, so the GIA holds asymptotically as $n\to\infty$, with coordinate-wise mean-square error of order $1/n$, uniformly over depth.

\paragraph{Backward Gradient Propagation.}
Analogous to the forward feature propagation represented by the hidden states $\{\vh_\ell\}_{\ell=0}^{L}$, the backward gradient propagation can be represented by the recursion
\begin{equation}
    \vg_{L}^{(0)}= \vv^{(0)},
    \qquad
    \vg_{\ell-1}^{(0)}
    =
    \vg_{\ell}^{(0)}
    +
    \frac{1}{\sqrt{Ln}}\,
    \phi^{\prime}(\vh_{\ell-1}^{(0)}) \odot (\mW_{\ell}^{(0)})^{\top} \vg_{\ell}^{(0)},
    \quad \ell\in [L],
    \label{eq:grad-prop}
\end{equation}
where $\odot$ denotes element-wise multiplication. The key challenge is that backward recursion reuses the same matrix $\mW_\ell^{(0)}$ that appears in the forward pass, through its transpose $(\mW_\ell^{(0)})^\top$. This creates a $(\mW_\ell,\mW_\ell^\top)$ correlation between the forward features and backward gradients. To clearly isolate this effect, we compare the true backward process with an auxiliary backward process $\{\bar \vg_\ell^{(0)}\}_{\ell=0}^{L}$ obtained by replacing $(\mW_\ell^{(0)})^\top$ in \eqref{eq:grad-prop} with an independent copy $(\widetilde{\mW}_\ell^{(0)})^\top$. This auxiliary construction formalizes \textit{the gradient independence assumption (GIA)} \cite{yang2017mean,yang2020ntk}, a common heuristic for computing neural-network limits at initialization.

\paragraph{Gaussian Representation.}
The forward and backward recursions can be written in \textit{increment} form
\begin{align}
    \vh_{\ell}^{(0)}
    &=
    \vh_{\ell-1}^{(0)}
    +
    \frac{1}{\sqrt{L}}\,\va_{\ell}^{(0)},
    \qquad
    \vg_{\ell-1}^{(0)}
    =
    \vg_{\ell}^{(0)}
    +
    \frac{1}{\sqrt{L}}\,\vb_{\ell}^{(0)}\odot \dot\vx_{\ell-1}^{(0)},
    \label{eq:forward-backward-increment-form}
\end{align}
where $\vx_{\ell}^{(0)}:=\phi(\vh_{\ell}^{(0)})$, 
$\dot \vx_{\ell}^{(0)}:=\phi'(\vh_{\ell}^{(0)})$, and the innovative increments are given by
\begin{align}
    \va_{\ell}^{(0)}
    :=
    \frac{1}{\sqrt n}\mW_{\ell}^{(0)}\vx_{\ell-1}^{(0)},
    \qquad
    \vb_{\ell}^{(0)}
    :=
    \frac{1}{\sqrt n}(\mW_{\ell}^{(0)})^\top \vg_{\ell}^{(0)}.
    \label{eq:init-forward-backward-increments}
\end{align}
Since the initial weights $\mW_\ell^{(0)}$ have i.i.d. Gaussian entries, the increments $\va_\ell^{(0)}$ and $\vb_\ell^{(0)}$ admit a conditional Gaussian representation. This representation explicitly reveals the finite-width forward--backward correlation induced by reusing the same weights in the forward and backward passes. The following proposition states this representation; its proof is provided in Appendix~\ref{app:gaussian-representation}.

\begin{proposition}[Gaussian representation]
\label{prop:layerwise-gaussian-representation}
For each $\ell\in[L]$ with $\vx_{\ell-1}^{(0)}\neq 0$, define
\begin{align}
    \mP_{x,\ell-1}^{(0)}
    :=
    \frac{\vx_{\ell-1}^{(0)}(\vx_{\ell-1}^{(0)})^{\top}}
    {\|\vx_{\ell-1}^{(0)}\|^2}.
\end{align}
Then, on a possibly enlarged probability space, there exist standard Gaussian innovations
$\vz_\ell^{(0)},\widetilde{\vz}_\ell^{(0)}\sim\mathcal N(0,\mI_n)$, mutually independent across layers and between the two families, such that
\begin{align}
    \va_{\ell}^{(0)}
    \overset{d}{=}
    \frac{\|\vx_{\ell-1}^{(0)}\|}{\sqrt n}\,\vz_{\ell}^{(0)}, 
    \qquad
    \vb_{\ell}^{(0)}
    \overset{d}{=}
    \frac{\langle\vz_{\ell}^{(0)},\vg_{\ell}^{(0)}\rangle}{\sqrt n}\,
    \frac{\vx_{\ell-1}^{(0)}}{\|\vx_{\ell-1}^{(0)}\|}
    +
    \frac{\|\vg_{\ell}^{(0)}\|}{\sqrt n}\,
    (\mI-\mP_{x,\ell-1}^{(0)})\widetilde{\vz}^{(0)}_{\ell}.
    \label{eq:a-b-ell-gaussian-rep}
\end{align}
By contrast, the auxiliary backward process with decoupled backward weights has the increment
\begin{align}
    \bar \vb_{\ell}^{(0)}
    \overset{d}{=}
    \frac{\|\bar\vg_{\ell}^{(0)}\|}{\sqrt n}\,
    \widetilde{\vz}_{\ell}^{(0)}.
    \label{eq:gbar-rewrite}
\end{align}
If $\vx_{\ell-1}^{(0)}=0$, we set $\mP_{x,\ell-1}^{(0)}=0$, and the true backward Gaussian increment reduces in distribution to the decoupled Gaussian form without the corresponding correlation term.
\end{proposition}

The Gaussian representation in Proposition~\ref{prop:layerwise-gaussian-representation} explicitly exhibit the precise finite-width obstruction to GIA at initialization. The true backward increment $\vb_{\ell}^{(0)}$ in \eqref{eq:a-b-ell-gaussian-rep} contains an aligned component along the forward feature direction $\vx_{\ell-1}^{(0)}$ and a residual Gaussian component $(\mI-\mP_{x,\ell-1}^{(0)})\widetilde{\vz}_{\ell}^{(0)}$ in the orthogonal complement. In contrast, the auxiliary increment $\bar \vb_{\ell}^{(0)}$ in \eqref{eq:gbar-rewrite}, with decoupled backward weights, contains no aligned correlation term and is driven by a fresh isotropic Gaussian innovation.
\vspace{-0.3em}
\paragraph{Asymptotic GIA at Initialization.}
The next result shows that this coupling effect at initialization is only finite-width and vanishes as $n$ grows. Intuitively, the backward recursion starts from a fresh Gaussian vector $\vg_L^{(0)}=\vv^{(0)}$ that is independent of the forward features, so the dependence induced by reusing $(\mW_\ell^{(0)})^\top$ remains weak at large width. Although the projection $\mP_{x,\ell-1}^{(0)}$ is visible at finite width, its effect is confined to a one-dimensional direction and becomes negligible at the coordinate level as $n$ grows. The proposition below makes this precise by showing that, uniformly over depth, each coordinate of the true backward process converges in law to its auxiliary GIA counterpart. The proof is provided in Appendix~\ref{app:init-convergence-proofs}.

\begin{proposition}[GIA at initialization]
\label{prop:init-backward-to-auxiliary}
Suppose $\phi'$ is Lipschitz continuous. Then, for every $i\in[n]$,
\begin{align}
    \sup_{\ell\in[L]} 
    W_2\!\left(
        \Law(\vg_{\ell,i}^{(0)}),
        \Law(\bar \vg_{\ell,i}^{(0)})
    \right)^2
    \le C n^{-1},
\end{align}
where $W_2$ is the Wasserstein distance and the constant $C>0$ does not depend on $n$ or $L$.
\end{proposition}

Importantly, GIA removes only the correlation induced by reusing $(\mW_{\ell}^{(0)})^\top$ in the backward pass. It does not remove the structural dependence of the backward $\vg_{\ell}^{(0)}$ on the forward features through the Jacobian factor $\phi'(\vh_{\ell-1}^{(0)})$. Since the GIA error is uniform over depth, we can next take the large-depth limit and obtain an initialization-time forward-backward SDE system, where the forward and backward noises are independent. The proof is included in Appendix~\ref{app:init-convergence-proofs}.

\begin{theorem}[Forward--backward SDE limit at initialization]
\label{thm:init-forward-backward-sde}
Suppose $\phi$ and $\phi'$ are Lipschitz continuous. Let $t_\ell := \ell/L$. 
As $\min(n,L)\to\infty$, each coordinate of $(\vh_\ell^{(0)},\vg_\ell^{(0)})$ converges in law to the limiting forward--backward process $(H_t,G_t)$ satisfying
\begin{align}
    dH_t = \sigma_t \, dW_t,\quad H_0 \sim \mathcal{N}(0,\|\vx\|^2/d),
    \qquad
    dG_t = \widetilde{\sigma}_t(H_t)\, d\widetilde W_t,\quad G_1 \sim \mathcal{N}(0,1),
    \label{eq:forward-backward-sde-init}
\end{align}
where $G_t$ evolves backward in depth direction, $W_t$ and $\widetilde W_t$ are independent standard Brownian motions, $\sigma_t^2 := \E[\phi(H_t)^2]$, and $\widetilde{\sigma}_t(H_t)^2 := \phi'(H_t)^2 \E[G_t^2]$. Moreover, for every $\ell\in [L]$ and $i\in[n]$,
\begin{align}
    % \sup_{\ell\in [L]}
    W_2\!\left(\Law(\vh_{\ell,i}^{(0)}),\Law(H_{t_{\ell}})\right)^2
    +
    W_2\!\left(\Law(\vg_{\ell,i}^{(0)}),\Law(G_{t_{\ell}})\right)^2
    \le C\left(n^{-1} + L^{-1}\right),
\end{align}
where $C>0$ does not depend on $n$ or $L$.
\end{theorem}

Together with Proposition~\ref{prop:init-backward-to-auxiliary}, Theorem~\ref{thm:init-forward-backward-sde} gives a quantitative initialization-time picture. The reused-weight correlation in backward propagation is a finite-width effect and vanishes at a rate $O(n^{-1})$, uniformly over depth; meanwhile, the finite-network propagation converges to its forward--backward SDE limit with error $O(n^{-1}+L^{-1})$. In this sense, the width and depth limits commute at initialization. Figure~\ref{fig:depth-width-convergence} empirically supports this theoretical prediction. The next section examines whether an analogous decoupling persists during training, where the weights are no longer fresh Gaussian matrices but have been updated by SGD.

\section{Training-Time Coupling and Neural Feature Dynamics}
\label{sec:training-time-coupling}

\begin{figure*}[t]
    \centering

    %----------------------- Column 1: Vanilla DNN -----------------------
    \begin{minipage}[t]{0.32\textwidth}
        \centering
        \includegraphics[width=0.9\linewidth]{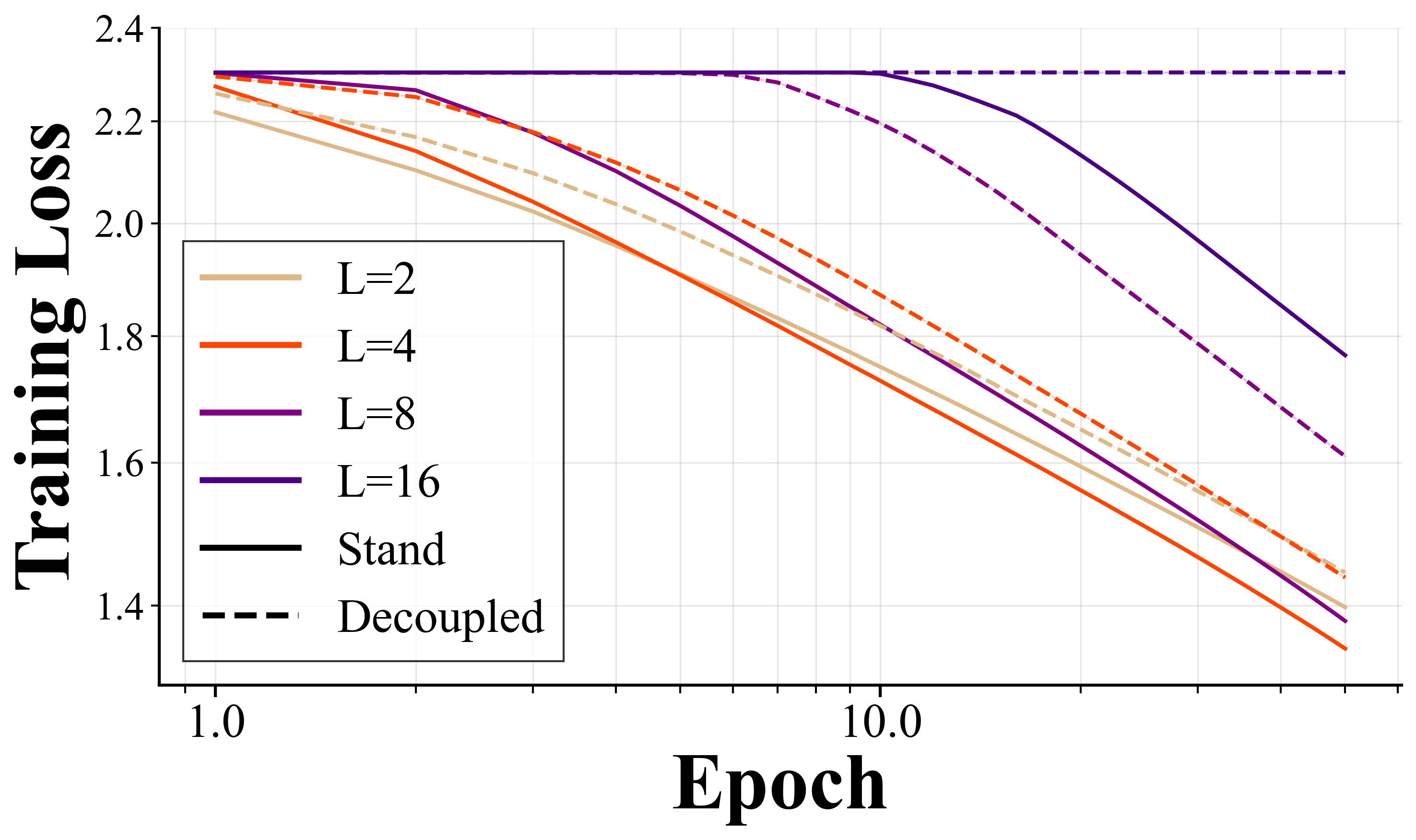}

        \vspace{0.3em}

        \includegraphics[width=0.9\linewidth]{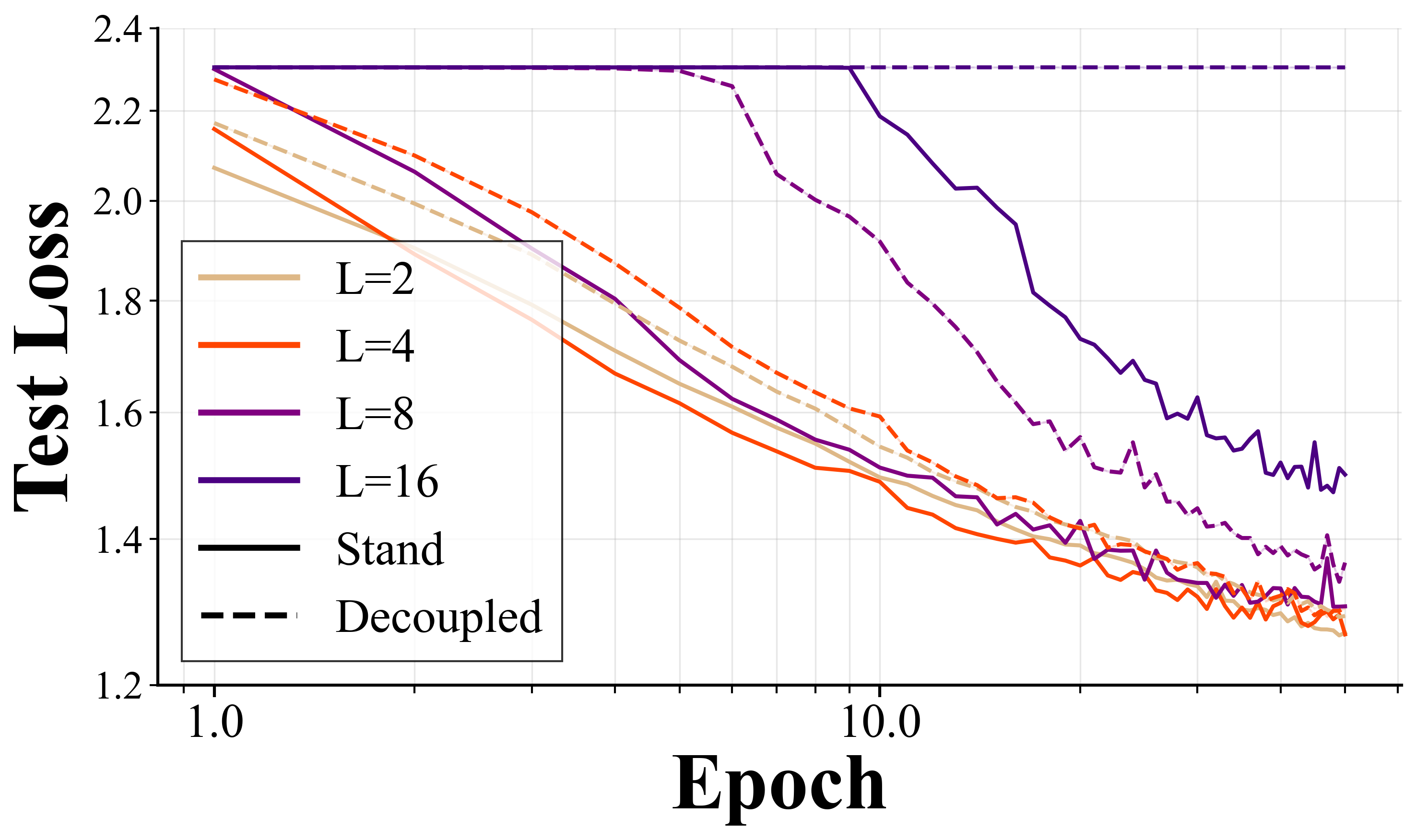}

        \vspace{0.3em}
        \textbf{(a) Vanilla DNN}
    \end{minipage}
    \hfill
    %----------------------- Column 2: μP ResNet -----------------------
    \begin{minipage}[t]{0.32\textwidth}
        \centering
        \includegraphics[width=0.9\linewidth]{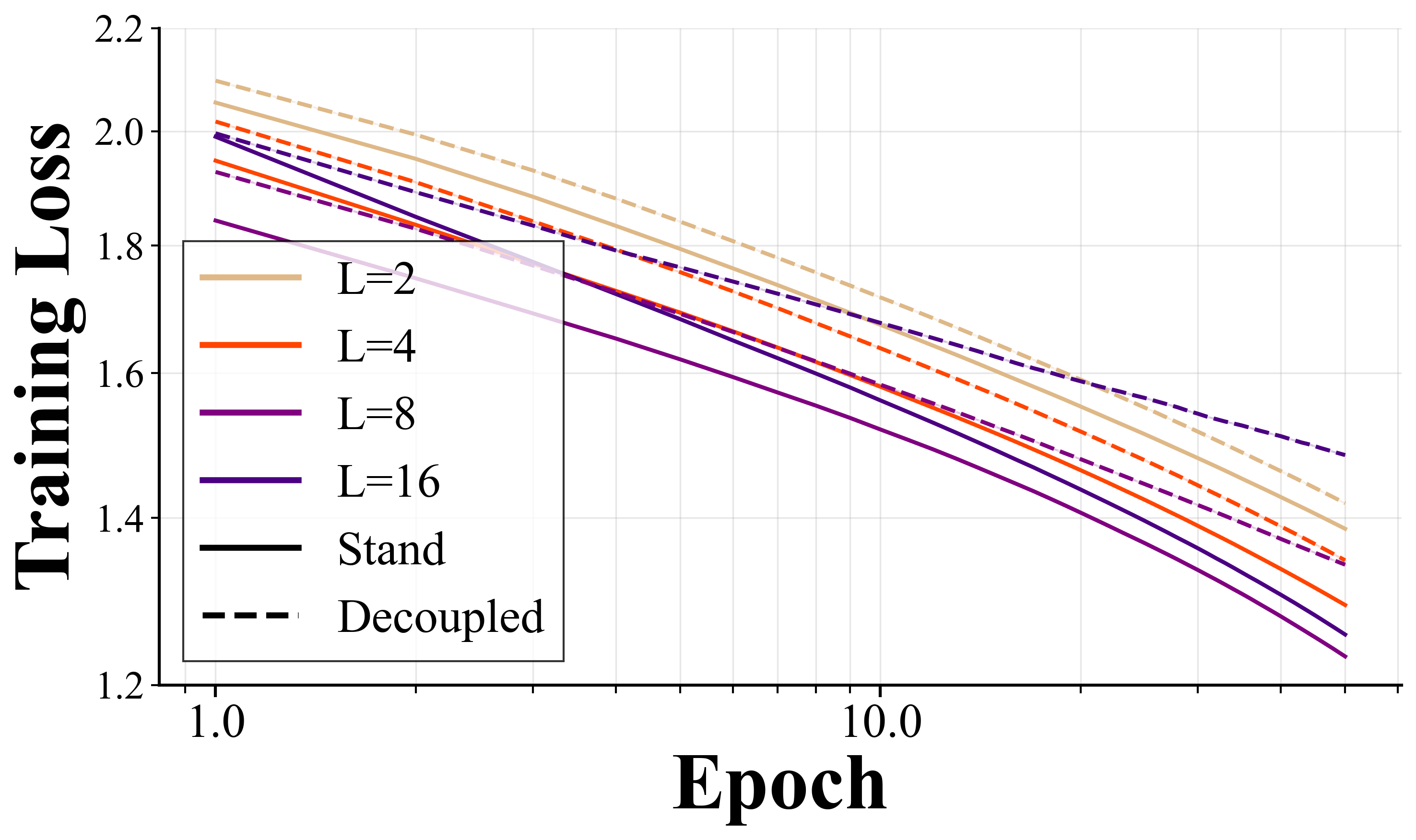}

        \vspace{0.3em}

        \includegraphics[width=0.9\linewidth]{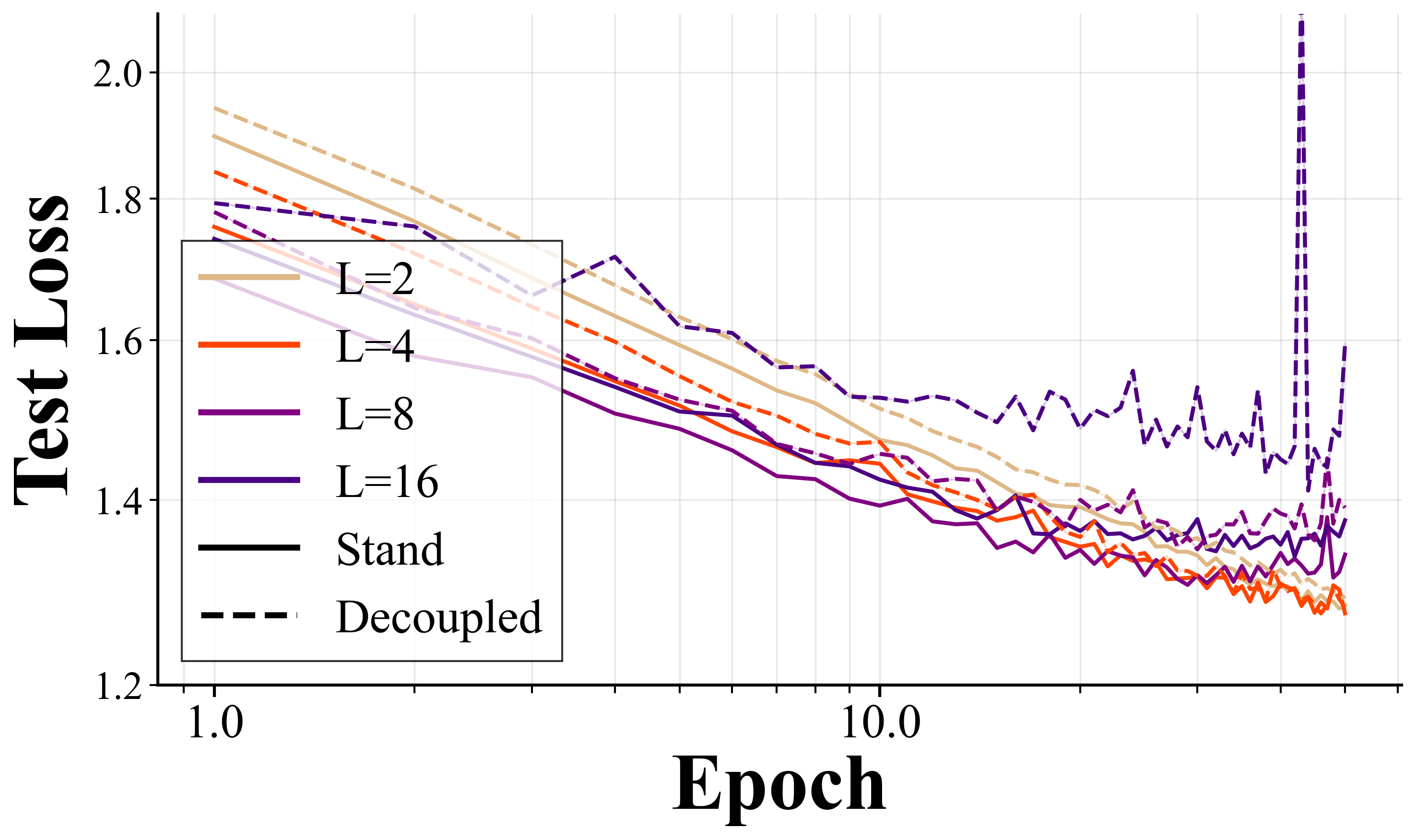}

        \vspace{0.3em}
        \textbf{(b) ResNet under $\mu$P}
    \end{minipage}
    \hfill
    %----------------------- Column 3: depth-μP ResNet -----------------------
    \begin{minipage}[t]{0.32\textwidth}
        \centering
        \includegraphics[width=0.9\linewidth]{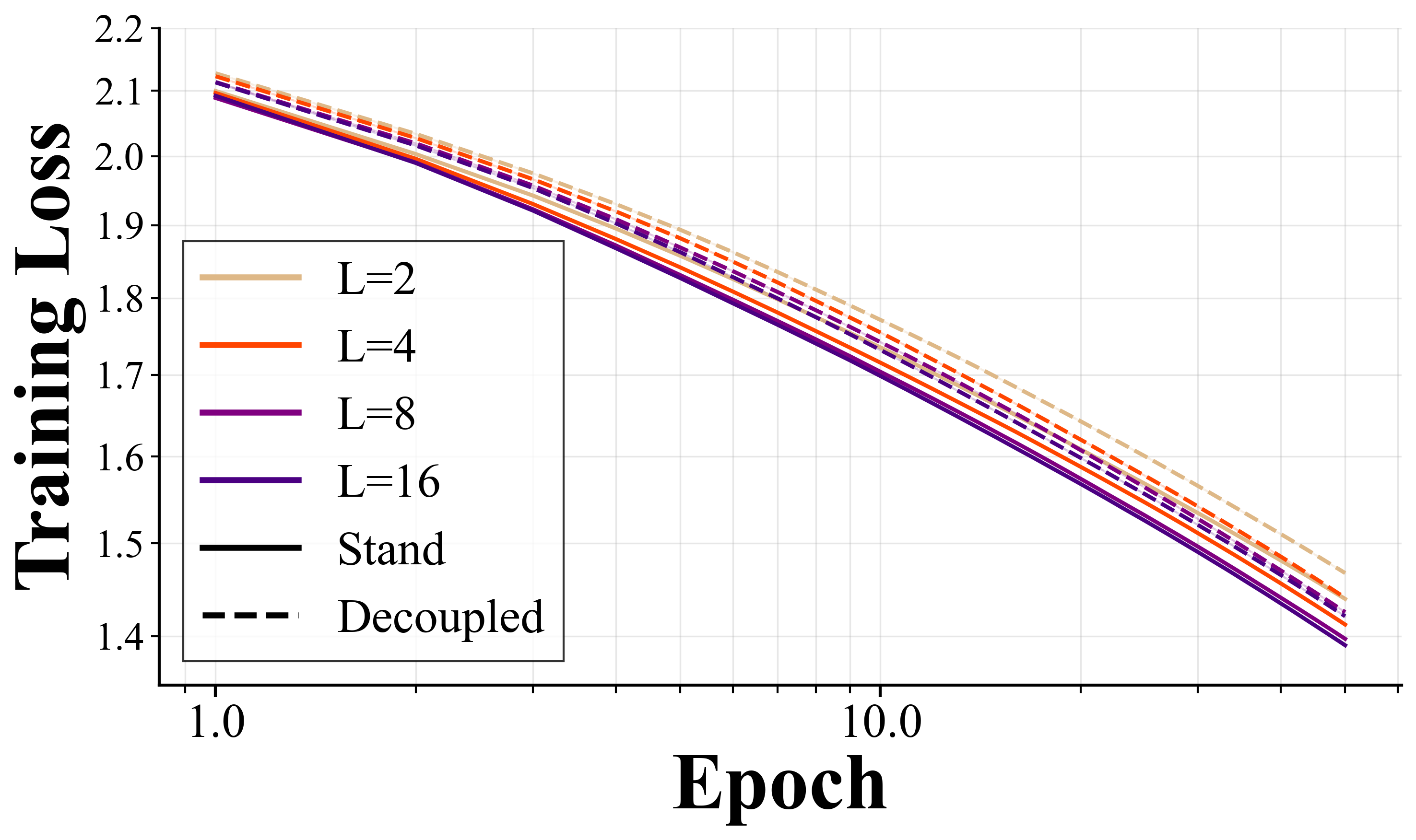}

        \vspace{0.3em}

        \includegraphics[width=0.9\linewidth]{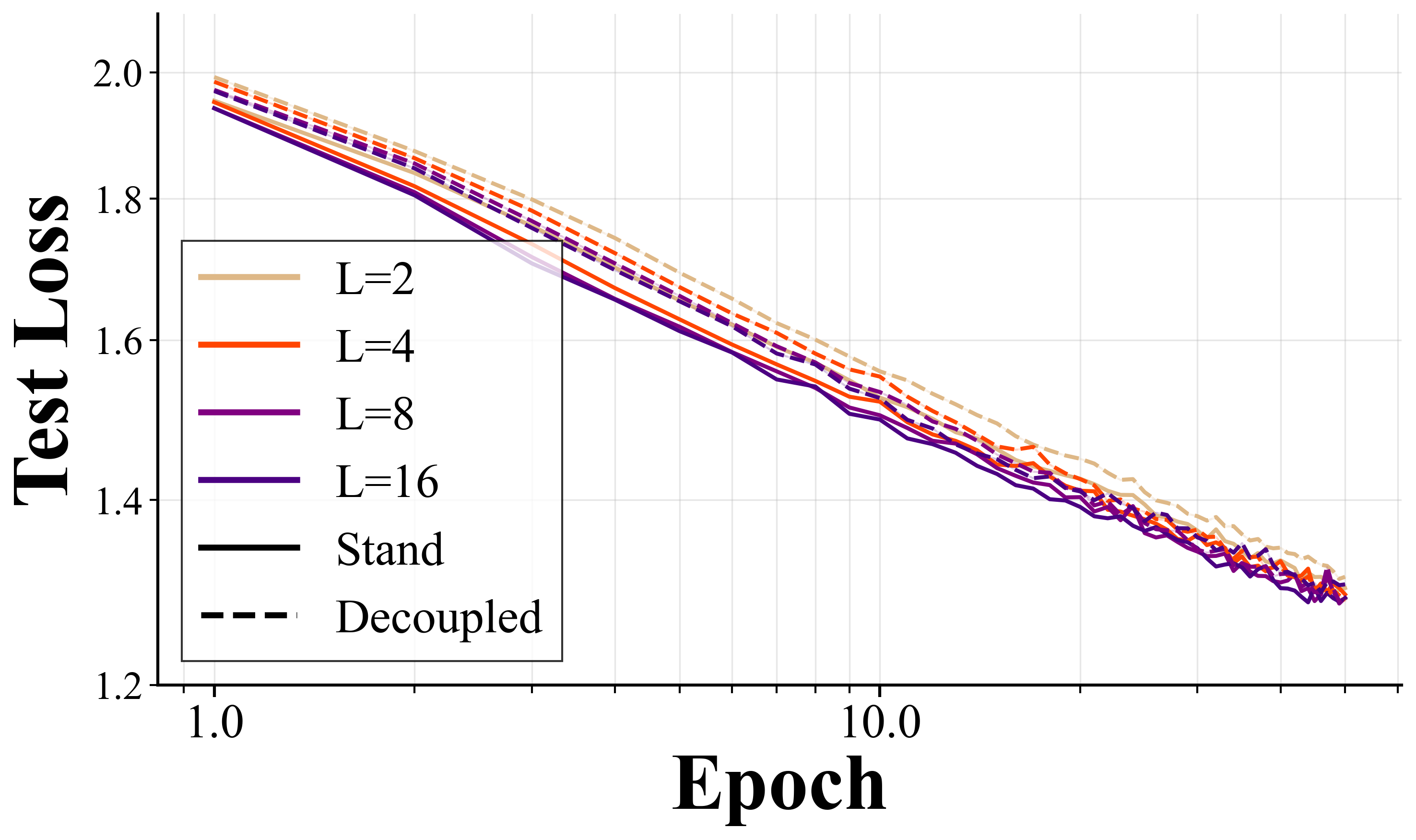}

        \vspace{0.3em}
        \textbf{(c) ResNet under depth-$\mu$P}
    \end{minipage}

    \caption{
    \textbf{Empirical evaluation of GIA restoration.}
    We train all models with width~128 on CIFAR-10 using SGD, comparing
    \textbf{\emph{standard}} training, where the same initialized weights are used
    in both the forward and backward passes, with a \textbf{\emph{decoupled}} setup,
    where the backward pass starts from an independent i.i.d.\ copy of the initial
    forward weights while using the same subsequent SGD update increments. Both
    training and test losses are shown on log--log scales. As depth increases:
    \textbf{(a)} Vanilla DNNs suffer from vanishing gradients, make little progress,
    and standard and decoupled trajectories remain misaligned.
    \textbf{(b)} ResNets under $\mu$P also show poor alignment and eventually
    overfit, e.g., at $L=16$.
    \textbf{(c)} ResNets under depth-$\mu$P improve both training and test
    performance with depth, and the standard and decoupled trajectories align
    closely even at moderate depths. This agrees with
    Corollary~\ref{cor:nfd-depth-rate}: the reused-weight coupling terms decay at
    the faster $O(L^{-2})$ second-moment rate, so the coupled and decoupled
    dynamics become nearly indistinguishable without requiring very large depth.
    Appendix~\ref{app:experiments} shows that larger widths reduce instability in
    (a,b) but do not eliminate their pathologies, while further reinforcing the
    alignment and performance gains of depth-$\mu$P.
    }

    \label{fig:gia_alignment}
\end{figure*}

Section~\ref{sec:at-initial} shows that, at initialization, reused-weight correlation vanishes as the width increases. The training-time setting is more subtle: after even one update, the next forward pass uses weights that have been updated
using the previous backward gradients, creating new dependencies between the weights, features, and gradients. We characterize this training-induced coupling and show that depth-$\mu$P scaling suppresses it as a higher-order effect in depth, motivating neural feature dynamics (NFD) as a decoupled forward--backward SDE description of infinite-depth feature learning.

\paragraph{Gaussian Representation After One Update.}
After one update, the forward feature $\vh_{\ell}^{(1)}$ satisfies
\begin{align}
    \vh_{\ell}^{(1)}
    =
    \vh_{\ell-1}^{(1)}
    +
    \frac{1}{\sqrt{Ln}}\mW_{\ell}^{(0)}\vx_{\ell-1}^{(1)}
    -
    \frac{\eta_c}{L}\chi^{(0)}
    \frac{\langle \vx_{\ell-1}^{(0)},\vx_{\ell-1}^{(1)}\rangle}{n}
    \vg_{\ell}^{(0)} ,
    \label{eq:one-step-forward-feature}
\end{align}
To expose the training-time coupling, we first represent the second forward innovative increment
\begin{align}
    \va_{\ell}^{(1)}
    :=
    \frac{1}{\sqrt n}\mW_{\ell}^{(0)}\vx_{\ell-1}^{(1)}
\end{align}  
by conditioning on the first forward and backward passes. The proof is included in Appendix~\ref{app:gaussian-representation}.
\begin{proposition}[Gaussian representation after one update]
\label{prop:second-forward-gaussian-representation}
Assume $\vx_{\ell-1}^{(0)}\neq 0$ and $\vg_{\ell}^{(0)}\neq 0$. Define
\begin{align}
    % \mP_{x,\ell-1}^{(0)}
    % &:=
    % \frac{\vx_{\ell-1}^{(0)}(\vx_{\ell-1}^{(0)})^\top}
    % {\|\vx_{\ell-1}^{(0)}\|^2},
    % \qquad
    \mP_{g,\ell}^{(0)}
    :=
    \frac{\vg_\ell^{(0)}(\vg_\ell^{(0)})^\top}
    {\|\vg_\ell^{(0)}\|^2},
    \quad
    \mu_{x,\ell}^{(1)}
    &:=
    \va_{\ell}^{(0)}
    \frac{\langle \vx_{\ell-1}^{(0)},\vx_{\ell-1}^{(1)}\rangle}
    {\|\vx_{\ell-1}^{(0)}\|^2},
    \quad
    (\sigma_{x,\ell}^{(1)})^2
    :=
    \frac{\|(\mI-\mP_{x,\ell-1}^{(0)})\vx_{\ell-1}^{(1)}\|^2}{n}.
\end{align}
Then, on a possibly enlarged probability space, there exists a standard Gaussian innovation
$\vz_{\ell}^{(1)}\sim\mathcal N(0,\mI_n)$, independent of the first-pass innovations
$\vz_{\ell}^{(0)}$ and $\widetilde{\vz}_{\ell}^{(0)}$, such that
\begin{align}
    \va_{\ell}^{(1)}
    \overset{d}{=}
    (\mI-\mP_{g,\ell}^{(0)})\mu_{x,\ell}^{(1)}
    +
    \vg_{\ell}^{(0)}
    \frac{\langle \vb_{\ell}^{(0)},\vx_{\ell-1}^{(1)}\rangle}
    {\|\vg_{\ell}^{(0)}\|^2}
    +
    \sigma_{x,\ell}^{(1)}
    (\mI-\mP_{g,\ell}^{(0)})\vz_{\ell}^{(1)} .
    \label{eq:second-forward-gaussian-rep}
\end{align}
If $\vx_{\ell-1}^{(0)}=0$, we set $\mP_{x,\ell-1}^{(0)}=0$; if $\vg_\ell^{(0)}=0$, we set $\mP_{g,\ell}^{(0)}=0$. In these degenerate cases, the formula is interpreted by omitting the corresponding aligned and projected component.
\end{proposition}

\vspace{-0.5em}
\paragraph{The Training-Induced Coupling Term.}
The representation \eqref{eq:second-forward-gaussian-rep} first separates out the forward-forward effect caused by reusing $\mW_\ell^{(0)}$ in the first and second forward passes. This effect is captured by the conditional mean $\mu_{x,\ell}^{(1)}$ and variance $(\sigma_{x,\ell}^{(1)})^2$. In particular, the mean records the component of the second forward increment aligned with the first forward increment $\va_\ell^{(0)}$, while the variance records the residual Gaussian fluctuation after removing the first forward feature direction $\vx_{\ell-1}^{(0)}$.

The additional effect comes from conditioning on the first backward pass. This conditioning projects away the direction $\vg_\ell^{(0)}$ and introduces the term 
$\vg_{\ell}^{(0)}\langle \vb_{\ell}^{(0)},\vx_{\ell-1}^{(1)}\rangle/\|\vg_{\ell}^{(0)}\|^2$, which represents the central training-time coupling effect. Indeed, $\vx_{\ell-1}^{(1)}$ has already been modified by the first gradient update and hence carries gradient information from the first backward pass, while $\vb_\ell^{(0)}$ defined in \eqref{eq:init-forward-backward-increments} explicitly contains the reused layer weights $(\mW_{\ell}^{(0)})^{\top}$. Consequently, the inner product can produce a structured $\mW_\ell^{(0)}(\mW_\ell^{(0)})^\top\vg_\ell^{(0)}$-type contraction. Thus, unlike at initialization, this is a genuine training-induced forward-backward correlation and need not vanish by taking $n\to\infty$ alone.

The next result makes this intuition precise. It compares the true one-step dynamics with an auxiliary dynamics in which the first backward pass uses independent backward weights, and shows that the training-induced coupling persists in the infinite-width limit. The proof is provided in
Appendix~\ref{app:training-infinite-width-convergence}.

\begin{proposition}
\label{prop:width-convergence-one-step}
Suppose $\mathcal L'$, $\phi$, and $\phi'$ are Lipschitz continuous. As
$n\to\infty$, for fixed $L$, the coordinates of
$\{\vh_\ell^{(1)}\}_{\ell=0}^{L}$ become asymptotically i.i.d. copies of a
process $\{H_\ell^{(1)}\}_{\ell=0}^{L}$ satisfying
\begin{align}
    H_{\ell}^{(1)}
    &=
    H_{\ell-1}^{(1)}
    +
    \frac{1}{\sqrt L}\,A_\ell^{(1)}
    -
    \frac{\eta_c}{L}\chi^{(0)}
    \E\!\left[\phi(H_{\ell-1}^{(0)})\phi(H_{\ell-1}^{(1)})\right]
    G_{\ell}^{(0)}
    +
    \mathcal C_{\ell}^{(0\to 1)},
    \label{eq:one-step-width-limit}
\end{align}
where the per-layer training-induced coupling term is
\begin{align}
    \mathcal C_{\ell}^{(0\to 1)}
    &:=
    -\frac{\eta_c}{L^2}\chi^{(0)}
    \E\!\left[\phi(H_{\ell-2}^{(0)})\phi(H_{\ell-2}^{(1)})\right]
    \E\!\left[\phi'(H_{\ell-1}^{(0)})\phi'(H_{\ell-1}^{(1)})\right]
    G_{\ell}^{(0)},
    \label{eq:one-step-coupling-term}
\end{align}
with the convention that $\mathcal C_{1}^{(0\to 1)}=0$. Here
$\{A_\ell^{(1)}\}_{\ell=1}^{L}$ are centered Gaussian innovations whose joint
law with the initial innovations $\{A_\ell^{(0)}\}_{\ell=1}^{L}$ satisfies
\begin{align}
    \E\!\left[A_\ell^{(i)}A_{\ell'}^{(i')}\right]
    =
    \delta_{\ell,\ell'}
    \E\!\left[\phi(H_{\ell-1}^{(i)})\phi(H_{\ell-1}^{(i')})\right],
    \qquad i,i'\in\{0,1\}.
\end{align}
In particular, the innovation pairs
$\{(A_\ell^{(0)},A_\ell^{(1)})\}_{\ell=1}^{L}$ are independent across layers.

For the auxiliary dynamics with decoupled backward weights, the second forward
process converges to a limiting process
$\{\bar H_\ell^{(1)}\}_{\ell=0}^{L}$ satisfying
\begin{align}
    \bar H_{\ell}^{(1)}
    &=
    \bar H_{\ell-1}^{(1)}
    +
    \frac{1}{\sqrt L}\,\bar A_\ell^{(1)}
    -
    \frac{\eta_c}{L}\chi^{(0)}
    \E\!\left[\phi(H_{\ell-1}^{(0)})\phi(\bar H_{\ell-1}^{(1)})\right]
    \bar G_{\ell}^{(0)} ,
    \label{eq:one-step-width-limit-decoupled}
\end{align}
where $\bar A_\ell^{(1)}$ is defined analogously to $A_\ell^{(1)}$, with
$H^{(1)}$ replaced by $\bar H^{(1)}$.
\end{proposition}

Proposition~\ref{prop:width-convergence-one-step} shows that, after even a single SGD update, the large-width limit of the true dynamics no longer coincides with the auxiliary GIA dynamics. The difference is precisely the training-induced forward--backward correlation term $\mathcal C_{\ell}^{(0\to 1)}$ in \eqref{eq:one-step-coupling-term}, which originates from the middle component in the Gaussian representation \eqref{eq:second-forward-gaussian-rep} and is absent from the auxiliary limit. Thus, unlike at initialization, increasing width alone does not recover a GIA-decoupled description during training.

\vspace{-0.7em}
\paragraph{Depth Suppression and Neural Feature Dynamics.}
At the same time, Proposition~\ref{prop:width-convergence-one-step} shows that the surviving correlation term $\mathcal{C}_{\ell}^{(0\to 1)}$ is higher order in depth. Its per-layer contribution is $O(L^{-2})$, whereas the SGD drift is $O(L^{-1})$. After accumulation over $L$ layers, the total contribution is $O(L^{-1})$ and becomes negligible as $L\to\infty$. Hence, although width alone does not restore a GIA-style description during training, the depth scaling suppresses this forward--backward coupling.

This depth suppression motivates \textit{neural feature dynamics (NFD)}, a candidate limiting dynamics for infinite-depth feature learning. In NFD, we replace the backward weights by independent copies, thereby omitting the additional forward--backward correlation terms caused by reusing the same initial weights, while retaining the feature and gradient covariance structure generated by training.

\begin{definition}[Neural Feature Dynamics]
\label{def:nfd}
For the ResNet defined in \eqref{eq:resnet} with fixed $K$, the neural feature dynamics (NFD) is the following forward-backward system: for each $k=0,\ldots,K$,
\begin{align}
    dH_t^{(k)}
    &=
    -
    \eta_c
    \sum_{i=0}^{k-1}
    \mathring\chi^{(i)}
    \Sigma_t^{(i,k)}
    G_t^{(i)}\,dt
    +
    dW_t^{(k)},
    \label{eq:nfd-forward}
    \\
    dG_t^{(k)}
    &=
    -
    \eta_c
    \sum_{i=0}^{k-1}
    \mathring\chi^{(i)}
    \Theta_t^{(i,k)}
    \phi(H_t^{(i)})\phi'(H_t^{(k)})\,dt
    +
    \phi'(H_t^{(k)})\,d\widetilde W_t^{(k)},
    \label{eq:nfd-backward}
\end{align}
with boundary conditions
\begin{align}
    H_0^{(k)}
    &=
    \widehat H_0^{(k)}
    -
    \eta_c
    \sum_{i=0}^{k-1}
    \mathring\chi^{(i)}
    \Sigma_0^{(i,k)}
    G_0^{(i)},
    \qquad
    G_1^{(k)}
    =
    \widehat G_1^{(k)}
    -
    \eta_c
    \sum_{i=0}^{k-1}
    \mathring\chi^{(i)}
    H_1^{(i)},
    \label{eq:nfd-boundary}
\end{align}
where $\eta_c>0$ is the effective learning-rate and 
\begin{align}
    \Sigma_t^{(i,k)}
    &:=
    \E[\phi(H_t^{(i)})\phi(H_t^{(k)})],
    \qquad
    \Theta_t^{(i,k)}
    :=
    \E[G_t^{(i)}G_t^{(k)}].
\end{align}
The Gaussian initial and terminal data satisfy
\begin{align}
    \Cov(\widehat H_0^{(i)},\widehat H_0^{(j)})
    =
    \frac{\langle \vx^{(i)},\vx^{(j)}\rangle}{d},
    \qquad
    \Cov(\widehat G_1^{(i)},\widehat G_1^{(j)})
    =
    1.
\end{align}
The forward Brownian family $\{W_t^{(k)}\}_{k=0}^{K}$ and backward Brownian family
$\{\widetilde W_t^{(k)}\}_{k=0}^{K}$ are independent of each other, with within-family covariations
\begin{align}
    \label{eq:Brownian-family}
    d\langle W^{(i)},W^{(j)}\rangle_t
    &=
    \Sigma_t^{(i,j)}\,dt,
    \qquad
    d\langle \widetilde W^{(i)},\widetilde W^{(j)}\rangle_t
    =
    \Theta_t^{(i,j)}\,dt.
\end{align}
\end{definition}

More generally, an NFD candidate can be defined for a given architecture and optimizer by replacing reused backward weights with independent copies. However, convergence to this candidate is not automatic; it depends on the architecture, parameterization, optimizer, and width-depth scaling.

For the ResNet \eqref{eq:resnet}, Propositions~\ref{prop:second-forward-gaussian-representation}--\ref{prop:width-convergence-one-step} show that the training-induced correlation term remains nontrivial in the infinite-width limit but is higher order in depth. Under the regularity and nondegeneracy conditions below, this depth suppression yields convergence of the finite-network feature-learning dynamics to its NFD limit in the infinite-width-then-depth limit. The nondegeneracy condition is empirically supported in Figure~\ref{fig:eigenvalue}; the proof is provided in Appendix~\ref{app:training-infinite-width-convergence}-\ref{app:depth-convergence}.
% The existence of (weak) solution of the NFD system may be argued carefully via an Euler--Maruyama discretization with tightness of stochastic integrals (cf.\ \cite[Section 7]{kurtz2006weak}), due to the non-standard Brownian families in \eqref{eq:Brownian-family}. 
% However, this is beyond the scope of this work and is left as a future work.

\begin{assumption}
\label{assmp:eigenvalue}
We assume:
\begin{enumerate}[leftmargin=*]
    \item $\mathcal L'$, $\phi$, and $\phi'$ are Lipschitz continuous.
    
    \item For each finite training horizon $K$, the NFD in Definition~\ref{def:nfd} admits a solution on $[0,1]$, and its covariance matrices are uniformly nondegenerate, i.e., there exists $\varepsilon>0$ such that,
    \begin{align}
        \inf_{t\in[0,1]}
        \min\left\{
        \lambda_{\min}(\mK_t^{(k)}),
        \lambda_{\min}(\mG_t^{(k)})
        \right\}
        \geq \varepsilon,
        \quad \forall k\leq K,
    \end{align}
    where $(\mK_t^{(k)})_{ij}:=\Sigma_t^{(i,j)}$ and $(\mG_t^{(k)})_{ij}:=\Theta_t^{(i,j)}$, for all $0\leq i,j\leq k$.
\end{enumerate}
\end{assumption}

\begin{remark}[Existence of the NFD system]
The existence of a (weak) solution of the NFD system may be argued carefully via an Euler--Maruyama discretization with tightness and weak convergence of stochastic integrals (cf.\ \cite[Section 7]{kurtz2006weak}). Since the Brownian families in \eqref{eq:Brownian-family} have a nonstandard feature-gradient covariance structure, we leave a complete existence theory to future work.
\end{remark}

\begin{figure}
    \centering
    \begin{minipage}{0.24\linewidth}
        \centering
        \includegraphics[width=0.9\linewidth]{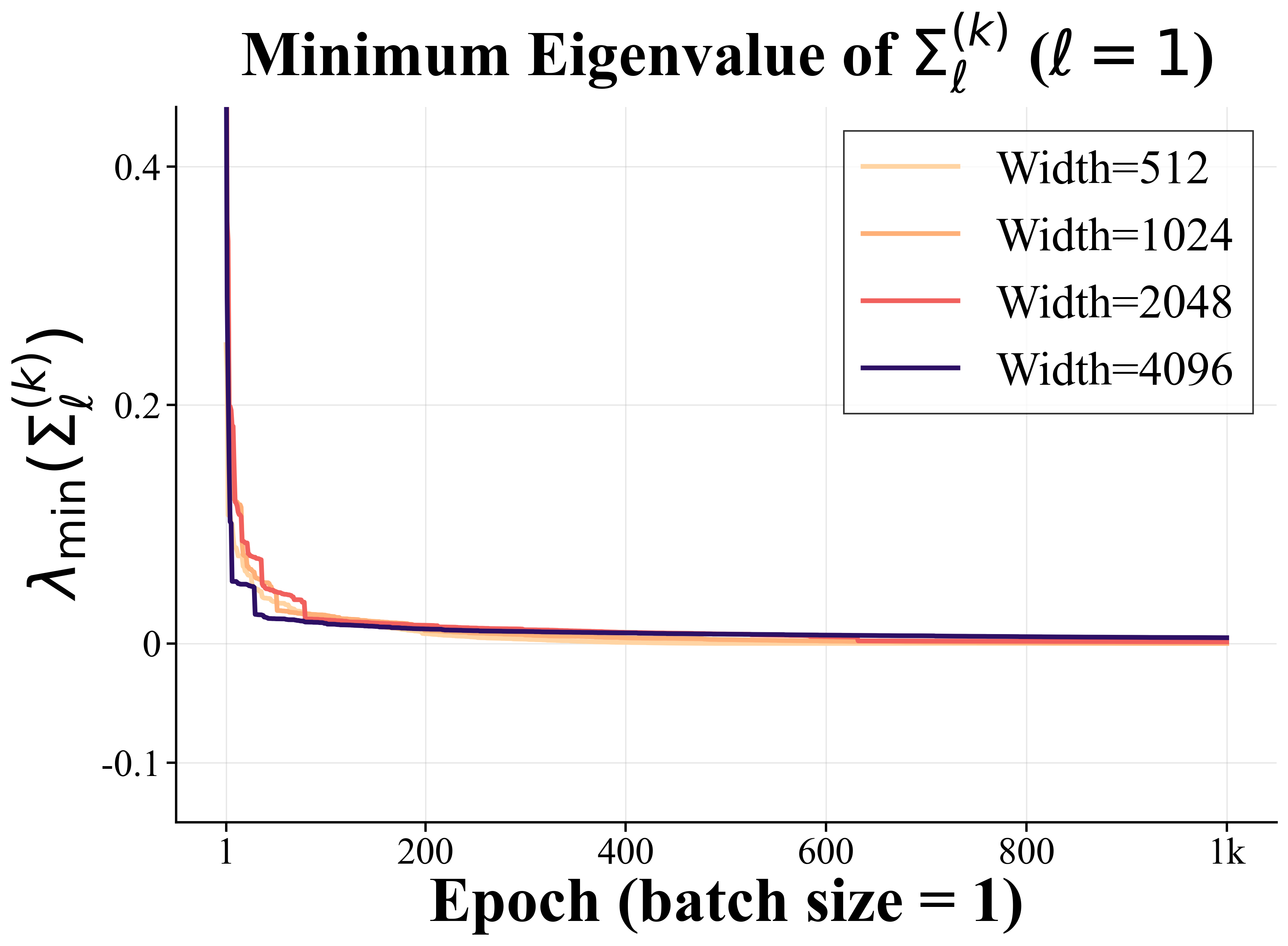}
    \end{minipage}%
    \hfill
    \begin{minipage}{0.24\linewidth}
        \centering
        \includegraphics[width=0.9\linewidth]{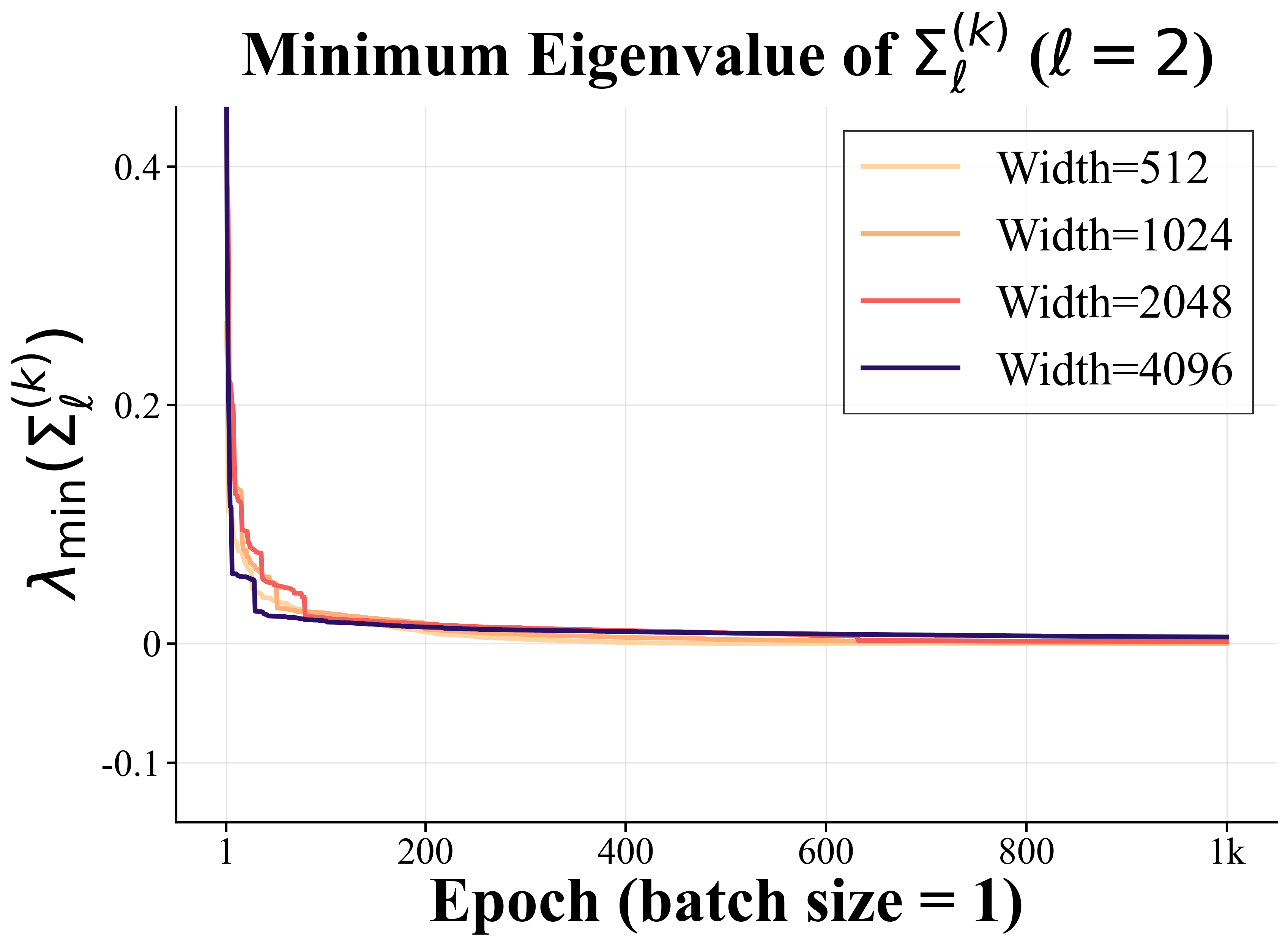}
    \end{minipage}
    \hfill
    \begin{minipage}{0.24\linewidth}
        \centering
        \includegraphics[width=0.9\linewidth]{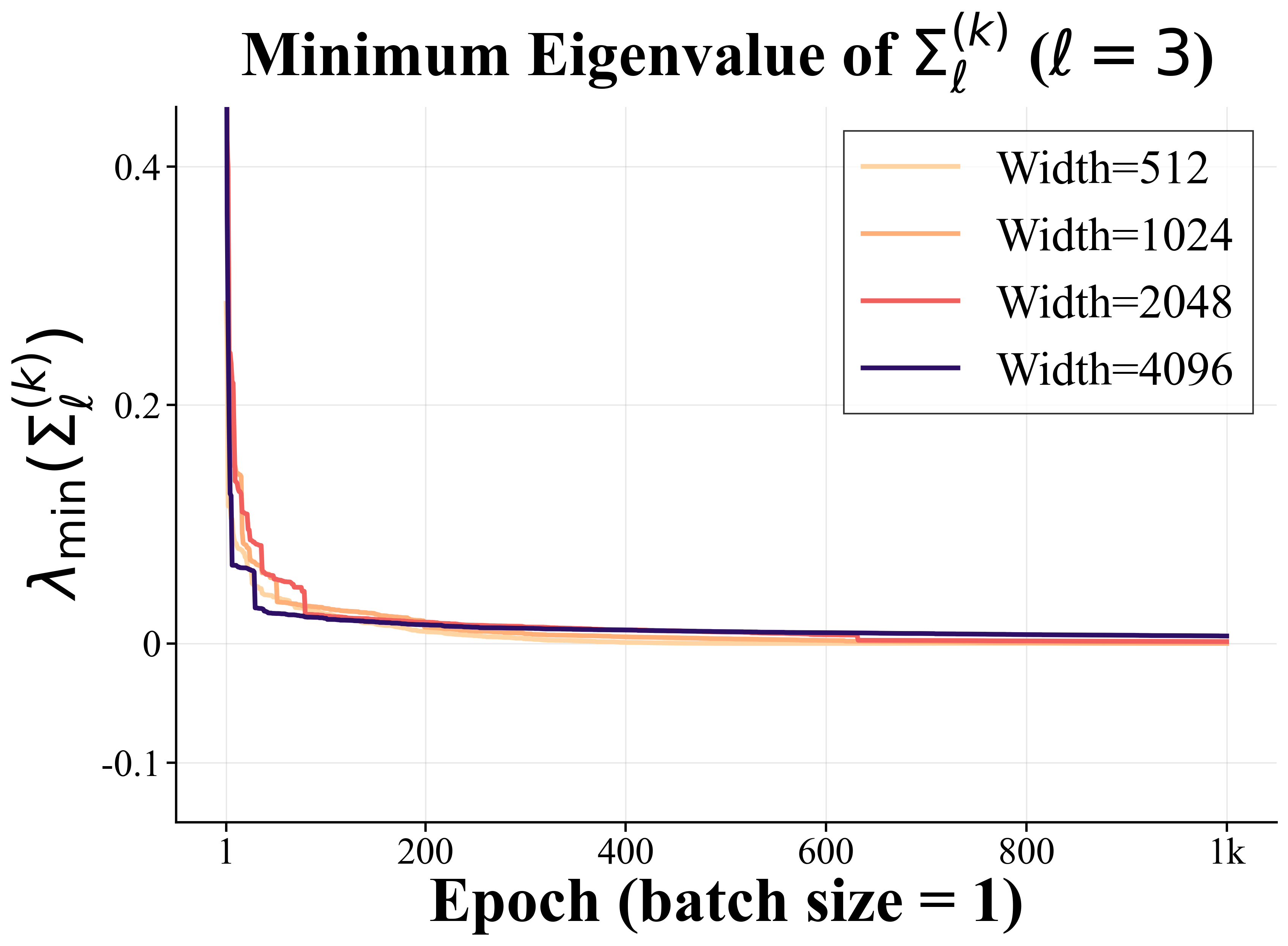}
    \end{minipage}%
    \begin{minipage}{0.24\linewidth}
        \centering
        \includegraphics[width=0.9\linewidth]{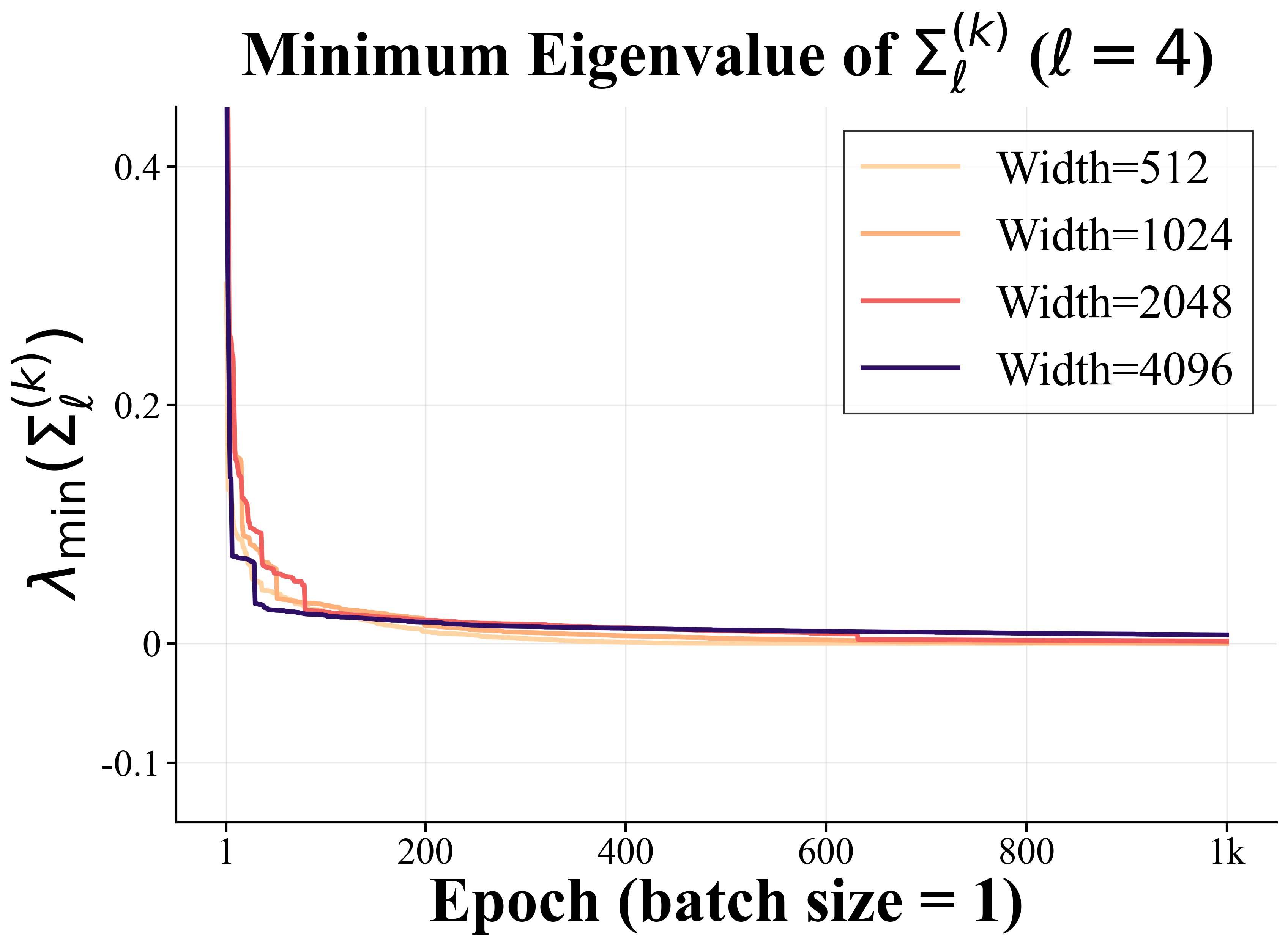}
    \end{minipage}%
    \caption{\textbf{Minimum covariance eigenvalues during training.}
    ResNets are trained on CIFAR-10 with online SGD (batch size 1) across 5 seeds, using 4 hidden layers and widths of 512--4096. The minimum eigenvalues of $\Sigma_t^{(k)}$ and $\Theta_t^{(k)}$ remain positive across layers, supporting Assumption~\ref{assmp:eigenvalue}; wider networks yield larger eigenvalues, while a width of 512 approaches near-degenerate regimes.}
    \label{fig:eigenvalue}
    % \vspace{-1.5em}
\end{figure}

\begin{theorem}[Convergence to neural feature dynamics]
\label{thm:feature-learning-dynamics}
Suppose Assumption~\ref{assmp:eigenvalue} holds and $\eta = \eta_c n$. As $n, L\to\infty$ sequentially, the network output $f^{(k)}$ converges in probability to $\mathring f^{(k)}=\E\![G_1^{(k)}H_1^{(k)}]$, where $(H_t^{(k)},G_t^{(k)})$ evolves according to the NFD system in Definition~\ref{def:nfd}. 
\end{theorem}

\begin{corollary}
\label{cor:nfd-depth-rate}
Under the assumptions of Theorem~\ref{thm:feature-learning-dynamics}, for every fixed training horizon $K<\infty$ and each $k\le K$, the depth-discretized dynamics converge to their NFD limit with
\begin{align}
    W_2\!\left(\Law(H_{\ell}^{(k)}),\Law(H_{t_\ell}^{(k)})\right)^2
    +
    W_2\!\left(\Law(G_{\ell}^{(k)}),\Law(G_{t_\ell}^{(k)})\right)^2
    \le C L^{-1},
    % \quad t_\ell=\ell/L,
    \label{eq:nfd-depth-rate}
\end{align}
uniformly over $\ell\in[L]$. Moreover, let
$\mathcal C_{\ell}^{(k\to K)}$ denote the per-layer forward--backward coupling
contribution induced by the $k$-th training step and omitted in the NFD
dynamics at horizon $K$. Then
\begin{align}
    \E\left|
        \sum_{k=0}^{K-1}\sum_{\ell=1}^{L}
        \mathcal C_{\ell}^{(k\to K)}
    \right|^2
    \le C_K L^{-2}.
    \label{eq:nfd-coupling-rate}
\end{align}
Here, $C_K>0$ may depend on the fixed training horizon $K$, but is independent of $L$ and $n$.
\end{corollary}

Theorem~\ref{thm:feature-learning-dynamics}, together with Corollary~\ref{cor:nfd-depth-rate}, shows that the finite-depth dynamics of the ResNet \eqref{eq:resnet} converge to its NFD limit with $O(L^{-1})$ depth discretization error, whereas the accumulated coupling term decays faster at rate $O(L^{-2})$. Thus, the leading depth error is the SDE discretization error rather than the omitted forward--backward correlation. Figure~\ref{fig:gia_alignment} supports this depth-restoration mechanism, and Figure~\ref{fig:depth-width-convergence} further supports convergence with both width and depth after training.
\vspace{-0.5em}
\section{Conclusions}
\vspace{-0.5em}

In this work, we studied infinite-depth feature-learning dynamics through the lens of reused-weight forward--backward coupling. Using conditional Gaussian representations, we isolated the correlation induced by reusing each forward weight matrix in backpropagation before taking any network limit. At initialization, this coupling is a finite-width effect and vanishes as the width increases. During training, however, SGD introduces a nontrivial forward--backward correlation term that survives the infinite-width limit. Under depth-$\mu$P scaling, this term is higher order in depth, and its accumulated effect becomes negligible as $L\to\infty$. This depth-induced suppression motivates neural feature dynamics (NFD), a candidate infinite-depth feature-learning dynamics with decoupled backward weights. Under suitable conditions and for a fixed training horizon $K$, we proved convergence of the finite-network training dynamics to NFD, with $O(L^{-1})$ depth-discretization error and a faster $O(L^{-2})$ second-moment decay of the omitted reused-weight coupling.

\vspace{-0.5em}
\paragraph{Limitations.}
Our analysis focuses on one-layer ResNets under depth-$\mu$P scaling trained by stream SGD, and therefore does not yet cover more general architectures or training algorithms, such as multi-layer residual blocks, attention-based models, Transformers, mini-batch training, or adaptive optimizers. The convergence result is also sequential, assumes nondegeneracy conditions, and holds for a fixed finite training horizon $K$. Extending the theory to joint width-depth limits, longer training horizons, weaker assumptions, and realistic data distributions remains open. Finally, our results identify when reused-weight coupling becomes negligible in the infinite-depth limit, but, by themselves, do not determine whether suppressing this coupling is beneficial or harmful for optimization, representation quality, or generalization.

% \vspace{-0.5em}
\paragraph{Future Directions.} A natural next step is to extend this framework to more practical architectures and optimizers, especially Transformers trained with adaptive methods. On the theoretical side, an important goal is to use NFD as a foundation for optimization and generalization analyses in infinite-depth feature-learning regimes. More broadly, conditional Gaussian representations may provide a useful tool for studying how different parameterizations, scaling rules, architectures, and optimizers affect forward--backward coupling. In particular, while depth-$\mu$P suppresses the reused-weight correlation studied here, it remains open whether vanishing coupling is always desirable, or whether persistent forward--backward correlations may sometimes play a useful role in feature learning. Understanding when such coupling should vanish, persist, or be controlled may lead to sharper principles for designing scalable deep learning systems.

% \clearpage
\bibliographystyle{plain}
\bibliography{refs}

@incollection{kurtz2006weak,
  title={Weak convergence of stochastic integrals and differential equations},
  author={Kurtz, Thomas G and Protter, Philip E},
  booktitle={Probabilistic Models for Nonlinear Partial Differential Equations: Lectures given at the 1st Session of the Centro Internazionale Matematico Estivo (CIME) held in Montecatini Terme, Italy, May 22--30, 1995},
  pages={1--41},
  year={2006},
  publisher={Springer}
}

@inproceedings{gao2025global,
  title={Global convergence in neural odes: Impact of activation functions},
  author={Gao, Tianxiang and Sun, Siyuan and Liu, Hailiang and Gao, Hongyang},
  booktitle={The Thirteenth International Conference on Learning Representations},
  year={2025},
}

@article{simon2026there,
  title={There Will Be a Scientific Theory of Deep Learning},
  author={Simon, Jamie and Kunin, Daniel and Atanasov, Alexander and Boix-Adser{\`a}, Enric and Bordelon, Blake and Cohen, Jeremy and Ghosh, Nikhil and Guth, Florentin and Jacot, Arthur and Kamb, Mason and others},
  journal={arXiv preprint arXiv:2604.21691},
  year={2026}
}

@inproceedings{gao2022a,
  title={A global convergence theory for deep ReLU implicit networks via over-parameterization},
  author={Gao, Tianxiang and Liu, Hailiang and Liu, Jia and Rajan, Hridesh and Gao, Hongyang},
  booktitle={International Conference on Learning Representations},
  year={2021}
}

@article{SirignanoSpiliopoulos2018,
  title={Mean Field Analysis of Neural Networks: A Law of Large Numbers},
  author={Sirignano, Justin and Spiliopoulos, Konstantinos},
  journal={SIAM Journal on Applied Mathematics},
  volume={80},
  number={2},
  pages={725--752},
  year={2020}
}

@inproceedings{fang2021modeling,
  title={Modeling from features: a mean-field framework for over-parameterized deep neural networks},
  author={Fang, Cong and Lee, Jason and Yang, Pengkun and Zhang, Tong},
  booktitle={Conference on learning theory},
  pages={1887--1936},
  year={2021},
  organization={PMLR}
}

@article{nguyen2023rigorous,
  title={A rigorous framework for the mean field limit of multilayer neural networks},
  author={Nguyen, Phan-Minh and Pham, Huy Tuan},
  journal={Mathematical Statistics and Learning},
  volume={6},
  number={3},
  pages={201--357},
  year={2023}
}

@inproceedings{
pham2021global,
title={Global Convergence of Three-layer Neural Networks in the Mean Field Regime},
author={Huy Tuan Pham and Phan-Minh Nguyen},
booktitle={International Conference on Learning Representations},
year={2021},
url={https://openreview.net/forum?id=KvyxFqZS_D}
}

@inproceedings{nitanda2022convex,
  title={Convex analysis of the mean field langevin dynamics},
  author={Nitanda, Atsushi and Wu, Denny and Suzuki, Taiji},
  booktitle={International Conference on Artificial Intelligence and Statistics},
  pages={9741--9757},
  year={2022},
  organization={PMLR}
}

@inproceedings{bordelon2024depthwise,
title={Depthwise Hyperparameter Transfer in Residual Networks: Dynamics and Scaling Limit},
author={Blake Bordelon and Lorenzo Noci and Mufan Bill Li and Boris Hanin and Cengiz Pehlevan},
booktitle={The Twelfth International Conference on Learning Representations},
year={2024},
}

@article{brown2020language,
  title={Language models are few-shot learners},
  author={Brown, Tom and Mann, Benjamin and Ryder, Nick and Subbiah, Melanie and Kaplan, Jared D and Dhariwal, Prafulla and Neelakantan, Arvind and Shyam, Pranav and Sastry, Girish and Askell, Amanda and others},
  journal={Advances in neural information processing systems},
  volume={33},
  pages={1877--1901},
  year={2020}
}

@incollection{Sznitman1991,
  title={{Topics in propagation of chaos}},
  editor={Hennequin, Paul-Louis},
  author={Sznitman, A-S.},
  booktitle={Ecole d'{E}t{\'e} de {P}robabilit{\'e}s de {S}aint-{F}lour {XIX}---1989},
  year={1991},
  publisher={Springer}
}

@article{radford2019language,
  title={Language models are unsupervised multitask learners},
  author={Radford, Alec and Wu, Jeffrey and Child, Rewon and Luan, David and Amodei, Dario and Sutskever, Ilya and others},
  journal={OpenAI blog},
  year={2019}
}

@inproceedings{Dosovitskiy2021,
  author = {Dosovitskiy, Alexey and Beyer, Lucas and Kolesnikov, Alexander and Weissenborn, Dirk and Zhai, Xiaohua and Unterthiner, Thomas and Dehghani, Mostafa and Minderer, Matthias and Heigold, Georg and Gelly, Sylvain and others},
  booktitle = {International Conference on Learning Representations (ICLR)},
  title = {An Image is Worth 16x16 Words: Transformers for Image Recognition at Scale},
  year = {2021},
}

@article{jacot2018neural,
  title={Neural tangent kernel: Convergence and generalization in neural networks},
  author={Jacot, Arthur and Gabriel, Franck and Hongler, Cl{\'e}ment},
  journal={Advances in neural information processing systems},
  year={2018}
}

@inproceedings{arora2019fine,
  title={Fine-grained analysis of optimization and generalization for overparameterized two-layer neural networks},
  author={Arora, Sanjeev and Du, Simon and Hu, Wei and Li, Zhiyuan and Wang, Ruosong},
  booktitle={International conference on machine learning},
  pages={322--332},
  year={2019},
  organization={PMLR}
}

@article{lee2019wide,
  title={Wide neural networks of any depth evolve as linear models under gradient descent},
  author={Lee, Jaehoon and Xiao, Lechao and Schoenholz, Samuel and Bahri, Yasaman and Novak, Roman and Sohl-Dickstein, Jascha and Pennington, Jeffrey},
  journal={Advances in neural information processing systems},
  volume={32},
  year={2019}
}

@article{Chizat2019,
  title={On lazy training in differentiable programming},
  author={Chizat, Lenaic and Oyallon, Edouard and Bach, Francis},
  journal={Advances in neural information processing systems},
  year={2019}
}

@article{ChizatBach2018,
  title={On the Global Convergence of Gradient Descent for Overparameterized Models using Optimal Transport},
  author={Chizat, Lenaic and Bach, Francis},
  journal={Advances in Neural Information Processing Systems (NeurIPS)},
  year={2018}
}

@article{Mei2018,
  title={A mean field view of the landscape of two-layer neural networks},
  author={Mei, Song and Montanari, Andrea and Nguyen, Phan-Minh},
  journal={Proceedings of the National Academy of Sciences},
  volume={115},
  number={33},
  pages={E7665--E7671},
  year={2018},
  publisher={National Academy of Sciences}
}

@article{LeCun2015,
  author = {LeCun, Yann and Bengio, Yoshua and Hinton, Geoffrey},
  journal = {Nature},
  title = {Deep Learning},
  year = {2015},
}

@inproceedings{hayou2023width,
  title={Width and depth limits commute in residual networks},
  author={Hayou, Soufiane and Yang, Greg},
  booktitle={International Conference on Machine Learning},
  year={2023},
  organization={PMLR}
}

@article{li2022neural,
  title={The neural covariance SDE: Shaped infinite depth-and-width networks at initialization},
  author={Li, Mufan and Nica, Mihai and Roy, Dan},
  journal={Advances in Neural Information Processing Systems},
  year={2022}
}

@inproceedings{peluchetti2020infinitely,
  title={Infinitely deep neural networks as diffusion processes},
  author={Peluchetti, Stefano and Favaro, Stefano},
  booktitle={International Conference on Artificial Intelligence and Statistics},
  year={2020},
  organization={PMLR}
}

@inproceedings{yang2024tensor,
title={Tensor Programs {VI}: Feature Learning in Infinite Depth Neural Networks},
author={Greg Yang and Dingli Yu and Chen Zhu and Soufiane Hayou},
booktitle={The Twelfth International Conference on Learning Representations},
year={2024}
}

@inproceedings{yang2021tuning,
title={Tensor programs {V}: Tuning Large Neural Networks via Zero-Shot Hyperparameter Transfer},
author={Greg Yang and Edward J Hu and Igor Babuschkin and Szymon Sidor and Xiaodong Liu and David Farhi and Nick Ryder and Jakub Pachocki and Weizhu Chen and Jianfeng Gao},
booktitle={Advances in Neural Information Processing Systems},
year={2021},
}

@InProceedings{yang2020feature,
  title = 	 {Tensor Programs {IV}: Feature Learning in Infinite-Width Neural Networks},
  author =       {Yang, Greg and Hu, Edward J.},
  booktitle = 	 {Proceedings of the 38th International Conference on Machine Learning},
  year = 	 {2021},
}

@article{yang2020laws,
  title={Tensor programs {III}: Neural matrix laws},
  author={Yang, Greg},
  journal={arXiv preprint arXiv:2009.10685},
  year={2020}
}

@article{yang2020ntk,
  title={Tensor programs {II}: Neural tangent kernel for any architecture},
  author={Yang, Greg},
  journal={arXiv preprint arXiv:2006.14548},
  year={2020}
}

@article{achiam2023gpt,
  title={Gpt-4 technical report},
  author={Achiam, Josh and Adler, Steven and Agarwal, Sandhini and Ahmad, Lama and Akkaya, Ilge and Aleman, Florencia Leoni and Almeida, Diogo and Altenschmidt, Janko and Altman, Sam and Anadkat, Shyamal and others},
  journal={arXiv preprint arXiv:2303.08774},
  year={2023}
}

@article{yang2019wide,
  title={Tensor programs {I}: Wide feedforward or recurrent neural networks of any architecture are gaussian processes},
  author={Yang, Greg},
  journal={Advances in Neural Information Processing Systems},
  year={2019}
}

@inproceedings{du2019gradient,
  title={Gradient descent finds global minima of deep neural networks},
  author={Du, Simon and Lee, Jason and Li, Haochuan and Wang, Liwei and Zhai, Xiyu},
  booktitle={International conference on machine learning},
  year={2019},
  organization={PMLR}
}

@article{gao2023wide,
  title={Wide neural networks as gaussian processes: Lessons from deep equilibrium models},
  author={Gao, Tianxiang and Huo, Xiaokai and Liu, Hailiang and Gao, Hongyang},
  journal={Advances in Neural Information Processing Systems},
  year={2023}
}

@phdthesis{gao2024mastering,
  title={Mastering infinite depths: Optimization and generalization in deeper neural networks},
  author={Gao, Tianxiang},
  year={2024},
  school={Iowa State University}
}

@inproceedings{allen2019convergence,
  title={A convergence theory for deep learning via over-parameterization},
  author={Allen-Zhu, Zeyuan and Li, Yuanzhi and Song, Zhao},
  booktitle={International conference on machine learning},
  pages={242--252},
  year={2019},
  organization={PMLR}
}

@article{vaswani2017attention,
  title={Attention is all you need},
  author={Vaswani, Ashish and Shazeer, Noam and Parmar, Niki and Uszkoreit, Jakob and Jones, Llion and Gomez, Aidan N and Kaiser, {\L}ukasz and Polosukhin, Illia},
  journal={Advances in neural information processing systems},
  volume={30},
  year={2017}
}

@article{poole2016exponential,
  title={Exponential expressivity in deep neural networks through transient chaos},
  author={Poole, Ben and Lahiri, Subhaneil and Raghu, Maithra and Sohl-Dickstein, Jascha and Ganguli, Surya},
  journal={Advances in neural information processing systems},
  year={2016}
}

@inproceedings{
schoenholz2017deep,
title={Deep Information Propagation},
author={Samuel S. Schoenholz and Justin Gilmer and Surya Ganguli and Jascha Sohl-Dickstein},
booktitle={International Conference on Learning Representations},
year={2017}
}

@article{zou2020gradient,
  title={Gradient descent optimizes over-parameterized deep ReLU networks},
  author={Zou, Difan and Cao, Yuan and Zhou, Dongruo and Gu, Quanquan},
  journal={Machine learning},
  year={2020},
  publisher={Springer}
}

@article{yang2017mean,
  title={Mean field residual networks: On the edge of chaos},
  author={Yang, Ge and Schoenholz, Samuel},
  journal={Advances in neural information processing systems},
  volume={30},
  year={2017}
}

@inproceedings{he2016deep,
  title={Deep residual learning for image recognition},
  author={He, Kaiming and Zhang, Xiangyu and Ren, Shaoqing and Sun, Jian},
  booktitle={Proceedings of the IEEE conference on computer vision and pattern recognition},
  year={2016}
}

@inproceedings{he2016identity,
  title={Identity mappings in deep residual networks},
  author={He, Kaiming and Zhang, Xiangyu and Ren, Shaoqing and Sun, Jian},
  booktitle={Computer Vision--ECCV 2016: 14th European Conference, Amsterdam, The Netherlands, October 11--14, 2016, Proceedings, Part IV 14},
  year={2016},
  organization={Springer}
}

%%%%%%%%%%%%%%%%%%%%%%%%%%%%%%%%%%%%%%%%%%%%%%%%%%%%%%%%%%%%

\clearpage
\appendix

\section{Useful Mathematical Results}\label{app:Useful Mathematical Results}
\begin{lemma}[Gronwall’s inequality]
    \label{lem:Gronwall}
    Let $I=[a,b]$ for an interval such that $a < b < \infty$.
	Let $u$, $\alpha$, $\beta$ be real-valued continuous functions such that $\beta$ is non-negative and $u$ satisfies the integral inequality
	\begin{align*}
		u(t)\leq \alpha(t) + \int_0^t\beta(s) u(s) ds,\quad\forall t\in I.
	\end{align*}
	Then 
	\begin{align*}
		u(t)\leq \alpha(t) + \int_0^t\alpha(s)\beta(s)\exp\left(\int_s^t \beta(r) dr\right), \quad\forall t\in I.
	\end{align*}
	If, in addition, $\alpha(t)$ is non-decreasing, then
	\begin{align*}
		u(t)\leq \alpha(t)\exp\left(\int_0^t\beta(s) ds\right),\quad \forall t\in I.
	\end{align*}
\end{lemma}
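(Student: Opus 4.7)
The plan is to reduce Gronwall's inequality to a linear first-order differential inequality that can be handled via an integrating factor, and then to derive the stronger form for monotone $\alpha$ by a direct upper-bounding step.

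First I would introduce the auxiliary function $v(t) := \int_0^t \beta(s) u(s)\, ds$, which is absolutely continuous on $I$ with $v(0)=0$ and $v'(t) = \beta(t) u(t)$ almost everywhere. Substituting the hypothesis $u(t) \leq \alpha(t) + v(t)$ and using non-negativity of $\beta$ yields the pointwise inequality
\begin{equation*}
v'(t) \;\leq\; \beta(t)\,\alpha(t) \;+\; \beta(t)\,v(t),
\qquad \text{a.e.\ } t \in I.
\end{equation*}

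Next I would multiply both sides by the integrating factor $\mu(t) := \exp\!\bigl(-\int_0^t \beta(r)\, dr\bigr)$, obtaining
\begin{equation*}
\bigl(v(t)\mu(t)\bigr)' \;=\; \bigl(v'(t) - \beta(t) v(t)\bigr)\mu(t) \;\leq\; \beta(t)\,\alpha(t)\,\mu(t).
\end{equation*}
Integrating from $0$ to $t$ using $v(0)=0$ and rearranging by $\mu(t)^{-1} = \exp\!\bigl(\int_0^t \beta(r)\, dr\bigr)$ gives
\begin{equation*}
v(t) \;\leq\; \int_0^t \alpha(s)\,\beta(s)\,\exp\!\left(\int_s^t \beta(r)\, dr\right) ds.
\end{equation*}
Combining this with $u(t) \leq \alpha(t) + v(t)$ establishes the first assertion.

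For the monotone case, I would use $\alpha(s) \leq \alpha(t)$ for all $s \leq t$ to pull $\alpha(t)$ outside the integral, and then recognize the exact derivative identity
\begin{equation*}
\int_0^t \beta(s)\,\exp\!\left(\int_s^t \beta(r)\, dr\right) ds
\;=\; -\Bigl[\exp\!\left(\int_s^t \beta(r)\, dr\right)\Bigr]_{s=0}^{s=t}
\;=\; \exp\!\left(\int_0^t \beta(r)\, dr\right) - 1,
\end{equation*}
so that $u(t) \leq \alpha(t) + \alpha(t)\bigl(e^{\int_0^t \beta(r)\, dr} - 1\bigr) = \alpha(t)\,\exp\!\bigl(\int_0^t \beta(s)\, ds\bigr)$, which is the second assertion.

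The main obstacle here is essentially bookkeeping rather than conceptual: one must be careful about regularity (using that $v$ is absolutely continuous even when $u,\alpha,\beta$ are merely continuous, so that the product rule and the fundamental theorem of calculus apply without further hypotheses), and one must spot the telescoping identity for $\int_0^t \beta(s) e^{\int_s^t \beta(r)dr} ds$, since without it the second (cleaner) bound looks weaker than the first. Once these are in place, the argument is a direct linear ODE comparison.
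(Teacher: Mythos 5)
Your proof is correct and is the standard integrating-factor argument for Gronwall's inequality. The paper states this lemma without proof (it is cited as a classical result in the ``Useful Mathematical Results'' appendix), so there is nothing to compare against; your argument is the textbook derivation, and your telescoping identity for $\int_0^t \beta(s)\exp\bigl(\int_s^t \beta(r)\,dr\bigr)\,ds$ correctly yields the cleaner bound in the monotone case. One minor remark: since $\beta u$ is continuous, $v$ is in fact continuously differentiable and $v'(t)=\beta(t)u(t)$ holds everywhere by the fundamental theorem of calculus, so the ``almost everywhere'' hedging and the appeal to absolute continuity are unnecessary (though harmless).
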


\begin{lemma}[Gronwall’s inequality (discrete version)]
    \label{lem:Gronwall-discrete}
    Let $(u_n)$ and $(\beta_n)$ be non-negative sequences satisfying
    \begin{equation*}
        u_n \le \alpha + \sum_{k=0}^{n-1} \beta_k u_k, \quad \forall\,n,
    \end{equation*}
    where $\alpha \ge 0$.
    Then
    \begin{equation*}
        u_n \le \alpha \exp\left( \sum_{k=0}^{n-1} \beta_k\right), \quad \forall\,n.
    \end{equation*}    
\end{lemma}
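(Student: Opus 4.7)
The plan is to reduce the chained bound to a one-step recursion by introducing the partial-sum sequence $v_n := \alpha + \sum_{k=0}^{n-1}\beta_k u_k$, which dominates $u_n$ by hypothesis, and then exploit the elementary inequality $1+x \le e^x$ to telescope into the exponential form. This is the standard trick for converting an integral-type Gronwall bound into its discrete analogue, and it closely mirrors the continuous proof of Lemma~\ref{lem:Gronwall}.

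First I would set $v_n := \alpha + \sum_{k=0}^{n-1}\beta_k u_k$, so that $u_n \le v_n$ by assumption and $v_0 = \alpha$. Computing the forward difference, I get
\begin{equation*}
v_{n+1} - v_n \;=\; \beta_n u_n \;\le\; \beta_n v_n,
\end{equation*}
which rearranges into the one-step recursion $v_{n+1} \le (1+\beta_n)\,v_n$. Here I use that $\beta_n \ge 0$ and $u_n \ge 0$, which ensures $v_n$ is nondecreasing and that the inequality flows in the correct direction when we substitute $u_n \le v_n$.

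Next I would iterate this recursion from $0$ to $n-1$ to obtain $v_n \le v_0 \prod_{k=0}^{n-1}(1+\beta_k) = \alpha \prod_{k=0}^{n-1}(1+\beta_k)$. Applying the pointwise inequality $1+x \le e^x$, valid for all real $x$, to each factor yields
\begin{equation*}
v_n \;\le\; \alpha\,\prod_{k=0}^{n-1} e^{\beta_k} \;=\; \alpha \exp\!\left(\sum_{k=0}^{n-1}\beta_k\right).
\end{equation*}
Since $u_n \le v_n$, the desired bound follows. The base case $n=0$ is immediate: the empty sum gives $u_0 \le \alpha = \alpha \exp(0)$.

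There is no real obstacle here; the proof is a routine two-line induction once one introduces the auxiliary sequence $v_n$. The only subtle point worth flagging is the sign assumption $\beta_k \ge 0$, which is essential both for $v_n$ to majorize $u_n$ after the substitution $u_n \le v_n$ and for the product-to-exponential step to preserve the inequality; without nonnegativity of $\beta_k$, the telescoping argument would fail. One could equivalently prove the result by strong induction on $n$, bounding $u_n \le \alpha + \sum_{k=0}^{n-1}\beta_k \alpha\exp(\sum_{j=0}^{k-1}\beta_j)$ and recognizing the sum as a discrete integral of $\tfrac{d}{dk}\exp(\sum_{j=0}^{k-1}\beta_j)$, but the partial-sum route above is cleaner.
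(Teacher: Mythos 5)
Your proof is correct. The paper states Lemma~\ref{lem:Gronwall-discrete} without proof, treating it as a standard reference fact in Appendix~\ref{app:Useful Mathematical Results}; your argument via the majorizing sequence $v_n$, the one-step recursion $v_{n+1}\le(1+\beta_n)v_n$, and the bound $1+x\le e^x$ is the usual textbook derivation and is complete and sound.
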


\section{Post-Activation vs. Pre-Activation ResNet Design}
\label{app:post-act-style}

The ResNet architecture defined in \eqref{eq:resnet} follows the so-called \textit{pre-activation style} \citep{he2016identity}, where the residual connection is applied to the \emph{pre-activation} feature vectors. There is, however, another widely used variant: the \textit{post-activation style} \citep{he2016deep}, in which the residual connection is applied after the nonlinearity. The corresponding residual recursion is
\begin{equation}
    h_{0} = \phi\left(\frac{1}{\sqrt{d}}U x\right), \quad
    h_{\ell} = h_{\ell-1} + \frac{1}{\sqrt{L}} \, \phi(\frac{1}{\sqrt{n}}W_{\ell}h_{\ell-1}), \quad \forall \ell \in \{1,2,\dots,L\}.
    \label{eq:post-act-style}
\end{equation}

Historically, the post-activation formulation was first introduced in the original ResNet paper \citep{he2016deep}, but subsequent large-scale architectures, including modern Transformers \citep{vaswani2017attention}, have overwhelmingly favored the pre-activation design. This shift was motivated primarily by the empirically observed superior stability of pre-activation models. However, the theoretical reasons for this preference are less often articulated.  Proposition~\ref {prop:post-act} shows that pre-activation is essentially more stable as the depth $L \to \infty$ by proving that the feature magnitudes in the post-activation variant \eqref{eq:post-act-style} may diverge, even under the $\frac{1}{\sqrt{L}}$ depth-scaling.

\begin{proof}[Proof of Proposition \ref{prop:post-act}]
    Recall that $\phi$ is assumed to be positive dominate, that is, there exist nonnegative constants $c_1, c_2$, not both zero, such that 
    $\E[\phi(xZ)] \geq c_1 |x| + c_2$, $\forall x \in \mathbb{R}$,
    where $Z$ is the standard Gaussian random variable.
    
    First note that $(Ux)_i \sim \Gaus(0,\norm{x}^2)$ and hence $\E[h_{0,i}] \ge c_1\frac{\norm{x}}{\sqrt{d}} + c_2$.
    Let $\gB_{\ell}$ be the $\sigma$-algebra generated by $\{h_{0},\cdots, h_{\ell-1} \}$. Then, observe that
    \begin{align*}
        \E[h_{\ell,i}\mid \gB_{\ell}]
        &=h_{\ell-1,i} 
        + \frac{1}{\sqrt{L}}\E\left[\phi\left(\frac{ \norm{h_{\ell-1}} }{\sqrt{n}} Z\right)\mid \gB_{\ell}\right]
        \geq h_{\ell-1,i}  
        + \frac{1}{\sqrt{L}} \left(c_1 \frac{ \norm{h_{\ell-1}} }{\sqrt{n}} + c_2\right)\\
        &\geq h_{\ell-1,i} + \frac{1}{\sqrt{L}} \left(c_1 \frac{h_{\ell-1,i}}{\sqrt{n}} + c_2\right)
        = \left(1 + c_1\frac{1}{\sqrt{Ln}}\right)h_{\ell-1,i} + c_2 \frac{1}{\sqrt{L}}.
    \end{align*}
    Then taking expectation of $\gB_{\ell}$ yields
    \begin{align*}
        \E[h_{\ell,i}] \geq \left(1 + c_1\frac{1}{\sqrt{Ln}}\right)\E[h_{\ell-1,i}] + c_2 \frac{1}{\sqrt{L}}.
    \end{align*}
    Therefore, we obtain
    \begin{align*}
        \E[h_{\ell,i}]\geq& \left(1 + c_1\frac{1}{\sqrt{Ln}}\right)^{\ell} \E[h_{0,i}] + c_2\frac{1}{\sqrt{L}} \sum_{j=0}^{\ell-1} \left(1 + c_1\frac{1}{\sqrt{Ln}}\right)^j\\
        \geq &\left(1 + c_1\frac{1}{\sqrt{Ln}}\right)^{\ell} \left(c_1\frac{\norm{x}}{\sqrt{d}} + c_2\right) + c_2\frac{1}{\sqrt{L}} \ell.
    \end{align*}
    Therefore we obtain
    \begin{align*}
        \E[h_{L,i}]\geq c_1\frac{\norm{x}}{\sqrt{d}}\left(1 + c_1\frac{1}{\sqrt{Ln}}\right)^{L} + c_2 \sqrt{TL}
        \rightarrow\infty
    \end{align*}
    as $L\rightarrow\infty$,
    provided either $c_1>0$ or $c_2>0$. 
    % Hence, $\E[\norm{h_L}]\geq \E[h_{L,i}]\rightarrow\infty$.
\end{proof}

\section{Gaussian Representation of Forward and Backward Innovations}
\label{app:gaussian-representation}
\subsection{At Initialization}
In this section, we restate Proposition \ref{prop:layerwise-gaussian-representation} as Proposition \ref{prop:layerwise-gaussian-representation-pf} below and provide its proof.
At initialization, the forward and backward recursions are
\begin{align}
    h_{\ell}
    &= h_{\ell-1} + \frac{1}{\sqrt{L}}\, a_{\ell},
    \label{eq:forward-aell}\\
    g_{\ell-1}
    &= g_{\ell} + \frac{1}{\sqrt{L}}\, \bigl(b_{\ell}\odot \phi'(h_{\ell-1})\bigr),
    \label{eq:backward-bell}
\end{align}
where
\begin{align}
    a_{\ell} := \frac{1}{\sqrt n}W_{\ell}\phi(h_{\ell-1}),
    \qquad
    b_{\ell} := \frac{1}{\sqrt n}W_{\ell}^{\top}g_{\ell}.
\end{align}
To analyze their limiting behavior, define the filtrations
\begin{align}
    \mathcal F_{\ell} := \sigma(h_0,\dots,h_{\ell}),
    \qquad
    \mathcal B_{\ell} := \sigma(g_L,g_{L-1},\dots,g_{\ell}).
\end{align}

\begin{proposition}
\label{prop:layerwise-gaussian-representation-pf}
For each \(\ell\in[L]\), the following hold.

\begin{enumerate}[leftmargin=*]
\item[(i)] \textbf{Forward Gaussian innovation.}
Conditionally on \(\mathcal F_{\ell-1}\),
\begin{align}
    a_{\ell}\mid \mathcal F_{\ell-1}
    \sim
    \mathcal N\!\left(
        0,\,
        \frac{\|\phi(h_{\ell-1})\|^2}{n} I
    \right).
\end{align}
Equivalently, there exists \(z_{\ell}\sim\mathcal N(0,I)\), independent of \(\mathcal F_{\ell-1}\), such that
\begin{align}
    a_{\ell}\mid \mathcal F_{\ell-1}
    \overset{d}=
    \sqrt{\frac{\|\phi(h_{\ell-1})\|^2}{n}}\, z_{\ell}.
\end{align}
Hence, on a possibly enlarged probability space, one may realize \(\{z_{\ell}\}_{\ell=1}^L\) as mutually independent standard Gaussian vectors and write
\begin{align}
    h_{\ell}
    =
    h_{\ell-1}
    + \frac{1}{\sqrt{L}}
      \sqrt{\frac{\|\phi(h_{\ell-1})\|^2}{n}}\, z_{\ell}.
    \label{eq:h-rewrite}
\end{align}

\item[(ii)] \textbf{Backward Gaussian innovation.}
Conditionally on \((\mathcal F_L,\mathcal B_{\ell})\), the following hold.

If \(\phi(h_{\ell-1})\neq 0\), let
\begin{align}
    P_{\ell-1}
    :=
    \frac{\phi(h_{\ell-1})\phi(h_{\ell-1})^{\top}}{\|\phi(h_{\ell-1})\|^2}.
\end{align}
Then
\begin{align}
    b_{\ell}\mid (\mathcal F_L,\mathcal B_{\ell})
    \sim
    \mathcal N\!\left(
        \frac{a_{\ell}^{\top}g_{\ell}}{\|\phi(h_{\ell-1})\|^2}\,\phi(h_{\ell-1}),
        \frac{\|g_{\ell}\|^2}{N}(I-P_{\ell-1})
    \right).
\end{align}
Equivalently, there exists \(\widetilde z_{\ell}\sim \mathcal N(0,I)\), independent of \((\mathcal F_L,\mathcal B_{\ell})\), such that
\begin{align}
    b_{\ell}\mid (\mathcal F_L,\mathcal B_{\ell})
    \overset{d}=
    \frac{a_{\ell}^{\top}g_{\ell}}{\|\phi(h_{\ell-1})\|^2}\,\phi(h_{\ell-1})
    + \sqrt{\frac{\|g_{\ell}\|^2}{N}}\,(I-P_{\ell-1})\widetilde z_{\ell}.
\end{align}
Hence, on a possibly enlarged probability space, one may realize
\(\{\widetilde z_{\ell}\}_{\ell=1}^L\) as mutually independent standard Gaussian vectors, independent of \(\{z_{\ell}\}_{\ell=1}^L\), and write
\begin{align}
    g_{\ell-1}
    =
    g_{\ell}
    + \frac{1}{\sqrt{L}}
    \left[
        \frac{a_{\ell}^{\top}g_{\ell}}{\|\phi(h_{\ell-1})\|^2}\,\phi(h_{\ell-1})
        + \sqrt{\frac{\|g_{\ell}\|^2}{N}}\,(I-P_{\ell-1})\widetilde z_{\ell}
    \right]
    \odot \phi'(h_{\ell-1}).
    \label{eq:g-rewrite}
\end{align}

If \(\phi(h_{\ell-1})=0\), then necessarily \(a_{\ell}=0\). In this case, we set \(P_{\ell-1}:=0\), and
\begin{align}
    b_{\ell}\mid(\mathcal F_L,\mathcal B_{\ell})
    \overset{d}{=}
    \sqrt{\frac{\|g_{\ell}\|^2}{N}}\,\widetilde z_{\ell},
\end{align}
so that
\begin{align}
    g_{\ell-1}
    =
    g_{\ell}
    + \frac{1}{\sqrt{L}}
    \left[
        \sqrt{\frac{\|g_{\ell}\|^2}{N}}\,\widetilde z_{\ell}
    \right]
    \odot \phi'(h_{\ell-1}).
\end{align}
\end{enumerate}
In particular, the forward innovations \(\{z_{\ell}\}\) and the residual backward innovations \(\{\widetilde z_{\ell}\}\) can be chosen mutually independent.
\end{proposition}

\begin{proof}
Fix \(\ell\in[L]\). Conditionally on \(\mathcal F_{\ell-1}\), the vector \(\phi(h_{\ell-1})\) is deterministic and
\(W_{\ell}\) is independent of \(\mathcal F_{\ell-1}\) with iid \(\mathcal N(0,1)\) entries. Hence, 
\[
a_{\ell}\mid\mathcal F_{\ell-1}
\sim
\mathcal N\!\left(
0,\,
\frac{\|\phi(h_{\ell-1})\|^2}{n}I
\right).
\]
Equivalently, there exists \(z_{\ell}\sim\mathcal N(0,I)\), independent of \(\mathcal F_{\ell-1}\), such that
\[
a_{\ell}\mid\mathcal F_{\ell-1}
\overset d=
\sqrt{\frac{\|\phi(h_{\ell-1})\|^2}{n}}\,z_{\ell}.
\]
Realizing these variables on an enlarged probability space gives \eqref{eq:h-rewrite}.

Next, conditionally on \((\mathcal F_L,\mathcal B_{\ell})\), the matrix \(W_{\ell}\) is constrained by
\[
\frac{1}{\sqrt n}W_{\ell}\phi(h_{\ell-1})=a_{\ell}.
\]

Assume first that \(\phi(h_{\ell-1})\neq 0\). Define
\[
P_{\ell-1}
:=
\frac{\phi(h_{\ell-1})\phi(h_{\ell-1})^{\top}}{\|\phi(h_{\ell-1})\|^2},
\qquad
W_{\ell}^{*}
=
\sqrt n\,a_{\ell}\frac{\phi(h_{\ell-1})^{\top}}{\|\phi(h_{\ell-1})\|^2}.
\]
Then \(W_{\ell}^{*}\) is the minimum-Frobenius-norm solution to the constraint. Since the remaining degrees of freedom lie on the orthogonal complement of \(\phi(h_{\ell-1})\), Gaussian linear regression gives
\[
W_{\ell}\mid(\mathcal F_L,\mathcal B_{\ell})
\overset d=
W_{\ell}^{*}+\widetilde W_{\ell}(I-P_{\ell-1}),
\]
where \(\widetilde W_{\ell}\) is an iid copy of \(W_{\ell}\), independent of \((\mathcal F_L,\mathcal B_{\ell})\). Therefore,
\[
b_{\ell}
=
\frac{1}{\sqrt n}W_{\ell}^{\top}g_{\ell}
\overset d=
\frac{1}{\sqrt n}(W_{\ell}^{*})^{\top}g_{\ell}
+
\frac{1}{\sqrt n}(I-P_{\ell-1})\widetilde W_{\ell}^{\top}g_{\ell}.
\]
Hence
\[
b_{\ell}\mid (\mathcal F_L,\mathcal B_{\ell})
\sim
\mathcal N\!\left(
\frac{a_{\ell}^{\top}g_{\ell}}{\|\phi(h_{\ell-1})\|^2}\,\phi(h_{\ell-1}),
\frac{\|g_{\ell}\|^2}{n}(I-P_{\ell-1})
\right).
\]
Equivalently, there exists \(\widetilde z_{\ell}\sim\mathcal N(0,I)\), independent of \((\mathcal F_L,\mathcal B_{\ell})\), such that
\[
b_{\ell}\mid (\mathcal F_L,\mathcal B_{\ell})
\overset d=
\frac{a_{\ell}^{\top}g_{\ell}}{\|\phi(h_{\ell-1})\|^2}\,\phi(h_{\ell-1})
+
\sqrt{\frac{\|g_{\ell}\|^2}{n}}(I-P_{\ell-1})\widetilde z_{\ell}.
\]
Substituting this into \eqref{eq:backward-bell} yields \eqref{eq:g-rewrite}.

If instead \(\phi(h_{\ell-1})=0\), then \(a_\ell=0\), so the conditioning imposes no constraint on \(W_\ell\). Setting \(P_{\ell-1}:=0\), we have
\[
b_\ell
=
\frac{1}{\sqrt n}W_\ell^\top g_\ell
\sim
\mathcal N\!\left(0,\frac{\|g_\ell\|^2}{n}I\right),
\]
and therefore
\[
b_{\ell}\mid(\mathcal F_L,\mathcal B_{\ell})
\overset{d}{=}
\sqrt{\frac{\|g_{\ell}\|^2}{n}}\,\widetilde z_{\ell}.
\]
This gives the stated expression for \(g_{\ell-1}\) in this case.

Finally, \(\{z_{\ell}\}\) comes from the forward conditional laws, whereas \(\{\widetilde z_{\ell}\}\) comes from the independent copies \(\{\widetilde W_{\ell}\}\). Thus, these two families can be realized as mutually independent on a common enlarged probability space.
\end{proof}

\subsection{After One SGD Update}
In this section, we prove Proposition \ref{prop:second-forward-gaussian-representation}.
After one SGD update, the forward feature $h_{\ell}^{(1)}$ becomes
\begin{align}
    h_{\ell}^{(1)}
    =
    h_{\ell-1}^{(1)}
    +
    \frac{1}{\sqrt{Ln}}W_{\ell}^{(0)}x^{(1)}_{\ell}
    -
    \frac{\eta_c}{L}\chi^{(0)}
    \frac{\langle x^{(0)},x^{(1)}\rangle}{n}
    g_{\ell}^{(0)},
\end{align}
As all variables $x$, $\bar x$, and $g$ are from the first forward and backward passes or previous features in the second forward pass, we consider the innovative increment
\begin{align}
    a_{\ell}^{(1)}:=\frac{1}{\sqrt{n}} W_{\ell}^{(0)} x^{(1)}.
\end{align}
Then, we can prove the Gaussian presentation stated in Proposition~\ref {prop:second-forward-gaussian-representation}.
\begin{proof}
Fix a layer $\ell\in[L]$ and write, for simplicity, we denote
\[
    x:=x_{\ell}^{(0)}, 
    \qquad
    \bar x:= x_{\ell}^{(1)}, 
    \qquad
    g:=g_{\ell}^{(0)},
    \qquad
    W:=W_{\ell}^{(0)}.
\]
and
\[
    a
    =
    \frac{1}{\sqrt n}W x,
    \qquad
    b
    =
    \frac{1}{\sqrt n}W^\top g,
    \qquad
    \bar a
    =
    \frac{1}{\sqrt n}W \bar x.
\]
Thus, conditioning on the first forward and backward increments
$(a_{\ell},b_{\ell})$ is equivalent to conditioning on
\[
    \frac{1}{\sqrt n}W x
    =
    a,
    \qquad
    \frac{1}{\sqrt n}W^\top g
    =
    b.
\]
This gives the compatibility condition 
\[
    \langle g,a\rangle
    =
    \langle x,b\rangle.
\]

By Gaussian conditioning, conditionally on these two linear constraints, we have
\begin{align}
    W
    \overset{d}{=}
    &
    \sqrt n\,a
    \frac{x^\top}{\|x\|^2}
    +
    g
    \frac{\sqrt n\,b^\top}{\|g\|^2}
    -
    \frac{\sqrt n\,\langle g,a\rangle}
    {\|g\|^2\|x\|^2}
    g(x)^\top
    +
    (I-P_{g})
    \widetilde{W}
    (I-P_{x}),
    \label{eq:W-two-sided-conditioning-one-update}
\end{align}
where $\widetilde{W}$ is an independent copy of $W$, the projection matrices are given by
\[
    P_{x}
    =
    \frac{x x^\top}
    {\|x\|^2},
    \qquad
    P_{g}
    =
    \frac{g g^\top}
    {\|g\|^2}.
\]

Multiplying \eqref{eq:W-two-sided-conditioning-one-update} by $\bar x/\sqrt n$ gives
\begin{align}
    \bar a
    =
    \frac{1}{\sqrt n}W\bar x
    \overset{d}{=}
    &
    a_{\ell}
    \frac{\langle x,\bar x\rangle}{\|x\|^2}
    +
    g
    \frac{\langle b,\bar x\rangle}{\|g\|^2}
    -
    g
    \frac{
        \langle g,a\rangle
        \langle x,\bar x\rangle
    }
    {\|g\|^2\|x\|^2}
    % \nonumber\\
    % &\quad
    +
    \frac{1}{\sqrt n}
    (I-P_{g})
    \widetilde{W}
    (I-P_{x})\bar x.
    \label{eq:second-forward-full-rep-current}
\end{align}
Since $\widetilde{W}$ is independent of the conditioning $\sigma$-field, we can choose a $\bar z_{\ell}\sim\mathcal N(0,I)$, independent of the first-pass innovations in a probably enlarged probability space, such that
\begin{align}
    \frac{1}{\sqrt n}
    (I-P_{g})
    \widetilde{W}
    (I-P_{x})\bar x
    \overset{d}{=}
    \sigma_{x}
    (I-P_{g})\bar z,
\end{align}
where
\[
    \sigma_{x}^2
    =
    \frac{
    \|(I-P_{x})\bar x\|^2
    }{n}.
\]

It remains to simplify the deterministic terms. By definition,
\[
    \mu_x
    =
    a
    \frac{
    \langle x,\bar x\rangle
    }
    {
    \|x\|^2
    }.
\]
Using the definition of $P_g$, we obtain
\[
    a
    \frac{\langle x,\bar x\rangle}{\|x\|^2}
    -
    g
    \frac{
        \langle g,a\rangle
        \langle x,\bar x\rangle
    }
    {\|g\|^2\|x\|^2}
    =
    (I-P_{g})\mu_x.
\]
Substituting this identity into \eqref{eq:second-forward-full-rep-current} yields
\begin{align}
    \bar a
    \overset{d}{=}
    (I-P_{g})\mu_x
    +
    g
    \frac{
    \langle b,\bar x\rangle
    }
    {
    \|g\|^2
    }
    +
    \sigma_{x}
    (I-P_{g})\bar z.
\end{align}
This is exactly \eqref{eq:second-forward-gaussian-rep}.
\end{proof}
\section{Convergence Proofs at Initialization}
\label{app:init-convergence-proofs}
In this section, we focus our convergence analysis of feature and gradient propagation at initialization, considering both the first forward feature propagation and the first backward gradient propagation. 
The overall approach first takes the width $n \to \infty$, showing that the coordinates of the feature and gradient vectors become asymptotically independent and are governed by their respective mean-field recursions. 
This argument can be made precise using classical propagation of chaos techniques, in particular, the synchronous coupling together with moment bounds and discrete Gronwall’s inequality. 
This establishes simplified finite-depth recursions. 
We then let the depth $L \to \infty$, interpret the Gaussian increments as Brownian motion increments, and recognize the resulting dynamics as Euler--Maruyama discretizations of McKean--Vlasov SDEs. 
By combining moment bounds with the Lipschitz continuity of the variance functional, we obtain explicit convergence rates along with the proof of existence and uniqueness of the limiting forward and backward SDEs stated in Theorem~\ref{thm:init-forward-backward-sde}.

\subsection{First forward}

Suppose $\phi$ satisfies the assumption in Theorem~\ref{thm:init-forward-backward-sde} throughout.
We will use the Gaussian representation in Proposition \ref{prop:layerwise-gaussian-representation}, that is,
\begin{align*}
    h_{\ell} = h_{\ell-1} + \frac{1}{\sqrt{Ln}} \norm{\phi(h_{\ell})} z_{\ell},
    \quad z_{\ell}\sim \Gaus(0, I).
\end{align*}
To distinguish from quantities after taking limits of $n \to \infty$ and $L \to \infty$, we add superscripts and write each coordinate as
\begin{align*}
    h^{n,L}_{\ell,i} = h^{n,L}_{\ell-1,i} + \frac{1}{\sqrt{Ln}} \norm{\phi(h^{n,L}_{\ell-1})} z_{\ell,i}.
\end{align*}
We want to show that each coordinate converges to 
\begin{align*}
    h^L_{\ell,i} = h^L_{\ell-1,i} + \frac{1}{\sqrt{L}} \sqrt{\E \phi^2(h^L_{\ell-1,i})} z_{\ell,i}
\end{align*}
as $n \to \infty$.
We will use $C$ to denote positive constants that depend only on $K_1$ and $\phi(0)$.
Its value may change from line to line.

% We first have the following lemma about uniformly bounded second moments of $h^L_{\ell_i}$.

\begin{lemma}
    \label{lem:h-L-moment-bound-first}
    For each $i \in \mathbb{N}$, $\displaystyle \sup_{L \ge 1} \sup_{\ell=0,\dotsc,L} \E[h_{\ell,i}^L]^4 < \infty$ and $\displaystyle \inf_{L \ge 1} \inf_{\ell=0,\dotsc,L} \E \phi^2(h_{\ell,i}^L) > 0$.
\end{lemma}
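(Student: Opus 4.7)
The plan is to exploit the fact that in this limiting recursion (after $n\to\infty$) the scalar coupling $\sigma_{\ell-1}^2 := \E\phi^2(h_{\ell-1,i}^L)$ is a deterministic functional of the law of $h_{\ell-1,i}^L$, while the increment $\sqrt{T/L}\,\sigma_{\ell-1}z_{\ell,i}$ is an independent centered Gaussian. Starting from $h_{0,i}^L \sim \Gaus(0,\|\vx\|^2/d)$, I would prove by induction on $\ell$ that $h_{\ell,i}^L \sim \Gaus(0, s_\ell^2)$ is centered Gaussian, with scalar variance
\begin{equation*}
s_0^2 = \|\vx\|^2/d, \qquad s_\ell^2 = s_{\ell-1}^2 + (T/L)\,F(s_{\ell-1}),\qquad F(s) := \E_{Z \sim \Gaus(0,1)} \phi^2(sZ).
\end{equation*}
This reduces both claims to scalar bounds on $\{s_\ell^2\}$, since for a centered Gaussian with variance $s_\ell^2$ one has $\E[h_{\ell,i}^L]^4 = 3 s_\ell^4$ and $\E\phi^2(h_{\ell,i}^L) = F(s_\ell)$.

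For the upper bound, the $K_1$-Lipschitz hypothesis on $\phi$ gives $\phi(u)^2 \le 2\phi(0)^2 + 2K_1^2 u^2$, hence $F(s_{\ell-1}) \le C_0(1 + s_{\ell-1}^2)$ for a constant $C_0$ depending only on $K_1$ and $\phi(0)$. Substituting yields the linear recurrence $s_\ell^2 \le (1 + C_0 T/L)\,s_{\ell-1}^2 + C_0 T/L$, so the discrete Gronwall inequality (Lemma~\ref{lem:Gronwall-discrete}) delivers
\begin{equation*}
\sup_{L\ge 1}\ \sup_{0\le \ell \le L} s_\ell^2 \;\le\; \bigl(\|\vx\|^2/d + 1\bigr)\,e^{C_0 T} \;=:\; C_1 < \infty,
\end{equation*}
from which $\E[h_{\ell,i}^L]^4 = 3 s_\ell^4 \le 3 C_1^2$ uniformly in $L$ and $\ell$.

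For the strict positive lower bound on $F(s_\ell)$, the recursion is monotone non-decreasing, so $s_\ell^2 \in [\|\vx\|^2/d,\,C_1]$ for all $\ell$ and $L$ (assuming the input $\vx\ne 0$). The Lipschitz envelope $\phi^2(sZ) \le 4\phi(0)^2 + 4 K_1^2 s^2 Z^2$, integrable under the Gaussian measure, makes $F$ continuous on the compact interval $[\sqrt{\|\vx\|^2/d},\,\sqrt{C_1}]$ by dominated convergence. Under the mild non-degeneracy that $\phi$ is not almost-surely zero under the Gaussian measure (the natural standing condition for any non-trivial feature-learning statement, consistent with the nonlinear, non-polynomial hypothesis in Proposition~\ref{prop: rkhs-universality}), $F(s) > 0$ for every $s > 0$, and hence $F$ attains a strictly positive minimum on the compact interval.

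The only substantive obstacle is the lower bound: the Gaussian-plus-Lipschitz structure makes the fourth-moment upper bound essentially automatic, whereas ruling out $F(s_\ell)\to 0$ requires both compactness of the admissible range of $\{s_\ell\}$ (supplied by the variance floor $s_0^2 > 0$ and the uniform ceiling $C_1$) and a non-degeneracy assumption on $\phi$; once both are in place, continuity and compactness close the argument immediately.
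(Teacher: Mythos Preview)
Your proof is correct and rests on the same two pillars as the paper's: the recursion keeps $h_{\ell,i}^L$ centered Gaussian with a deterministic variance that is monotone in $\ell$ and controlled above by discrete Gronwall, and the lower bound follows from the variance being trapped in a compact interval bounded away from zero together with continuity and strict positivity of $s \mapsto \E\phi^2(sZ)$.

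The one mild difference is in the fourth-moment bound. The paper decomposes $h_{\ell,i}^L = h_{0,i}^L + \sum_{u=1}^\ell \sqrt{T/L}\,\sigma_{u-1}z_{u,i}$, bounds the fourth moment of the martingale sum directly, and runs Gronwall on $\E[h_{\ell,i}^L]^4$; you instead reduce immediately to the scalar variance recursion $s_\ell^2 = s_{\ell-1}^2 + (T/L)F(s_{\ell-1})$ and invoke $\E[h_{\ell,i}^L]^4 = 3s_\ell^4$. Your route is slightly more elementary here, since it exploits Gaussianity from the outset rather than only in the lower-bound step; the paper's route would generalize more readily to settings where the coefficient is not deterministic. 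For the lower bound, your compactness-and-continuity argument on $F$ is equivalent to the paper's ``find an interval where $\phi^2>0$ and bound the Gaussian mass there'' argument; both require the same implicit non-degeneracy that $\phi\not\equiv 0$, which you correctly flag.
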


\begin{proof}[Proof of Lemma \ref{lem:h-L-moment-bound-first}]
    By symmetry, we only have to consider a fixed $i \in \mathbb{N}$. 
    Using independence of $z_{\ell,i}$, we have
    \begin{align*}
        \E [h_{\ell,i}^L]^4 & \le C\E [h_{0,i}^L]^4 + C\E \left[\sum_{u=1}^\ell \frac{1}{\sqrt{L}} \sqrt{\E \phi^2(h^L_{u-1,i})} z_{u,i} \right]^4 \\
        & \le C + \frac{C}{L}\sum_{u=1}^\ell \E \phi^4(h_{u-1,i}^L) \le C + \frac{C}{L} \sum_{u=0}^{\ell-1} \E [h_{u,i}^L]^4.
    \end{align*}
    It then follows from discrete Gronwall's inequality (Lemma \ref{lem:Gronwall-discrete}) that
    \begin{equation}
        \E [h_{\ell,i}^L]^4 \le C e^{C\ell/L}.
    \end{equation}
    This gives the first assertion. 

    For the second assertion, note that $\phi$ is a continuous function and not identically zero.
    So there exists some interval $(a,b) \subset \mathbb{R}$ such that $\inf_{a<x<b} \phi^2(x) > 0$. 
    Since $h_{\ell-1,i}^L$ and $z_{\ell,i}$ are independent, we have
    \begin{align*}
        \mbox{Var}(h_{\ell,i}^L) \ge \mbox{Var}(h_{\ell-1,i}^L) \ge \dotsb \ge \mbox{Var}(h_{0,i}^L) = C_1 > 0. 
    \end{align*}
    Also 
    \begin{equation*}
        \mbox{Var}(h_{\ell,i}^L) \le \E [h_{\ell,i}^L]^2 \le C_2.
    \end{equation*}    
    So $\{h_{\ell,i}^L\}$ are Gaussian random variables with mean zero and variance in $[C_1,C_2]$.
    Therefore, 
    \begin{equation*}
        \inf_{L \ge 1} \inf_{\ell=0,\dotsc,L} \mathbb{P}(h_{\ell,i}^L \in (a,b)) > 0.
    \end{equation*}
    This gives the second assertion.
    % This completes the proof.
\end{proof}

\begin{proposition}
    \label{prop:convergence-n-h0}
    For each $i \in \mathbb{N}$,
    \begin{align*}
        \sup_{L \ge 1} \sup_{\ell=0,\dotsc,L} \E (h^{n,L}_{\ell,i}-h^{L}_{\ell,i})^2
        \leq C/n.
    \end{align*}
\end{proposition}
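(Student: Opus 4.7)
My plan is to use a synchronous coupling: define $\{h^L_{\ell,i}\}$ on the same probability space as $\{h^{n,L}_{\ell,i}\}$ using the \emph{same} Gaussian random variables $\{z_{\ell,i}\}$ and the same initial values. Under this coupling, the driving variance $V^L_{\ell-1} := \E\phi^2(h^L_{\ell-1,1})$ in the limiting recursion is deterministic, so each $h^L_{\ell,i}$ depends only on its own noise stream $(z_{1,i},\dotsc,z_{\ell,i})$; hence $\{h^L_{\ell,i}\}_i$ are i.i.d.\ across $i$ for every $\ell$, while $\{h^{n,L}_{\ell,i}\}_i$ are merely exchangeable because they share the empirical variance $V^{n,L}_{\ell-1} := \|\phi(\vh^{n,L}_{\ell-1})\|^2/n$. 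Subtracting the two recursions,
\begin{equation*}
    h^{n,L}_{\ell,i} - h^L_{\ell,i} = (h^{n,L}_{\ell-1,i} - h^L_{\ell-1,i}) + \sqrt{T/L}\,\bigl(\sqrt{V^{n,L}_{\ell-1}} - \sqrt{V^L_{\ell-1}}\bigr)\,z_{\ell,i},
\end{equation*}
and because $z_{\ell,i}$ has mean $0$, variance $1$, and is independent of $\mathcal{F}_{\ell-1}$, the cross term vanishes in expectation. Squaring and taking expectations yields the clean recursion
\begin{equation*}
    \Delta_\ell = \Delta_{\ell-1} + \tfrac{T}{L}\,\E\bigl(\sqrt{V^{n,L}_{\ell-1}} - \sqrt{V^L_{\ell-1}}\bigr)^2, \qquad \Delta_\ell := \E(h^{n,L}_{\ell,1} - h^L_{\ell,1})^2.
\end{equation*}

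Next, the uniform lower bound $V^L_{\ell-1} \ge c > 0$ from Lemma~\ref{lem:h-L-moment-bound-first} gives $(\sqrt a - \sqrt b)^2 \le (a-b)^2/c$, so it suffices to control $\E(V^{n,L}_{\ell-1} - V^L_{\ell-1})^2$. I would split $V^{n,L}_{\ell-1} - V^L_{\ell-1} = A_{\ell-1} + B_{\ell-1}$, where $A_{\ell-1} := n^{-1}\sum_j[\phi^2(h^{n,L}_{\ell-1,j}) - \phi^2(h^L_{\ell-1,j})]$ captures the coupling-propagation error and $B_{\ell-1} := n^{-1}\sum_j[\phi^2(h^L_{\ell-1,j}) - V^L_{\ell-1}]$ is a centered empirical-average fluctuation. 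The i.i.d.\ structure of the $h^L_{\ell-1,j}$ under coupling gives $\E B_{\ell-1}^2 = \Var(\phi^2(h^L_{\ell-1,1}))/n \le C/n$, using Lipschitz $\phi$ together with the $L^4$ bound from Lemma~\ref{lem:h-L-moment-bound-first}. For $A_{\ell-1}$, the exchangeability bound $\E A_{\ell-1}^2 \le \E[\phi^2(h^{n,L}_{\ell-1,1}) - \phi^2(h^L_{\ell-1,1})]^2$ combined with $|\phi^2(x)-\phi^2(y)| \le K_1|x-y|(|\phi(x)|+|\phi(y)|)$, Cauchy--Schwarz, and uniform $L^4$ bounds on $h^{n,L}_{\ell-1,1}$ and $h^L_{\ell-1,1}$ (the former proved by an argument parallel to Lemma~\ref{lem:h-L-moment-bound-first}) yields $\E A_{\ell-1}^2 \le C\sqrt{\Delta^{(4)}_{\ell-1}}$, where $\Delta^{(4)}_\ell := \E(h^{n,L}_{\ell,1} - h^L_{\ell,1})^4$.

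The main obstacle is closing the recursion, because the bound on $\E A^2$ is controlled not by $\Delta$ itself but by $\sqrt{\Delta^{(4)}}$. I would handle this by propagating $\Delta_\ell$ and $\Delta^{(4)}_\ell$ simultaneously: expanding $(h^{n,L}_\ell - h^L_\ell)^4$ via the one-step increment above, all odd powers of $z_{\ell,i}$ vanish by independence from $\mathcal{F}_{\ell-1}$, leaving a coupled system of the schematic form
\begin{equation*}
    \Delta_\ell \le \Delta_{\ell-1} + \tfrac{C}{L}\bigl(\sqrt{\Delta^{(4)}_{\ell-1}} + \tfrac{1}{n}\bigr), \qquad \Delta^{(4)}_\ell \le \Delta^{(4)}_{\ell-1} + \tfrac{C}{L}\bigl(\Delta_{\ell-1}\sqrt{\Delta^{(4)}_{\ell-1}} + \tfrac{1}{n^2}\bigr),
\end{equation*}
whose coefficients are uniformly bounded thanks to the uniform higher-moment estimates on both $h^{n,L}$ and $h^L$. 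Two parallel applications of the discrete Gronwall inequality (Lemma~\ref{lem:Gronwall-discrete}) then close the system, delivering $\Delta^{(4)}_\ell \le C_2/n^2$ and hence $\Delta_\ell \le C_1/n$ uniformly in $\ell \in \{0,\dotsc,L\}$ and in $L$, with constants depending only on $T$, $K_1$, and $\phi(0)$.
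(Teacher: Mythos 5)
Your high-level plan is the same as the paper's: synchronously couple $\{h^{n,L}_{\ell,i}\}$ and $\{h^L_{\ell,i}\}$ via shared Gaussians, use the martingale structure to drop cross terms, decompose the variance error into a coupling-propagation part $A_{\ell-1}$ and an LLN fluctuation $B_{\ell-1}$ (the paper's decomposition is identical, with $A = V^{n,L}-V^L_{\mathrm{emp}}$, $B = V^L_{\mathrm{emp}}-V^L$ where $V^L_{\mathrm{emp}} = \frac1n\sum_j\phi^2(h^L_{\ell-1,j})$), and close with discrete Gronwall. The $B$-term is handled the same way. The gap is in how you control the $A$-term.

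The paper never subtracts at the $\phi^2$ level. It instead applies the reverse triangle inequality to the $\ell^2$-norms \emph{before} squaring:
\begin{equation*}
    \Bigl|\sqrt{V^{n,L}_{\ell-1}} - \sqrt{V^L_{\mathrm{emp},\ell-1}}\Bigr|
    \;=\; \frac{1}{\sqrt n}\Bigl|\,\|\phi(\vh^{n,L}_{\ell-1})\| - \|\phi(\vh^{L}_{\ell-1})\|\,\Bigr|
    \;\le\; \sqrt{\tfrac1n\textstyle\sum_j\bigl(\phi(h^{n,L}_{\ell-1,j})-\phi(h^{L}_{\ell-1,j})\bigr)^2},
\end{equation*}
so that after squaring and taking expectations, the Lipschitzness of $\phi$ (not of $\phi^2$) yields $K_1^2\,\Delta_{\ell-1}$ directly — no fourth moments, no lower bound on $V^L$ needed for this term, and the $\Delta$-recursion closes in one pass. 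You instead bound $\E A_{\ell-1}^2$ via $|\phi^2(x)-\phi^2(y)|\le K_1|x-y|(|\phi(x)|+|\phi(y)|)$ and Cauchy--Schwarz, which inevitably produces $\sqrt{\Delta^{(4)}_{\ell-1}}$ because $\phi^2$ is only locally Lipschitz. This is where the trouble starts.

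Having introduced $\Delta^{(4)}$, you then assert a coupled system in which the $\Delta^{(4)}$-update is $\Delta^{(4)}_{\ell-1} + \tfrac{C}{L}(\Delta_{\ell-1}\sqrt{\Delta^{(4)}_{\ell-1}} + n^{-2})$, but this is not what the calculation gives. Expanding $\E e_\ell^4$ for $e_\ell := h^{n,L}_{\ell,1}-h^L_{\ell,1}$ produces the drift term $6\tfrac{T}{L}\E[e_{\ell-1}^2\Lambda_{\ell-1}^2]$ with $\Lambda=\sqrt{V^{n,L}}-\sqrt{V^L}$, and the same route you use for $A$ (splitting off $A$, Cauchy--Schwarz, then the local-Lipschitz estimate for $\phi^2$) gives $\E[e_{\ell-1}^2 A_{\ell-1}^2] \lesssim \sqrt{\Delta^{(4)}_{\ell-1}}\,\sqrt{\E A_{\ell-1}^4}$ and $\E A_{\ell-1}^4 \lesssim \sqrt{\Delta^{(8)}_{\ell-1}}$, so the honest estimate involves $\Delta^{(8)}$ and the ladder never closes. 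Nothing in the argument as written produces the factor $\Delta_{\ell-1}$ in place of $\sqrt{\Delta^{(4)}_{\ell-1}}$. To repair the proof you should drop the detour through $\phi^2$ entirely and use the norm-level triangle inequality as above, which keeps the entire argument at second moments and makes the auxiliary $\Delta^{(4)}$ system unnecessary.
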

\begin{proof}[Proof of Proposition \ref{prop:convergence-n-h0}]
Since $h^{n,L}_{0,i}=h^{L}_{0,i}$, we have
\begin{align*}
    \E (h^{n,L}_{\ell,i}-h^{L}_{\ell,i})^2
    & = \E \left[\sum_{u=1}^\ell \left( \frac{\norm{\phi(h^{n,L}_{u-1})}}{\sqrt{n}}  - \sqrt{\E \phi^2(h^L_{u-1,i})} \right) \frac{1}{\sqrt{L}}z_{u,i}\right]^2 \\
    & = \frac{C}{L} \E \sum_{u=1}^\ell \left( \frac{\norm{\phi(h^{n,L}_{u-1})}}{\sqrt{n}}  - \sqrt{\E \phi^2(h^L_{u-1,i})} \right)^2,
\end{align*} 
where the second line uses the fact that $\{z_{\ell,i}\}_\ell$ are independent standard normal random variables.
By adding and subtracting terms, we have
\begin{align}
    \E \left( \frac{\norm{\phi(h^{n,L}_{u-1})}}{\sqrt{n}}  - \sqrt{\E \phi^2(h^L_{u-1,i})} \right)^2
    & \le 2 \E \left( \sqrt{\frac{1}{n} \sum_{j=1}^n \phi^2(h^{n,L}_{u-1,j})} - \sqrt{\frac{1}{n} \sum_{j=1}^n \phi^2(h^{L}_{u-1,j})} \right)^2 \notag \\
    & \quad + 2 \E \left( \sqrt{\frac{1}{n} \sum_{j=1}^n \phi^2(h^{L}_{u-1,j})} - \sqrt{\E \phi^2(h^L_{u-1,i})} \right)^2. \label{eq:first-forward-pf-1}
\end{align}
For the first term on the right hand side, using Minkowski's inequality, we have
\begin{align*}
    &\E \left( \sqrt{\frac{1}{n} \sum_{j=1}^n \phi^2(h^{n,L}_{u-1,j})} - \sqrt{\frac{1}{n} \sum_{j=1}^n \phi^2(h^{L}_{u-1,j})} \right)^2 \\
    & \leq \E \left( \sqrt{\frac{1}{n} \sum_{j=1}^n \left( \phi(h^{n,L}_{u-1,j}) - \phi(h^{L}_{u-1,j}) \right)^2} \right)^2\\
    & \leq \frac{K_1^2}{n}\sum_{j=1}^{n} \E(h^{n,L}_{u-1,j}-h^L_{u-1,j})^2 = K_1^2 \E(h^{n,L}_{u-1,i}-h^L_{u-1,i})^2.
\end{align*}
For the second term on the right hand side of \eqref{eq:first-forward-pf-1}, we have
\begin{align*}
    &\E \left( \sqrt{\frac{1}{n} \sum_{j=1}^n \phi^2(h^{L}_{u-1,j})} - \sqrt{\E \phi^2(h^L_{u-1,i})} \right)^2\\
    &=\E \left(\frac{\frac{1}{n} \sum_{j=1}^n \phi^2(h^{L}_{u-1,j}) - \frac{1}{n} \sum_{j=1}^n \E\phi^2(h^{L}_{u-1,j})}{\sqrt{\frac{1}{n} \sum_{j=1}^n \phi^2(h^{L}_{u-1,j})} + \sqrt{\E \phi^2(h^L_{u-1,i})}}\right)^2\\
    &\leq C \frac{1}{n^2}\sum_{j=1}^{n} \E \left[\phi^2(h^{L}_{u-1,j}) - \E \phi^2(h^{L}_{u-1,j})\right]^2
    \leq C/n,
\end{align*}
where the last line uses the independence of $\{h^{L}_{u-1,j}\}_j$ and Lemma \ref{lem:h-L-moment-bound-first}. 
Therefore, we obtain
\begin{align*}
    \E (h^{n,L}_{\ell,i}-h^{L}_{\ell,i})^2
    \leq \frac{C}{L} \sum_{u=0}^{\ell-1} \E (h^{n,L}_{u,i}-h^{L}_{u,i})^2 
    +\frac{C}{n}.
\end{align*}
By discrete Gronwall's inequality (Lemma \ref{lem:Gronwall-discrete}), we have the desired result.
\end{proof}

Next, to analyze the limit of $h^L_{\ell,i}$ as $L \to \infty$, we omit the subscript $i$ and view 
\begin{equation*}
    \frac{1}{\sqrt{L}} z_{\ell} = W\left(\frac{\ell}{L}\right) - W\left(\frac{\ell-1}{L}\right)
\end{equation*}
for a standard Brownian motion $w$. 
Then we can write $h^L_{\ell} = h^{(L)}_{\ell/L}$, where
\begin{equation*}
    dh^{(L)}_t = \sqrt{\E \phi^2(h^{(L)}_{t_L})} \,dW_t
\end{equation*}
and $t_L := \frac{\lfloor tL \rfloor}{L}$ for $t \in [0,1]$.
Consider the McKean--Vlasov process
\begin{equation*}
    d H_t = \sqrt{\E \phi^2(H_t)} \,dW_t.
\end{equation*}
Then $\{h^{(L)}_t\}$ is just the Euler--Maruyama discretization for $\{h_t\}$ with step size $\Delta t = 1/L$.

The following is a standard result (see e.g.\ \citep[Section I.1]{Sznitman1991}) and we only provide a sketch of the proof.

\begin{proposition}
    \label{prop:well-posed-h0}
    There exists a unique $\{H_t\}$ and 
    \begin{equation}
        \label{eq:h-moment-bound-0}
        \sup_{0 \le t \le 1} \E [H_t^2] < \infty, \quad \E [H_t-H_{t_L}]^2 \le C(t-t_L) \le C/L.
    \end{equation}
\end{proposition}

\begin{proof}[Proof of Proposition \ref{prop:well-posed-h0}]
    The evolution of $H_t$ can be written as
    \begin{align*}
        dH_t=\sigma(U_t)\,d W_t, \quad H_0 \sim \Gaus(0,\norm{x}^2/d),
    \end{align*}
    where $U_t=\text{Law}(H_t)$ and
    \begin{equation*}
        \sigma(\nu) := \sqrt{\int \phi^2(x)\,\nu(dx)}
    \end{equation*}    
    for $\nu \in \mathcal{P}(\mathbb{R})$.
    Note that for any $X \sim U \in \mathcal{P}(\mathbb{R})$ and $Y \sim \nu \in \mathcal{P}(\mathbb{R})$, using Minkowski's inequality we have
    \begin{align*}
        |\sigma(U)-\sigma(\nu)| = |\sqrt{\E \phi^2(X)} - \sqrt{\E \phi^2(Y)}| \le \sqrt{\E[\phi(X)-\phi(Y)]^2}.
    \end{align*}
    By Lipschitz property of $\phi$, we have
    \begin{equation}
        \label{eq:uniqueness-sigma-Lipschitz}
        |\sigma(U)-\sigma(\nu)| \le C W_2(U,\nu),
    \end{equation}   
    where $W_2(\cdot,\cdot)$ is the Wasserstein metric on $\mathcal{P}(\mathbb{R})$.
    Therefore, $\sigma$ is a Lipschitz function.
    Now let 
    \begin{equation*}
        \mathcal{M}:=\{U \in \mathcal{P}(\mathbb{C}([0,1]:\mathbb{R})) : \sup_{0 \le t \le 1} \int x^2\,U_t(dx) < \infty \}.
    \end{equation*}
    For $U \in \mathcal{M}$, consider the process
    \begin{equation*}
        dX_t = \sigma(U_t)\,dW_t, \quad X_0=H_0.
    \end{equation*}
    It is well-defined and Law$(X) \in \mathcal{M}$, by Lipschitz property of $\phi$.
    Denote the map from $U \in \mathcal{M}$ to Law$(X) \in \mathcal{M}$ by $\Gamma$.
    For $U,\nu \in \mathcal{M}$, denote the Wasserstein metric by 
    \begin{equation*}
        W_{2,t}(U,\nu) := \inf \{ \left(\E[\sup_{u \le t} |X_u-Y_u|^2]\right)^{1/2} : \mbox{Law}(X)=U, \mbox{Law}(Y)=\nu\}.
    \end{equation*}
    Now given $U,\nu\in\mathcal{M}$, let
    \begin{equation*}
        dX_t = \sigma(U_t)\,d W_t, \quad dY_t = \sigma(\nu_t)\,d W_t, \quad X_0=Y_0=H_0.
    \end{equation*}
    Then, using Doob's maximal inequality, we have
    \begin{align*}
        W_{2,t}^2(\Gamma(U),\Gamma(\nu)) & \le \E[\sup_{u \le t} |X_u-Y_u|^2] 
        = \E[\sup_{u \le t} |\int_0^u [\sigma(U_s) - \sigma(\nu_s)] \,d W_s|^2] 
        \\
        & \le 4 \E |\int_0^t [\sigma(U_s) - \sigma(\nu_s)] \,d W_s|^2 = 4 \int_0^t [\sigma(U_s) - \sigma(\nu_s)]^2 \,ds 
        \\
        & \le C \int_0^t W_2^2(U_s,\nu_s) \,ds 
        \le C \int_0^t W_{2,s}^2(U,\nu) \,ds.
    \end{align*}    
    Existence and uniqueness of $\{H_t\}$ then follows from standard arguments (cf.\ \citep[Section I.1]{Sznitman1991}).  
    The first estimate in \eqref{eq:h-moment-bound-0} follows from standard arguments on observing that $\phi$ is Lipscthiz and hence has linear growth.
    From this we immediately get the second estimate in \eqref{eq:h-moment-bound-0}.
\end{proof}

The following result quantifies the error as $L \to \infty$.
This is not the stronger result one would usually get for Euler--Maruyama approximations.
But it is sufficient for our use and also will be used in later inductive arguments for traning steps.

\begin{proposition}
    \label{prop:convergence-L-h0}
    For all $L \ge 1$,
    \begin{equation*}
        \sup_{\ell=0,1,\dotsc,L} \E [h_\ell^L - H_{\ell/L}]^2 = \sup_{\ell=0,1,\dotsc,L} \E [h_{\ell/L}^{(L)} - H_{\ell/L}]^2 \le C/L.
    \end{equation*}
\end{proposition}

\begin{proof}[Proof of Proposition \ref{prop:convergence-L-h0}]
    Let $s_L := \frac{\lfloor sL \rfloor}{L}$.
    Since $h_0^{(L)} = H_0$, we have
    \begin{align*}
        & \E [h_\ell^L - H_{\ell/L}]^2 = \E [h_{\ell/L}^{(L)} - H_{\ell/L}]^2 \\
        & = \int_0^{\ell/L} |\sqrt{\E \phi^2(h_{s_L}^{(L)})} - \sqrt{\E \phi^2(H_s)}|^2\,ds \\
        & \le 2\int_0^{\ell/L} |\sqrt{\E \phi^2(h_{s_L}^{(L)})} - \sqrt{\E \phi^2(H_{s_L})}|^2\,ds + 2\int_0^{\ell/L} |\sqrt{\E \phi^2(H_{s_L})} - \sqrt{\E \phi^2(H_s)}|^2\,ds \\
        & \le \frac{C}{L} \sum_{u=0}^{\ell-1} \E [h_u^L - H_{u/L}]^2 + \frac{C}{L},
    \end{align*}
    where the last line uses the Lipschitz property in \eqref{eq:uniqueness-sigma-Lipschitz} and Lemma \ref{prop:well-posed-h0}.
    It then follows from discrete Gronwall's inequality (Lemma \ref{lem:Gronwall-discrete}) that
    \begin{equation*}
        \E [h_\ell^L - H_{\ell/L}]^2 \le \frac{C}{L} e^{C\ell/L}.
    \end{equation*}
    This completes the proof.
\end{proof}

Combining Propositions \ref{prop:convergence-n-h0} and \ref{prop:convergence-L-h0}, we get the forward SDE characterization in Theorem~\ref{thm:init-forward-backward-sde}.

\subsection{First Backward}

For each layer $\ell$, define
\begin{align}
    u_{\ell-1}
    :=
    \frac{\phi(h_{\ell-1})}{\|\phi(h_{\ell-1})\|},
    \qquad
    P_{\ell-1}:=u_{\ell-1}u_{\ell-1}^\top,
    \qquad
    z_\ell:=W_\ell u_{\ell-1}.
\end{align}
On the event $\phi(h_{\ell-1})=0$, we use the convention $u_{\ell-1}=0$ and $P_{\ell-1}=0$. Conditionally on the forward process, we have
\[
    W_\ell\mid\mathcal F_\ell
    \overset d=
    z_\ell u_{\ell-1}^\top
    +
    \widetilde W_\ell(I-P_{\ell-1}).
\]
Therefore, the backward process can be written as
\[
    g_{\ell-1}
    =
    g_\ell
    +
    \frac1{\sqrt L}D_{\ell-1}b_\ell,
    \label{eq:g-n-L}
\]
where
\[
    D_{\ell-1}:=\diag(\phi'(h_{\ell-1})),
    \qquad
    b_\ell
    :=
    \frac{z_\ell^\top g_\ell}{\sqrt n}u_{\ell-1}
    +
    \frac1{\sqrt n}
    (I-P_{\ell-1})\widetilde W_\ell^\top g_\ell.
\]

We compare $g_\ell$ with the auxiliary backward process with decoupled backward weights $\widetilde W_{\ell}$
\[
    \bar g_{\ell-1}
    =
    \bar g_\ell
    +
    \frac1{\sqrt L}D_{\ell-1}\bar b_\ell,
    \qquad
    \bar b_\ell
    :=
    \frac1{\sqrt n}\widetilde W_\ell^\top \bar g_\ell,
\]
with terminal condition $g_L=\bar g_L=v$, where $v$ is independent of the forward process.

\begin{proposition}
    \label{prop:convergence-n-g0}
    Assume $\phi'$ and $\phi''$ are bounded. Then
    \[
        \sup_{L\ge 1}
        \sup_{\ell=0,\dots,L}
        \frac{1}{n}
        \E\|g_\ell-\bar g_\ell\|^2
        \le
        \frac Cn.
    \]
\end{proposition}

\begin{proof}
Define
\[
    \delta_\ell
    :=
    \frac{1}{n}
    \E\|g_\ell-\bar g_\ell\|^2.
\]
Then
\[
    g_{\ell-1}-\bar g_{\ell-1}
    =
    g_\ell-\bar g_\ell
    +
    \frac1{\sqrt L}
    D_{\ell-1}(b_\ell-\bar b_\ell).
\]
Therefore
\begin{align}
    \delta_{\ell-1}
    =
    \delta_\ell
    +
    \frac{2}{n\sqrt L}
    \E\left[
        (g_\ell-\bar g_\ell)^\top
        D_{\ell-1}(b_\ell-\bar b_\ell)
    \right]
    +
    \frac1{Ln}
    \E\left[
        \|D_{\ell-1}(b_\ell-\bar b_\ell)\|^2
    \right].
\end{align}

We first bound the quadratic term. Since $\|D_{\ell-1}\|\le C$,
\[
    \frac{1}{n}
    \E\left[
        \|D_{\ell-1}(b_\ell-\bar b_\ell)\|^2
    \right]
    \le
    \frac Cn
    \E\|b_\ell-\bar b_\ell\|^2
    \le
    3I_1+3I_2+3I_3,
\]
where
\begin{align*}
    I_1
    &:=
    \frac{1}{n}
    \E\left[
        \left(
            \frac{z_\ell^\top g_\ell}{\sqrt n}
        \right)^2
        \|u_{\ell-1}\|^2
    \right],\\
    I_2
    &:=
    \frac{1}{n}
    \E\left\|
        \frac1{\sqrt n}
        \widetilde W_\ell^\top(g_\ell-\bar g_\ell)
    \right\|^2,\\
    I_3
    &:=
    \frac{1}{n}
    \E\left\|
        \frac1{\sqrt n}
        P_{\ell-1}\widetilde W_\ell^\top g_\ell
    \right\|^2.
\end{align*}
When $\phi(h_{\ell-1})=0$, both $I_1$ and $I_3$ vanish by convention. Otherwise,
$\|u_{\ell-1}\|=1$.

For $I_1$, Lemma~\ref{lem:backward-z-g-bound} gives
\[
    I_1
    =
    \frac1{n^2}
    \E[(z_\ell^\top g_\ell)^2]
    \le
    \frac Cn.
\]

For $I_2$, conditioning on all variables except the current innovation
$\widetilde W_\ell$ gives
\[
    \E_{\widetilde W_\ell}
    \left[
        \left\|
            \frac1{\sqrt n}
            \widetilde W_\ell^\top(g_\ell-\bar g_\ell)
        \right\|^2
    \right]
    =
    \|g_\ell-\bar g_\ell\|^2.
\]
Therefore
\[
    I_2
    =
    \frac{1}{n}
    \E\|g_\ell-\bar g_\ell\|^2
    =
    \delta_\ell.
\]

For $I_3$, conditionally on $g_\ell$ and $P_{\ell-1}$,
\[
    \frac1{\sqrt n}\widetilde W_\ell^\top g_\ell
    \overset d=
    \frac{\|g_\ell\|}{\sqrt n}\widetilde z_\ell,
    \qquad
    \widetilde z_\ell\sim N(0,I).
\]
Since $P_{\ell-1}$ has rank one,
\[
    \E\|P_{\ell-1}\widetilde z_\ell\|^2
    =
    \Tr(P_{\ell-1})
    =
    1.
\]
Thus
\[
    I_3
    =
    \frac1{n^2}
    \E\|g_\ell\|^2
    \le
    \frac Cn,
\]
where we used the bound $\E\|g_\ell\|^2\le Cn$ from Lemma~\ref{lem:backward-z-g-bound}.

Combining the estimates for $I_1,I_2,I_3$, we obtain
\[
    \frac1{Ln}
    \E\left[
        \|D_{\ell-1}(b_\ell-\bar b_\ell)\|^2
    \right]
    \le
    \frac CL\delta_\ell
    +
    \frac{C}{Ln}.
\]

For the cross term, Lemma~\ref{lem:backward-cross-term} gives
\[
    \frac{2}{n\sqrt L}
    \E\left[
        (g_\ell-\bar g_\ell)^\top
        D_{\ell-1}(b_\ell-\bar b_\ell)
    \right]
    \le
    \frac CL\delta_\ell
    +
    \frac{C}{Ln}.
\]
Hence
\[
    \delta_{\ell-1}
    \le
    \left(1+\frac CL\right)\delta_\ell
    +
    \frac{C}{Ln}.
\]
Since $\delta_L=0$, the backward discrete Gronwall inequality gives
\[
    \sup_{\ell=0,\dots,L}\delta_\ell
    \le
    \frac Cn.
\]
This proves the proposition.
\end{proof}
By exchangeability, Proposition~\ref{prop:convergence-n-g0} implies that, for a
typical coordinate $i$,
\[
    \sup_{L\ge 1}
    \sup_{\ell=0,\dots,L}
    \E\left[
        (g_{\ell,i}-\bar g_{\ell,i})^2
    \right]
    \le
    \frac Cn.
\]
Therefore, at the coordinate level, the true backward process and the auxiliary decoupled backward process are asymptotically equivalent as $n\to\infty$. Thus, to identify the limiting backward dynamics, it suffices to analyze the decoupled process $\bar g_\ell$.

Similar to the analysis of the first forward pass, we add superscripts and write
each coordinate as $\bar g_{\ell,i}^{n,L}$. For fixed $L$, define the
infinite-width discrete backward process by
\begin{equation}
    \label{eq:g-L-backward}
    \bar g_{\ell-1,i}^{L}
    =
    \bar g_{\ell,i}^{L}
    +
    \frac1{\sqrt L}
    \phi'(h_{\ell-1,i}^{L})
    \sqrt{\E[(\bar g_{\ell,i}^{L})^2]}
    \,\widetilde z_{\ell,i},
    \qquad
    \bar g_{L,i}^{L}=v_i,
\end{equation}
where $\{\widetilde z_{\ell,i}\}_{\ell,i}$ are i.i.d. standard Gaussian, independent of the limiting forward process. The following result shows $g_{\ell,i}^{n,L}$ converges to $\bar g_{\ell,i}^{L}$ as $n\to\infty$, and the proof is omitted here since it is the backward analog of Proposition~\ref{prop:convergence-n-h0}.

\begin{proposition}
\label{prop:convergence-n-bar-g0}
For each $i\in\mathbb N$,
\[
    \sup_{L\ge 1}
    \sup_{\ell=0,\dots,L}
    \E(
            \bar g_{\ell,i}^{n,L}
            -
            \bar g_{\ell,i}^{L}
        )^2
    \le C/n.
\]
\end{proposition}

Next, we analyze the limit of $\bar g_{\ell,i}^{L}$ as $L\to\infty$. We omit the
coordinate subscript $i$ and write
\[
    \frac1{\sqrt L}\widetilde z_\ell
    =
    \widetilde w\left(\frac{\ell-1}{L}\right)
    -
    \widetilde w\left(\frac{\ell}{L}\right),
\]
where $\widetilde w$ is a Brownian motion run backward in time. Thus
$\bar g_\ell^L$ is naturally viewed as the Euler--Maruyama approximation of the
backward-time McKean--Vlasov equation
\[
    dG_t
    =
    \phi'(H_t)\sqrt{\E[G_t^2]}\,d\widetilde w_t,
    \qquad
    G_1\sim \Gaus(0,1).
\]

\begin{lemma}
    \label{lem:g-L-moment-bound-first}
    For each $i\in\mathbb N$,
    \[
        \sup_{L\ge 1}
        \sup_{\ell=0,\dots,L}
        \E[(\bar g_{\ell,i}^{L})^4]
        <
        \infty,
    \]
    and
    \[
        \inf_{L\ge 1}
        \inf_{\ell=0,\dots,L}
        \E[(\bar g_{\ell,i}^{L})^2]
        >
        0.
    \]
\end{lemma}

\begin{proof}
The fourth-moment bound follows by the same induction as in
Lemma~\ref{lem:h-L-moment-bound-first}. Indeed, from
\[
    \bar g_{\ell-1,i}^{L}
    =
    \bar g_{\ell,i}^{L}
    +
    \frac1{\sqrt L}
    \phi'(h_{\ell-1,i}^{L})
    \sqrt{\E[(\bar g_{\ell,i}^{L})^2]}
    \widetilde z_{\ell,i},
\]
the boundedness of $\phi'$ and Gaussian moment estimates give
\[
    \E[(\bar g_{\ell-1,i}^{L})^4]
    \le
    \left(1+\frac CL\right)
    \E[(\bar g_{\ell,i}^{L})^4]
    +
    \frac CL.
\]
Since $\bar g_{L,i}^{L}=v_i$ and $\E[v_i^4]<\infty$, discrete Gronwall gives
\[
    \sup_{L\ge1}
    \sup_{\ell=0,\dots,L}
    \E[(\bar g_{\ell,i}^{L})^4]
    \le C.
\]

For the lower second-moment bound, using the same recursion and the centering of
$\widetilde z_{\ell,i}$,
\[
    \E[(\bar g_{\ell-1,i}^{L})^2]
    =
    \E[(\bar g_{\ell,i}^{L})^2]
    +
    \frac1L
    \E[(\phi'(h_{\ell-1,i}^{L}))^2]
    \E[(\bar g_{\ell,i}^{L})^2].
\]
Therefore
\[
    \E[(\bar g_{\ell-1,i}^{L})^2]
    \ge
    \E[(\bar g_{\ell,i}^{L})^2].
\]
Iterating backward from $\bar g_{L,i}^{L}=v_i$ gives
\[
    \E[(\bar g_{\ell,i}^{L})^2]
    \ge
    \E[v_i^2]
    >
    0.
\]
This proves the lemma.
\end{proof}

\begin{proposition}
    \label{prop:well-posed-g0}
    There exists a unique backward-time process $\{G_t:0\le t\le 1\}$ satisfying
    \[
        dG_t
        =
        \phi'(H_t)\sqrt{\E[G_t^2]}\,d\widetilde w_t,
        \qquad
        G_1\sim \Gaus(0,1).
    \]
    Moreover,
    \begin{equation}
        \label{eq:g-moment-bound-0}
        \sup_{0\le t\le 1}\E[G_t^2]<\infty,
        \qquad
        \E[(G_t-G_s)^2]\le C|t-s|,
        \qquad
        s,t\in[0,1].
    \end{equation}
    In particular,
    \[
        \E[(G_t-G_{\tilde t_L})^2]
        \le
        C|\tilde t_L-t|
        \le
        \frac CL,
    \]
    where $\tilde t_L:=\lceil tL\rceil/L$.
\end{proposition}

\begin{proposition}
    \label{prop:convergence-L-g0}
    For all $L\ge 1$,
    \[
        \sup_{\ell=0,\dots,L}
        \E\left[
            \left(
                \bar g_\ell^{L}
                -
                G_{\ell/L}
            \right)^2
        \right]
        \le
        \frac CL.
    \]
\end{proposition}

\begin{proposition}
\label{prop:convergence-nL-g0}
For each $i\in\mathbb N$,
\[
    \sup_{\ell=0,\dots,L}
    \E\left[
        \left(
            g_{\ell,i}^{n,L}
            -
            G_{\ell/L}
        \right)^2
    \right]
    \le
    C\left(
        \frac{1}{n}+\frac1L
    \right).
\]
\end{proposition}

\begin{proof}
By the triangle inequality,
\begin{align*}
    \E\left[
        \left(
            g_{\ell,i}^{n,L}
            -
            G_{\ell/L}
        \right)^2
    \right]
    \le
    3\E\left[
        \left(
            g_{\ell,i}^{n,L}
            -
            \bar g_{\ell,i}^{n,L}
        \right)^2
    \right]
    +
    3\E\left[
        \left(
            \bar g_{\ell,i}^{n,L}
            -
            \bar g_{\ell,i}^{L}
        \right)^2
    \right]  
    +
    3\E\left[
        \left(
            \bar g_{\ell,i}^{L}
            -
            G_{\ell/L}
        \right)^2
    \right].
\end{align*}
The first term is bounded by $C/n$ by Proposition~\ref{prop:convergence-n-g0}
and exchangeability. The second term is bounded by $C/n$ by
Proposition~\ref{prop:convergence-n-bar-g0}. The third term is bounded by
$C/L$ by Proposition~\ref{prop:convergence-L-g0}. 
\end{proof}

Proposition~\ref {prop:convergence-nL-g0} provides the first backward SDE characterization stated in Theorem~\ref{thm:init-forward-backward-sde}.

\subsection{Auxiliary Estimates for the Backward Pass}

We first have the following preparatory moment bound.

\begin{lemma}
    \label{lem:backward-z-g-bound}
Assume that $\phi'$ is bounded. Then there exists a constant
$C>0$ such that for every $\ell$,
\begin{align}
    \mathbb E [(z_{\ell}^{\top}g_{\ell})^2 \mid \mathcal{F}_{\ell-1}] \le Cn,
    \qquad
    \mathbb E\|g_{\ell}\|^2 \le Cn.
\end{align}
\end{lemma}
\begin{proof}
By the tower property,
\begin{align}
    \label{eq:backward-z-g-bound-pf}
    \mathbb E [(z_{\ell}^{\top}g_{\ell})^2 \mid \mathcal{F}_{\ell-1}]
    =
    \mathbb E\left[ u_{\ell-1}^{\top} W_\ell^\top \mathbb E \left(g_\ell g_\ell^\top \mid \mathcal{F}_\ell \right) W_\ell u_{\ell-1} \mid \mathcal{F}_{\ell-1} \right].
\end{align}

By the definition of $g_{\ell}$, we have $g_{\ell} = H_{\ell+1}^{\top} g_{\ell+1}$ with
\begin{align}
    H_k
    :=
    \frac{\partial h_k}{\partial h_{k-1}}
    =
    I+\frac{1}{\sqrt{Ln}}W_kD_{k-1},
    \qquad
    D_{k-1}:=\operatorname{diag}(\phi'(h_{k-1})).
\end{align}
Then we can write
\begin{align}
    g_\ell
    =
    H_{\ell+1}^{\top}\cdots H_L^{\top}g_L,
\end{align}
Since $\|D_{k-1}\|\le K$, by conditioning successively on
$W_L,W_{L-1},\ldots,W_{\ell+1}$, we obtain
\begin{align}
    \mathbb E[g_\ell g_\ell^\top\mid \mathcal F_\ell]
    \preceq
    \left(1+\frac{1}{L}K^2\right)^{L-\ell} I.
\end{align}
Applying this to \eqref{eq:backward-z-g-bound-pf} gives
\begin{align}
    \mathbb E [(z_{\ell}^{\top}g_{\ell})^2 \mid \mathcal{F}_{\ell-1}] \le
    \left(1+\frac{1}{L}K^2\right)^{L-\ell} u_{\ell-1}^{\top}
    \mathbb E\left[
    W_\ell^\top W_\ell \mid \mathcal{F}_{\ell-1}
    \right] u_{\ell-1}.
\end{align}
Since $W_\ell$ is independent of $\mathcal F_{\ell-1}$ and $\|u_{\ell-1}\|=1$,
\begin{align}
    u_{\ell-1}^{\top}
    \mathbb E\left[
    W_\ell^\top W_\ell \mid \mathcal{F}_{\ell-1}
    \right] u_{\ell-1}
    =
    n.
\end{align}
Therefore,
\begin{align}
    \mathbb E [(z_{\ell}^{\top}g_{\ell})^2 \mid \mathcal{F}_{\ell-1}]
    \le
    n\left(1+\frac{1}{L}K^2\right)^{L-\ell}
    \le
    ne^{K^2}.
\end{align}

Finally, the normalized second moment of $g_\ell$ follows similarly:
\begin{align}
    \mathbb E\|g_\ell\|^2
    =
    \mathbb E\operatorname{Tr}(g_\ell g_\ell^{\top})
    \le
    n\left(1+\frac{1}{L}K^2\right)^{L-\ell}
    \le
    n e^{K^2}.
\end{align}
The proof is complete.
\end{proof}

\begin{lemma}[Cross-term estimate]\label{lem:backward-cross-term}
For every $\ell\in[L]$,
\[
    \frac{1}{\sqrt{L}}
    \E[
        (g_\ell-\bar g_\ell)^\top
        D_{\ell-1}(b_\ell-\bar b_\ell)
    ]
    \le
    \frac CL
    \E\|g_\ell-\bar g_\ell\|^2
    +
    \frac CL.
\]
Equivalently,
\[
    \frac{1}{N\sqrt{L}}
    \E[
        (g_\ell-\bar g_\ell)^\top
        D_{\ell-1}(b_\ell-\bar b_\ell)
    ]
    \le
    \frac CL\delta_\ell
    +
    \frac{C}{Ln}.
\]
\end{lemma}

\begin{proof}
Let
\[
    v_{\ell-1}:=D_{\ell-1}u_{\ell-1}.
\]
Then $\|v_{\ell-1}\|\le C$. Conditioning on $\mathcal F_{\ell-1}$, the terms
in $b_\ell-\bar b_\ell$ that are linear in $\widetilde W_\ell$ are centered.
Thus, only the rank-one term contributes:
\[
    \frac{1}{\sqrt{L}}
    \E[
        (g_\ell-\bar g_\ell)^\top D_{\ell-1}(b_\ell-\bar b_\ell)
        \mid
        \mathcal F_{\ell-1}
    ]
    =
    \frac{1}{\sqrt{Ln}}
    v_{\ell-1}^\top
    \E[
        (g_\ell-\bar g_\ell) g_\ell^\top z_\ell
        \mid
        \mathcal F_{\ell-1}
    ].
\]
Conditionally on $\mathcal F_{\ell-1}$, $z_\ell$ is standard Gaussian and
\[
    h_\ell
    =
    h_{\ell-1}
    +
    \frac{1}{\sqrt{L}}
    \frac{\|\phi(h_{\ell-1})\|}{\sqrt n}z_\ell,
\]
with
\[
    \frac{\partial h_\ell}{\partial z_\ell}
    =
    \frac{1}{\sqrt{L}}
    \frac{\|\phi(h_{\ell-1})\|}{\sqrt n}I.
\]
Gaussian integration by parts gives
\begin{align*}
    &\frac{1}{\sqrt{Ln}}
    v_{\ell-1}^\top
    \E[
        (g_{\ell}-\bar g_{\ell}) g_\ell^\top z_\ell
        \mid
        \mathcal F_{\ell-1}
    ]\\
    =&
    \frac{1}{Ln}
    \|\phi(h_{\ell-1})\|
    v_{\ell-1}^\top
    \E[
        (J_\ell-\bar J_\ell)g_\ell
        +
        \Tr(J_\ell)(g_{\ell}-\bar g_{\ell})
        \mid
        \mathcal F_{\ell-1}
    ].
\end{align*}
where
\begin{align}
    J_{\ell}:=\frac{\partial g_{\ell}}{\partial h_{\ell}},
    \qquad
    \bar J_{\ell}:=\frac{\partial \bar g_{\ell}}{\partial h_{\ell}}.
\end{align}
Taking absolute values and then expectation, the cross term is bounded by
\[
    \frac{C}{Ln}
    \E[
        \|\phi(h_{\ell-1})\|
        |v_{\ell-1}^{\top} (J_\ell-\bar J_\ell) g_\ell|
    ]
    +
    \frac{C}{Ln}
    \E[
        \|\phi(h_{\ell-1})\|
        |\Tr(J_\ell)|\,|v_{\ell-1}^\top (g_\ell-\bar g_{\ell})|
    ],
\]

For the first Jacobian term, condition on $\mathcal F_{\ell-1}$, Cauchy-Schwarz gives
\[
    \E[
        |v_{\ell-1}^{\top} (J_\ell-\bar J_\ell) g_\ell|
        \mid
        \mathcal F_{\ell-1}
    ]
    \le
    \left(
        \E[
            \|(J_\ell-\bar J_\ell)^\top v_{\ell-1}\|^2
            \mid
            \mathcal F_{\ell-1}
        ]
    \right)^{1/2}
    \left(
        \E[
            \|g_\ell\|^2
            \mid
            \mathcal F_{\ell-1}
        ]
    \right)^{1/2}.
\]
By Lemma~\ref{lem:J-diff-direction} and Lemma~\ref{lem:backward-z-g-bound},
\[
    \E[
        |v_{\ell-1}^{\top} (J_\ell-\bar J_\ell) g_\ell|
        \mid
        \mathcal F_{\ell-1}
    ]
    \le C\sqrt{n}.
\]
Therefore,
\[
    \frac{C}{Ln}
    \E[
        \|\phi(h_{\ell-1})\|
        |v_{\ell-1}^{\top} (J_\ell-\bar J_\ell) g_\ell|
    ]
    \le
    \frac{C}{L\sqrt n}
    \E\|\phi(h_{\ell-1})\|
    \le
    \frac CL,
\]
where the last step uses Lemma~\ref{lem:h-L-moment-bound-first}.

For the trace term, condition again on $\mathcal F_{\ell-1}$, using Cauchy-Schwarz,
\begin{align*}
    &\E[
        |\Tr(J_\ell)|\,|v_{\ell-1}^\top (g_{\ell}-\bar g_{\ell})|
        \mid
        \mathcal F_{\ell-1}
    ]\\
    &\le
    \left(
        \E[
            (\Tr J_\ell)^2
            \mid
            \mathcal F_{\ell-1}
        ]
    \right)^{1/2}
    \left(
        \E[
            |v_{\ell-1}^\top (g_{\ell}-\bar g_{\ell})|^2
            \mid
            \mathcal F_{\ell-1}
        ]
    \right)^{1/2}.
\end{align*}
Using Lemma~\ref{lem:trace-J} and $\|v_{\ell-1}\|\le C$,
\[
    \E[
        |\Tr(J_\ell)|\,|v_{\ell-1}^\top (g_{\ell}-\bar g_{\ell})|
        \mid
        \mathcal F_{\ell-1}
    ]
    \le
    C\sqrt{n}
    \left(
        \E[
            \|g_{\ell}-\bar g_{\ell}\|^2
            \mid
            \mathcal F_{\ell-1}
        ]
    \right)^{1/2}.
\]
Hence, by Young's inequality, the trace contribution is 
\begin{align}
    &\frac{C}{L\sqrt n}
    \E\left[
        \|\phi(h_{\ell-1})\|
        \left(
            \E[
                \|g_{\ell}-\bar g_{\ell}\|^2
                \mid
                \mathcal F_{\ell-1}
            ]
        \right)^{1/2}
    \right]                                      \notag\\
    &\le
    \frac CL
    \E\left[
        \frac{\|\phi(h_{\ell-1})\|^2}{n}
    \right]
    +
    \frac CL
    \E[
        \|g_{\ell}-\bar g_{\ell}\|^2
    ]                                            \notag\\
    &\le
    \frac CL
    +
    \frac CL\E\|g_{\ell}-\bar g_{\ell}\|^2.
\end{align}
Combining the two estimates gives
\[
    \frac{1}{\sqrt{L}}
    \E[
        (g_{\ell}-\bar g_{\ell})^\top D_{\ell-1}(b_\ell-\bar b_\ell)
    ]
    \le
    \frac CL\E\|g_{\ell}-\bar g_{\ell}\|^2+\frac CL.
\]
Dividing by $n$ gives the normalized estimate.
\end{proof}

\begin{lemma}\label{lem:J-diff-direction}
Fix $\ell\in[L]$. Then
\[
    \E\left[
        \|(J_\ell-\bar J_\ell)^\top v_{\ell-1}\|^2
        \,\middle|\,
        \mathcal F_{\ell-1}
    \right]
    \le C.
\]
\end{lemma}

\begin{proof}
For $k\ge \ell$, define
\begin{align}
    H_k:=I+\frac{1}{\sqrt{Ln}}W_{k+1}D_k,
    \qquad
    \widetilde H_k:=I+\frac{1}{\sqrt{Ln}}\widetilde W_{k+1}D_k.  
\end{align}
Then
\begin{align}
    g_k=H_k^\top g_{k+1},
    \qquad
    \bar g_k=\widetilde H_k^\top \bar g_{k+1}.
\end{align}
Define
\begin{align}
    T_\ell:=I,
    \qquad
    T_{k+1}:=H_kT_k,
\end{align}
and
\begin{align}
    \widetilde T_\ell:=I,
    \qquad
    \widetilde T_{k+1}:=\widetilde H_k\widetilde T_k.
\end{align}
Thus
\begin{align}
    g_\ell=T_L^\top v,
    \qquad
    \bar g_\ell=\widetilde T_L^\top v.
\end{align}
Here $\widetilde T_k$ is an auxiliary product driven by
$\widetilde W_{k+1}$.

Furthermore, define
\begin{align}
    p_k:=T_k v_{\ell-1},
    \qquad
    \widetilde p_k:=\widetilde T_k v_{\ell-1},
\end{align}
and
\begin{align}
    M_k:=\frac{\partial p_k}{\partial h_\ell},
    \qquad
    \widetilde M_k:=\frac{\partial \widetilde p_k}{\partial h_\ell}.
\end{align}
Since $v_{\ell-1}$ is $\mathcal F_{\ell-1}$-measurable, it does not depend on $h_\ell$.
Therefore
\[
    M_\ell=\widetilde M_\ell=0.
\]

Differentiating
\[
    g_\ell-\bar g_\ell
    =
    (T_L^\top-\widetilde T_L^\top) v
\]
with respect to $h_\ell$ in the direction $v_{\ell-1}$ gives
\[
    (J_\ell-\bar J_\ell)^\top v_{\ell-1}
    =
    v^\top(M_L-\widetilde M_L).
\]
Since $v$ is an independent standard Gaussian,
\[
    \E_v
    \|v^\top(M_L-\widetilde M_L)\|^2
    =
    \|M_L-\widetilde M_L\|_F^2.
\]
Therefore,
\[
    \E\left[
        \|(J_\ell-\bar J_\ell)^\top v_{\ell-1}\|^2
        \,\middle|\,
        \mathcal F_{\ell-1}
    \right]
    =
    \E\left[
        \|M_L-\widetilde M_L\|_F^2
        \,\middle|\,
        \mathcal F_{\ell-1}
    \right].
\]
and so it is enough to prove
\[
    \sup_{k\ge \ell}
    \E[
        \|M_k\|_F^2
        \mid
        \mathcal F_{\ell-1}
    ]
    \le C,
    \qquad
    \sup_{k\ge \ell}
    \E[
        \|\widetilde M_k\|_F^2
        \mid
        \mathcal F_{\ell-1}
    ]
    \le C.
\]

We provide the proof of the estimate for $M_k$ here, as the estimate for $\widetilde M_k$ is identical. Since
\[
    p_{k+1}
    =
    p_k+\frac{1}{\sqrt{Ln}}W_{k+1}D_kp_k,
\]
differentiating with respect to $h_\ell$ gives
\[
    M_{k+1}
    =
    M_k
    +
    \frac{1}{\sqrt{Ln}}W_{k+1}
    \left[
        D_kM_k
        +
        \diag(p_k\odot\phi''(h_k))T_k
    \right].
\]
Conditioning on $\mathcal F_k$, $W_{k+1}$ is innovative and 
centered Gaussian, while the rest is fixed. Hence, the cross term vanishes and
\begin{align}
    \E[
        \|M_{k+1}\|_F^2
        \mid
        \mathcal F_k
    ]
    &=
    \|M_k\|_F^2
    +
    \frac{1}{L}
    \left\|
        D_kM_k
        +
        \diag(p_k\odot\phi''(h_k))T_k
    \right\|_F^2                                      \notag\\
    &\le
    \left(1+\frac CL\right)\|M_k\|_F^2
    +
    \frac CL\|\diag(p_k)T_k\|_F^2.
\end{align}
Taking conditional expectation with respect to $\mathcal F_{\ell-1}$ and using Lemma~\ref{lem:weighted-tangent},
\[
    \E[
        \|M_{k+1}\|_F^2
        \mid
        \mathcal F_{\ell-1}
    ]
    \le
    \left(1+\frac CL\right)
    \E[
        \|M_k\|_F^2
        \mid
        \mathcal F_{\ell-1}
    ]
    +
    \frac CL.
\]
Since $M_\ell=0$, discrete Gronwall gives
\[
    \sup_{k\ge \ell}
    \E[
        \|M_k\|_F^2
        \mid
        \mathcal F_{\ell-1}
    ]
    \le C.
\]

For the auxiliary estimate of $\widetilde M_{k}$,
\[
    \widetilde M_{k+1}
    =
    \widetilde M_k
    +
    \frac{1}{\sqrt{Ln}}\widetilde W_{k+1}
    \left[
        D_k\widetilde M_k
        +
        \diag(\widetilde p_k\odot\phi''(h_k))T_k
    \right].
\]
As $\widetilde W_{k+1}$ is an innovative and centered Gaussian, and the source term contains the same $T_k$ from the true forward path $h_k$, the same argument and Lemma~\ref{lem:weighted-tangent} give
\[
    \sup_{k\ge \ell}
    \E[
        \|\widetilde M_k\|_F^2
        \mid
        \mathcal F_{\ell-1}
    ]
    \le C.
\]
Consequently,
\[
    \E[
        \|M_L-\widetilde M_L\|_F^2
        \mid
        \mathcal F_{\ell-1}
    ]
    \le
    2\E[
        \|M_L\|_F^2
        \mid
        \mathcal F_{\ell-1}
    ]
    +
    2\E[
        \|\widetilde M_L\|_F^2
        \mid
        \mathcal F_{\ell-1}
    ]
    \le C.
\]
This proves the lemma.
\end{proof}

\begin{lemma}\label{lem:trace-J}
For every $\ell\in[L]$,
\[
    \E[
        (\Tr J_\ell)^2
        \mid
        \mathcal F_{\ell-1}
    ]
    \le Cn.
\]
\end{lemma}

\begin{proof}
Recall that
\[
    g_{\ell,a}
    =
    \frac{\partial}{\partial h_{\ell,a}}( v^\top h_L)
    =
    \sum_b  v_b
    \frac{\partial h_{L,b}}{\partial h_{\ell,a}}.
\]
Therefore
\[
    J_{\ell,ab}
    =
    \frac{\partial g_{\ell,a}}{\partial h_{\ell,b}}
    =
    \sum_{b'} v_{b'}
    \frac{\partial^2 h_{L,b'}}
    {\partial h_{\ell,a}\partial h_{\ell,b}}.
\]
Hence
\[
    \Tr(J_\ell)
    =
     v^\top
    \sum_a
    \frac{\partial^2 h_L}{\partial h_{\ell,a}^2}.
\]
Define
\[
    \tau_k
    :=
    \sum_a
    \frac{\partial^2 h_k}{\partial h_{\ell,a}^2},
    \qquad
    k\ge \ell.
\]
Then
\[
    \Tr(J_\ell)= v^\top\tau_L,
    \qquad
    \tau_\ell=0.
\]
Since $ v$ is an independent standard Gaussian from the forward pass,
\[
    \E[
        (\Tr J_\ell)^2
        \mid
        \mathcal F_{\ell-1}
    ]
    =
    \E[
        \|\tau_L\|^2
        \mid
        \mathcal F_{\ell-1}
    ].
\]

For each coordinate direction $a$, we have
\[
    \frac{\partial h_{k+1}}{\partial h_{\ell,a}}
    =
    \left(
        I+\frac{1}{\sqrt{Ln}}W_{k+1}D_k
    \right)
    \frac{\partial h_k}{\partial h_{\ell,a}}.
\]
Differentiating once more,
\begin{align}
    \frac{\partial^2 h_{k+1}}{\partial h_{\ell,a}^2}
    &=
    \frac{\partial^2 h_k}{\partial h_{\ell,a}^2}
    +
    \frac{1}{\sqrt{Ln}}W_{k+1}
    \left[
        D_k
        \frac{\partial^2 h_k}{\partial h_{\ell,a}^2}
        +
        \phi''(h_k)
        \odot
        \frac{\partial h_k}{\partial h_{\ell,a}}
        \odot
        \frac{\partial h_k}{\partial h_{\ell,a}}
    \right].
\end{align}
Summing over $a$ gives
\[
    \tau_{k+1}
    =
    \tau_k
    +
    \frac{1}{\sqrt{Ln}}W_{k+1}
    \left[
        D_k\tau_k
        +
        \phi''(h_k)\odot q_k
    \right],
\]
where
\[
    q_k
    :=
    \sum_a
    \frac{\partial h_k}{\partial h_{\ell,a}}
    \odot
    \frac{\partial h_k}{\partial h_{\ell,a}}.
\]
Conditioning on $\mathcal F_k$, $W_{k+1}$ is an innovative and 
centered Gaussian, while the rest is fixed. Therefore
\begin{align}
    \E[
        \|\tau_{k+1}\|^2
        \mid
        \mathcal F_k
    ]
    =
    \|\tau_k\|^2
    +
    \frac{1}{L}
    \left\|
        D_k\tau_k+\phi''(h_k)\odot q_k
    \right\|^2
    \le
    \left(1+\frac CL\right)\|\tau_k\|^2
    +
    \frac CL\|q_k\|^2.
\end{align}
Taking conditional expectation with respect to $\mathcal F_{\ell-1}$ and using
Lemma~\ref{lem:q_k},
\[
    \E[
        \|\tau_{k+1}\|^2
        \mid
        \mathcal F_{\ell-1}
    ]
    \le
    \left(1+\frac CL\right)
    \E[
        \|\tau_k\|^2
        \mid
        \mathcal F_{\ell-1}
    ]
    +
    \frac{Cn}{L}.
\]
Since $\tau_\ell=0$, discrete Gronwall gives
\[
    \E[
        (\Tr J_\ell)^2
        \mid
        \mathcal F_{\ell-1}
    ]
    =\E[
        \|\tau_L\|^2
        \mid
        \mathcal F_{\ell-1}
    ]
    \le Cn.
\]
\end{proof}

\begin{lemma}\label{lem:weighted-tangent}
Fix $\ell\in[L]$. 
Then, uniformly for $\ell\le k\le L$,
\[
    \E[
        \|p_k\|^4+\|\widetilde p_k\|^4
        \mid
        \mathcal F_{\ell-1}
    ]
    \le C,
\]
and
\[
    \E[
        \|\diag(p_k)T_k\|_F^2
        +
        \|\diag(\widetilde p_k)T_k\|_F^2
        \mid
        \mathcal F_{\ell-1}
    ]
    \le C.
\]
\end{lemma}

\begin{proof}
First, recall that, by definition, 
\[
    p_{k+1}
    =
    p_k+\frac{1}{\sqrt{Ln}}W_{k+1}D_kp_k.
\]
Conditioning on $\mathcal F_k$,
\[
    W_{k+1}D_kp_k
    \overset d=
    \|D_kp_k\|z_{k},
    \qquad
    z_{k}\sim N(0,I).
\]
Since $\|D_kp_k\|\le C\|p_k\|$, Gaussian fourth-moment estimates imply
\[
    \E[
        \|p_{k+1}\|^4
        \mid
        \mathcal F_k
    ]
    \le
    \left(1+\frac CL\right)\|p_k\|^4.
\]
Since $p_\ell=v_{\ell-1}$ and $\|v_{\ell-1}\|\le C$, discrete Gronwall gives
\[
    \sup_{k\ge \ell}
    \E[
        \|p_k\|^4
        \mid
        \mathcal F_{\ell-1}
    ]
    \le C.
\]
The proof for $\widetilde p_k$ is identical, using the centered Gaussian
innovation $\widetilde W_{k+1}$.

Let $\theta_i^\top$ be the $i$-th
row of $T_k$, and let $w_i^\top$ be the $i$-th row of $W_{k+1}$. Then
\[
    p_{k+1,i}
    =
    p_{k,i}
    +
    \frac{1}{\sqrt{Ln}}w_i^\top D_kp_k,
\]
and
\[
    (T_{k+1})_{i,\cdot}
    =
    \theta_i^\top
    +
    \frac{1}{\sqrt{Ln}}w_i^\top D_kT_k.
\]
Therefore,
\[
    \|\diag(p_{k+1})T_{k+1}\|_F^2
    =
    \sum_i
    p_{k+1,i}^2
    \|(T_{k+1})_{i,\cdot}\|^2.
\]
Expanding and conditioning on $\mathcal F_k$, all odd Gaussian terms vanish. The remaining are bounded by
\[
\begin{aligned}
    \E[
        \|\diag(p_{k+1})T_{k+1}\|_F^2
        \mid
        \mathcal F_k
    ]
    \le
    \left(1+\frac CL\right)
    \|\diag(p_k)T_k\|_F^2
    +
    \frac{C}{Ln}\|p_k\|^2\|T_k\|_F^2.
\end{aligned}
\]
Taking conditional expectation with respect to $\mathcal F_{\ell-1}$ and using
Cauchy--Schwarz,
\[
\begin{aligned}
    \E\left[
        \frac{1}{n}\|p_k\|^2\|T_k\|_F^2
        \,\middle|\,
        \mathcal F_{\ell-1}
    \right]
    \le
    \frac{1}{n}
    \left(
        \E[
            \|p_k\|^4
            \mid
            \mathcal F_{\ell-1}
        ]
    \right)^{1/2}
    \left(
        \E[
            \|T_k\|_F^4
            \mid
            \mathcal F_{\ell-1}
        ]
    \right)^{1/2}.
\end{aligned}
\]
By Lemma~\ref{lem:q_k},
\[
    \E[
        \Tr((T_kT_k^\top)^2)
        \mid
        \mathcal F_{\ell-1}
    ]
    \le Cn.
\]
Moreover,
\[
    \|T_k\|_F^4
    =
    [\Tr(T_kT_k^\top)]^2
    \le
    n\Tr((T_kT_k^\top)^2).
\]
Hence
\[
    \E[
        \|T_k\|_F^4
        \mid
        \mathcal F_{\ell-1}
    ]
    \le Cn^2.
\]
Therefore,
\[
    \E\left[
        \frac{1}{n}\|p_k\|^2\|T_k\|_F^2
        \,\middle|\,
        \mathcal F_{\ell-1}
    \right]
    \le C.
\]
Thus
\[
    \E[
        \|\diag(p_{k+1})T_{k+1}\|_F^2
        \mid
        \mathcal F_{\ell-1}
    ]
    \le
    \left(1+\frac CL\right)
    \E[
        \|\diag(p_k)T_k\|_F^2
        \mid
        \mathcal F_{\ell-1}
    ]
    +
    \frac CL.
\]
Since
\[
    \|\diag(p_\ell)T_\ell\|_F^2
    =
    \|\diag(v_{\ell-1})\|_F^2
    =
    \|v_{\ell-1}\|^2
    \le C,
\]
discrete Gronwall gives
\[
    \sup_{k\ge \ell}
    \E[
        \|\diag(p_k)T_k\|_F^2
        \mid
        \mathcal F_{\ell-1}
    ]
    \le C.
\]

The proof for $\widetilde p_k$ is the same. The only difference is that
$\widetilde p_{k+1}$ is updated with $\widetilde W_{k+1}$, while $T_{k+1}$ is updated with $W_{k+1}$. Conditioning on the joint past, $W_{k+1}$ and
$\widetilde W_{k+1}$ are centered Gaussian innovations. The same expansion gives
\[
    \sup_{k\ge \ell}
    \E[
        \|\diag(\widetilde p_k)T_k\|_F^2
        \mid
        \mathcal F_{\ell-1}
    ]
    \le C.
\]
\end{proof}

\begin{lemma}\label{lem:q_k}
Fix $\ell\in[L]$. For $k\ge \ell$, 
\[
    \E\left[
        \Tr\left((T_kT_k^\top)^2\right)
        \,\middle|\,
        \mathcal F_{\ell-1}
    \right]
    \le Cn,
    \qquad
    \E[
        \|q_k\|^2
        \mid
        \mathcal F_{\ell-1}
    ]
    \le Cn.
\]
\end{lemma}

\begin{proof}
Let
\[
    R_k:=T_kT_k^\top.
\]
Since
\[
    T_{k+1}
    =
    T_k+\frac{1}{\sqrt{Ln}}W_{k+1}D_kT_k,
\]
we have
\begin{align}
    R_{k+1}
    =
    R_k
    +
    \frac{1}{\sqrt{Ln}}
    \left(
        R_kD_kW_{k+1}^\top
        +
        W_{k+1}D_kR_k
    \right)
    +
    \frac{1}{Ln}
    W_{k+1}D_kR_kD_kW_{k+1}^\top .
\end{align}
Conditioning on $\mathcal F_k$, the matrices $R_k$ and $D_k$ are fixed, while $W_{k+1}$ is centered Gaussian and independent of $\mathcal F_k$. Therefore all
terms containing an odd number of copies of $W_{k+1}$ vanish in conditional
expectation.

Expanding $\Tr(R_{k+1}^2)$, under conditional expectation, gives
\begin{align*}
    &\E[
        \Tr(R_{k+1}^2)
        \mid
        \mathcal F_k
    ]                                                        \notag\\
    &=
    \Tr(R_k^2)                                               \notag\\
    &\quad
    +
    \frac{2}{Ln}
    \E\left[
        \Tr\left(
            R_kW_{k+1}D_kR_kD_kW_{k+1}^\top
        \right)
        \,\middle|\,
        \mathcal F_k
    \right]                                                  \notag\\
    &\quad
    +
    \frac{1}{Ln}
    \E\left[
        \left\|
            R_kD_kW_{k+1}^\top
            +
            W_{k+1}D_kR_k
        \right\|_F^2
        \,\middle|\,
        \mathcal F_k
    \right]                                                  \notag\\
    &\quad
    +
    \frac{1}{L^2n^2}
    \E\left[
        \Tr\left(
            \left(
                W_{k+1}D_kR_kD_kW_{k+1}^\top
            \right)^2
        \right)
        \,\middle|\,
        \mathcal F_k
    \right].
\end{align*}

For the first extra term, we have the identity
\[
    \E[
        W_{k+1}D_kR_kD_kW_{k+1}^\top
        \mid
        \mathcal F_k
    ]
    =
    \Tr(D_kR_kD_k)I.
\]
Therefore, the first extra term equals
\[
    \frac{2}{Ln}
    \Tr(R_k)\Tr(D_kR_kD_k).
\]
Since $\|D_k\|\le C$,
\[
    \Tr(D_kR_kD_k)\le C\Tr(R_k).
\]
Also, because $R_k\succeq0$,
\[
    [\Tr(R_k)]^2
    \le
    n\Tr(R_k^2).
\]
Thus
\[
    \frac{2}{Ln}
    \Tr(R_k)\Tr(D_kR_kD_k)
    \le
    \frac CL\Tr(R_k^2).
\]

For the second extra term, using Young's inequality and $\E[W_{k+1}^\top W_{k+1}]=\E[W_{k+1}W_{k+1}^\top]=n I$,
we obtain
\[
    \frac{1}{Ln}
    \E\left[
        \left\|
            R_kD_kW_{k+1}^\top
            +
            W_{k+1}D_kR_k
        \right\|_F^2
        \,\middle|\,
        \mathcal F_k
    \right]
    \le
    \frac CL\Tr(R_k^2).
\]

For the last term, we use the Gaussian fourth-moment identity: for deterministic
symmetric $B$,
\[
    \E\Tr((W_{k+1}BW_{k+1}^\top)^2)
    =
    n(n+1)\Tr(B^2)+n[\Tr(B)]^2.
\]
Applying this with
\[
    B=D_kR_kD_k,
\]
we get
\begin{align}
    &\E\left[
        \Tr\left(
            \left(
                W_{k+1}D_kR_kD_kW_{k+1}^\top
            \right)^2
        \right)
        \,\middle|\,
        \mathcal F_k
    \right] \notag\\
    &=
    n(n+1)\Tr((D_kR_kD_k)^2)
    +
    n[\Tr(D_kR_kD_k)]^2 \notag\\
    &\leq Cn^2 \Tr(R_k^2) \notag,
\end{align}
where we also use $\norm{D_k}\leq C$ and $[\Tr(R_k)]^2\leq n \Tr(R_k^2)$ through Cauchy-Schwarz inequality. 

Therefore,
\[
    \frac{T^2}{L^2n^2}
    \E\left[
        \Tr\left(
            \left(
                W_{k+1}D_kR_kD_kW_{k+1}^\top
            \right)^2
        \right)
        \,\middle|\,
        \mathcal F_k
    \right]
    \le
    \frac{C}{L^2}\Tr(R_k^2)
    \le
    \frac CL\Tr(R_k^2).
\]

Combining the three estimates gives
\[
    \E[
        \Tr(R_{k+1}^2)
        \mid
        \mathcal F_k
    ]
    \le
    \left(1+\frac CL\right)\Tr(R_k^2).
\]
Since $R_\ell=I$, iteration yields
\[
    \E[
        \Tr(R_k^2)
        \mid
        \mathcal F_{\ell-1}
    ]
    \le
    \left(1+\frac CL\right)^{k-\ell}\Tr(I)
    \le
    Cn.
\]
Finally,
\[
    q_k=\diag(T_k T_k^{\top}) = \diag(R_k),
\]
so
\[
    \|q_k\|^2
    =
    \|\diag(R_k)\|^2
    \le
    \|R_k\|_F^2
    =
    \Tr(R_k^2).
\]
Therefore,
\[
    \E[
        \|q_k\|^2
        \mid
        \mathcal F_{\ell-1}
    ]
    \le Cn.
\]
\end{proof}

\section{Width Convergence of Feature-Learning Dynamics via Tensor Programs}
\label{app:training-infinite-width-convergence}

In this section, we justify Proposition~\ref{prop:width-convergence-one-step}
and the width-convergence step in Theorem~\ref{thm:feature-learning-dynamics}.
The argument follows the Tensor Program analysis of feature-learning dynamics
\citep{yang2020feature}. We fix the depth $L$ and a finite training horizon
$K$, and take the width $n\to\infty$. For both the true SGD dynamics and
the auxiliary dynamics with decoupled backward weights, the finite-depth,
finite-horizon computation is a valid Tensor Program: features, gradients,
empirical inner products, and SGD updates are generated through matrix
multiplications, coordinate-wise nonlinearities, and moment operations. Hence
the infinite-width limit follows by applying the Tensor Program rules to the
SGD computation graph, with convergence guaranteed by the Master Theorem.
Instead of restating the general finite-$K$ construction, we analyze the
one-update case, which already captures the essential distinction: decoupled
backward weights yield a clean Gaussian mean-field dynamics, whereas true
backpropagation introduces an additional $(W_\ell,W_\ell^\top)$ correlation
term that is higher order in depth.

\paragraph{Decoupled-backward dynamics after one update.}
We first consider an auxiliary dynamics in which each backward matrix
$(W_\ell)^\top$ is replaced by an independent copy
$(\widetilde{W}_\ell)^\top$, while the forward pass and SGD updates keep the
same form. This is analogous to the partially decoupled backpropagation analysis
of \cite{yang2020feature}: the decoupling isolates the Gaussian CLT effect of
the $n\times n$ random matrices from the correlation effect between $W_\ell$ and $W_\ell^\top$.

Adding superscripts for the forward and backward passes, they become
\begin{align}
    h_0^{(0)}
    &=
    \frac{1}{\sqrt d}U^{(0)}x^{(0)}, \\
    h_\ell^{(0)}
    &=
    h_{\ell-1}^{(0)}
    +
    \frac{1}{\sqrt{Ln}}W_\ell^{(0)}\phi(h_{\ell-1}^{(0)}),
    \label{eq:tp-first-forward}
\end{align}
and, in the decoupled backward dynamics,
\begin{align}
    \bar{g}_L^{(0)}
    &=
    v^{(0)},\\
    \bar{g}_{\ell-1}^{(0)}
    &=
    \bar{g}_{\ell}^{(0)}
    +
    \frac{1}{\sqrt{Ln}}\,
    \phi'(h_{\ell-1}^{(0)})\odot
    (\widetilde{W}_\ell^{(0)})^\top \bar{g}_{\ell}^{(0)} .
    \label{eq:tp-first-backward-dec}
\end{align}
After one SGD update, the second forward pass with input $x^{(1)}$ satisfies
\begin{align}
    \bar{h}_0^{(1)}
    &=
    \frac{1}{\sqrt d}U^{(0)}x^{(1)}
    -
    \eta_c\chi^{(0)}
    \frac{\langle x^{(0)},x^{(1)}\rangle}{d}
    \bar{g}_0^{(0)},                                      \label{eq:tp-second-forward-0-dec}\\
    \bar{h}_\ell^{(1)}
    &=
    \bar{h}_{\ell-1}^{(1)}
    +
    \frac{1}{\sqrt{Ln}}W_\ell^{(0)}\phi(\bar{h}_{\ell-1}^{(1)})
    -
    \frac{\eta_c}{L}\chi^{(0)}
    \frac{
    \langle
    \phi(h_{\ell-1}^{(0)}),
    \phi(\bar{h}_{\ell-1}^{(1)})
    \rangle}{n}
    \bar{g}_\ell^{(0)} .
    \label{eq:tp-second-forward-dec}
\end{align}

For fixed $L$, the vectors in
\eqref{eq:tp-first-forward}--\eqref{eq:tp-second-forward-dec} are generated by
Tensor Program operations. Hence the Master Theorem gives joint coordinate
convergence and law-of-large-numbers limits for all empirical moments. In
particular,
\[
    \frac{1}{n}
    \left\langle
    \phi(h_{\ell-1}^{(0)}),
    \phi(\bar{h}_{\ell-1}^{(1)})
    \right\rangle
    \xrightarrow[n\to\infty]{a.s.}
    \E[
    \phi(H_{\ell-1}^{(0)})
    \phi(\bar H_{\ell-1}^{(1)})
    ].
\]
Moreover,
\[
    \frac{1}{\sqrt n}
    W_\ell^{(0)}\phi(\bar{h}_{\ell-1}^{(1)})
\]
converges coordinate-wise to a centered Gaussian innovation
$\bar A_\ell^{(1)}$, jointly Gaussian with the first-pass innovation
$A_\ell^{(0)}$, with covariance
\[
    \E[A_\ell^{(0)}\bar A_{\ell'}^{(1)}]
    =
    \delta_{\ell,\ell'}
    \E[
    \phi(H_{\ell-1}^{(0)})
    \phi(\bar H_{\ell-1}^{(1)})
    ] .
\]
Thus the decoupled infinite-width second forward dynamics are
\begin{align}
    \bar H_0^{(1)}
    &=
    \widehat H_0^{(1)}
    -
    \eta_c\mathring\chi^{(0)}
    \frac{\langle x^{(0)},x^{(1)}\rangle}{d}
    \bar G_0^{(0)},                                      \label{eq:tp-dec-limit-second-forward-0}\\
    \bar H_\ell^{(1)}
    &=
    \bar H_{\ell-1}^{(1)}
    +
    \frac{1}{\sqrt{L}}\,\bar A_\ell^{(1)}
    -
    \frac{\eta_c}{L}\mathring\chi^{(0)}
    \E[
    \phi(H_{\ell-1}^{(0)})
    \phi(\bar H_{\ell-1}^{(1)})
    ]
    \bar G_\ell^{(0)} ,
    \label{eq:tp-dec-limit-second-forward}
\end{align}
where $\mathring\chi^{(0)}=\mathcal L'(\mathring f^{(0)},y^{(0)})$.
Similarly, the decoupled second backward pass has the mean-field limit
\begin{align}
    \bar G_L^{(1)}
    &=
    \widehat G_L
    -
    \eta_c\mathring\chi^{(0)}H_L^{(0)},\\
    \bar G_{\ell-1}^{(1)}
    &=
    \bar G_\ell^{(1)}
    +
    \frac{1}{\sqrt{L}}\,
    \phi'(\bar H_{\ell-1}^{(1)})\bar B_\ell^{(1)}
    -
    \frac{\eta_c}{L}\mathring\chi^{(0)}
    \E[
    \bar G_\ell^{(0)}\bar G_\ell^{(1)}
    ]
    \phi(H_{\ell-1}^{(0)})
    \phi'(\bar H_{\ell-1}^{(1)}),
    \label{eq:tp-dec-limit-second-backward}
\end{align}
where $\bar B_\ell^{(1)}$ is centered Gaussian with covariance
\[
    \E[\bar B_\ell^{(0)}\bar B_{\ell'}^{(1)}]
    =
    \delta_{\ell,\ell'}
    \E[\bar G_\ell^{(0)}\bar G_\ell^{(1)}].
\]
Because the backward matrices $\widetilde{W}_\ell^{(0)}$ are independent of the forward matrices $W_\ell^{(0)}$, the forward innovation family and backward
innovation family are independent. This is the finite-depth Tensor Program
width limit underlying the NFD dynamics.

\paragraph{True shared-weight dynamics and the reused-weight correction.}
We now compare the decoupled dynamics with the true dynamics, where the first backward pass uses $(W_\ell^{(0)})^\top$ instead of
$(\widetilde{W}_\ell^{(0)})^\top$. The first forward pass is unchanged. The
difference appears in the second forward pass because the new features contain
an SGD drift proportional to the previous backward gradients, and these
gradients were computed using $(W_\ell^{(0)})^\top$.

To expose the mechanism, we first consider the linear-activation case
$\phi=\mathrm{id}$. The second forward pass contains
\[
    \frac{1}{\sqrt{Ln}}W_\ell^{(0)}h_{\ell-1}^{(1)} .
\]
The previous layer contains the SGD drift
\begin{align}
    -\frac{\eta_c}{L}\chi^{(0)}
    \frac{
    \langle
    h_{\ell-2}^{(0)},
    h_{\ell-2}^{(1)}
    \rangle}{n}
    g_{\ell-1}^{(0)} .
    \label{eq:true-second-forward-sgd-drift-linear}
\end{align}
Using the true first backward recursion,
\begin{align}
    g_{\ell-1}^{(0)}
    =
    g_\ell^{(0)}
    +
    \frac{1}{\sqrt{Ln}}
    (W_\ell^{(0)})^\top g_\ell^{(0)},
    \label{eq:true-first-backward-local-linear}
\end{align}
the part of \eqref{eq:true-second-forward-sgd-drift-linear} that depends on
$(W_\ell^{(0)})^\top$ is
\begin{align}
    -\frac{\eta_c}{L}\chi^{(0)}
    \frac{
    \langle
    h_{\ell-2}^{(0)},
    h_{\ell-2}^{(1)}
    \rangle}{n}
    \frac{1}{\sqrt{Ln}}
    (W_\ell^{(0)})^\top g_\ell^{(0)} .
    \label{eq:true-WT-dependent-part-linear}
\end{align}
Substituting this component into the forward multiplication by
$W_\ell^{(0)}$ gives 
\begin{align}
    -\frac{\eta_c}{L^2}\chi^{(0)}
    \frac{
    \langle
    h_{\ell-2}^{(0)},
    h_{\ell-2}^{(1)}
    \rangle}{n}
    \frac{1}{n}
    W_\ell^{(0)}
    (W_\ell^{(0)})^\top
    g_\ell^{(0)} .
    \label{eq:true-reuse-contraction-linear}
\end{align}
This is the additional term absent from the mean-field limit of the decoupled auxiliary process. In particular, if
$(W_\ell^{(0)})^\top$ is replaced by an independent copy
$(\widetilde{W}_\ell^{(0)})^\top$, then the corresponding contraction
$\frac{1}{n}W_\ell^{(0)}(\widetilde{W}_\ell^{(0)})^\top g_\ell^{(0)}$ has zero
diagonal mean and vanishes in the infinite-width limit.

We now evaluate \eqref{eq:true-reuse-contraction-linear}. By the Tensor Program
Master Theorem,
\begin{align}
    \frac{1}{n}
    \left\langle
    h_{\ell-2}^{(0)},
    h_{\ell-2}^{(1)}
    \right\rangle
    \xrightarrow[n\to\infty]{a.s.}
    \E[
    H_{\ell-2}^{(0)}
    H_{\ell-2}^{(1)}
    ].
    \label{eq:tp-sigma-limit-linear}
\end{align}
Moreover, for each coordinate $r$,
\begin{align}
    \left[
    \frac{1}{n}
    W_\ell^{(0)}
    (W_\ell^{(0)})^\top
    g_\ell^{(0)}
    \right]_r
    =
    g_{\ell,r}^{(0)}
    \frac{1}{n}\sum_{j=1}^{n}
    (W_{\ell,rj}^{(0)})^2
    +
    \mathrm{fluct.},
    \label{eq:tp-diagonal-contraction-linear}
\end{align}
where the off-diagonal fluctuation is centered and is absorbed into the Gaussian innovation in the mean-field limit. The diagonal term converges to
$G_\ell^{(0)}$. Therefore, in the linear case, the true one-step width limit
contains the deterministic reused-weight correction
\begin{align}
    \mathcal C_\ell^{(0\to 1)}
    =
    -\frac{\eta_c}{L^2}
    \mathring\chi^{(0)}
    \E[
    H_{\ell-2}^{(0)}
    H_{\ell-2}^{(1)}
    ]
    G_\ell^{(0)},
    \label{eq:tp-shared-correction-linear}
\end{align}
with the convention $\mathcal C_1^{(0\to 1)}=0$.

\paragraph{General activation.}
For a nonlinear activation, the same mechanism applies after expanding
$\phi(h_{\ell-1}^{(1)})$ in the $(W_\ell^{(0)})^\top$-dependent direction. The first-order term introduces the factor
\[
    \phi'(h_{\ell-1}^{(1)})
    \phi'(h_{\ell-1}^{(0)}),
\]
while the higher-order Taylor terms vanish in the $n\to\infty$ limit at fixed depth and fixed training horizon \cite{yang2020feature}. Thus, the nonlinear activation case gives
\begin{align}
    \mathcal C_\ell^{(0\to 1)}
    =
    -\frac{\eta_c}{L^2}
    \mathring\chi^{(0)}
    \E[
    \phi(H_{\ell-2}^{(0)})
    \phi(H_{\ell-2}^{(1)})
    ]
    \E[
    \phi'(H_{\ell-1}^{(0)})
    \phi'(H_{\ell-1}^{(1)})
    ]
    G_\ell^{(0)}.
    \label{eq:tp-shared-correction}
\end{align}
Hence, at fixed depth $L$, the true one-step width limit differs from the
decoupled width limit by the deterministic reused-weight correction
\eqref{eq:tp-shared-correction}. Its prefactor contains two depth factors, so
it is $O(L^{-2})$ per layer, whereas the leading SGD drift is $O(L^{-1})$ per
layer.

\paragraph{General finite training horizon.}
The same mechanism extends to any fixed training horizon $K<\infty$. After
$k$ SGD steps, the $k$-th forward and backward passes depend on weights updated
by the previous $k$ forward--backward computations. Hence, at each layer, the
reused-weight correction decomposes into pairwise contributions
\[
    \mathcal C_\ell^{(k)}
    =
    \sum_{i=0}^{k-1}\mathcal C_\ell^{(i\to k)},
\]
where $\mathcal C_\ell^{(i\to k)}$ denotes the correction induced by reusing
$W_\ell$ in the $k$-th forward pass and $W_\ell^\top$ in the $i$-th
backward pass. Each pairwise term is generated by the same local
$W_\ell(W_\ell)^\top$ contraction analyzed above and has per-layer depth
order $O(L^{-2})$. Thus, for fixed $k$, the omitted coupling accumulates
linearly over training steps, like the leading SGD drift. Since $K$ is fixed, this accumulation does not change the
depth order and remains one order
smaller in depth than the leading SGD drift. The following proposition summarizes the resulting finite-horizon
width limit.

\begin{proposition}[Finite-horizon width limit with reused-weight corrections]
\label{prop:finite-horizon-width-correction}
Suppose $\mathcal L'$, $\phi$, and $\phi'$ are Lipschitz continuous. Fix
$L<\infty$ and $K<\infty$. As $n\to\infty$, for each $k\le K$, the coordinates
of the finite-network training dynamics
$\{(h_\ell^{(k)},g_\ell^{(k)})\}_{\ell=0}^{L}$ become asymptotically
i.i.d. copies of a process
$\{(H_\ell^{(k)},G_\ell^{(k)})\}_{\ell=0}^{L}$ satisfying
\begin{align}
    H_{0}^{(k)}
    &=
    \widehat H_0^{(k)}
    -
    \eta_c
    \sum_{i=0}^{k-1}
    \chi^{(i)}
    \frac{\langle x^{(i)},x^{(k)}\rangle}{d}
    G_0^{(i)},
    \label{eq:finite-horizon-H0}
    \\
    H_{\ell}^{(k)}
    &=
    H_{\ell-1}^{(k)}
    +
    \frac{1}{\sqrt L}A_\ell^{(k)}
    -
    \frac{\eta_c}{L}
    \sum_{i=0}^{k-1}
    \chi^{(i)}
    \Sigma_{\ell-1}^{(i,k)}
    G_{\ell}^{(i)}
    +
    \mathcal C_{\ell,H}^{(k)},
    \label{eq:finite-horizon-H}
\end{align}
and
\begin{align}
    G_L^{(k)}
    &=
    \widehat G_L
    -
    \eta_c
    \sum_{i=0}^{k-1}
    \chi^{(i)}
    H_L^{(i)},
    \label{eq:finite-horizon-GL}
    \\
    G_{\ell-1}^{(k)}
    &=
    G_{\ell}^{(k)}
    +
    \frac{1}{\sqrt L}\phi'(H_{\ell-1}^{(k)})B_\ell^{(k)}
    -
    \frac{\eta_c}{L}
    \phi'(H_{\ell-1}^{(k)})
    \sum_{i=0}^{k-1}
    \chi^{(i)}
    \Theta_{\ell}^{(i,k)}
    \phi(H_{\ell-1}^{(i)})
    +
    \mathcal C_{\ell,G}^{(k)}.
    \label{eq:finite-horizon-G}
\end{align}
Here
\[
    \Sigma_{\ell}^{(i,k)}
    :=
    \E\!\left[
        \phi(H_{\ell}^{(i)})
        \phi(H_{\ell}^{(k)})
    \right],
    \qquad
    \Theta_{\ell}^{(i,k)}
    :=
    \E\!\left[
        G_{\ell}^{(i)}
        G_{\ell}^{(k)}
    \right],
\]
and
\[
    \chi^{(i)}
    :=
    \mathcal L'\!\left(\mathring f^{(i)},y^{(i)}\right),
    \qquad
    \mathring f^{(i)}
    :=
    \E[G_L^{(i)}H_L^{(i)}].
\]
The Gaussian input and terminal variables satisfy
\[
    \Cov(\widehat H_0^{(i)},\widehat H_0^{(k)})
    =
    \frac{\langle x^{(i)},x^{(k)}\rangle}{d},
    \qquad
    \widehat G_L\sim\mathcal N(0,1).
\]
The innovation families $\{A_\ell^{(k)}\}$ and $\{B_\ell^{(k)}\}$ are centered
Gaussian, independent across layers, with covariance
\begin{align}
    \E[A_\ell^{(i)}A_{\ell'}^{(k)}]
    &=
    \delta_{\ell,\ell'}\Sigma_{\ell-1}^{(i,k)},
    \label{eq:finite-horizon-A-cov}
    \\
    \E[B_\ell^{(i)}B_{\ell'}^{(k)}]
    &=
    \delta_{\ell,\ell'}\Theta_{\ell}^{(i,k)}.
    \label{eq:finite-horizon-B-cov}
\end{align}

The reused-weight correction in the forward recursion decomposes as
\[
    \mathcal C_{\ell,H}^{(k)}
    =
    \sum_{i=0}^{k-1}\mathcal C_{\ell,H}^{(i\to k)},
\]
where
\begin{align}
    \mathcal C_{\ell,H}^{(i\to k)}
    &:=
    -\frac{\eta_c}{L^2}\chi^{(i)}
    \E\!\left[
        \phi(H_{\ell-2}^{(i)})
        \phi(H_{\ell-2}^{(k)})
    \right]
    \E\!\left[
        \phi'(H_{\ell-1}^{(i)})
        \phi'(H_{\ell-1}^{(k)})
    \right]
    G_{\ell}^{(i)},
    \label{eq:finite-horizon-forward-correction}
\end{align}
with the convention $\mathcal C_{1,H}^{(i\to k)}=0$. Similarly, the
reused-weight correction in the backward recursion decomposes as
\[
    \mathcal C_{\ell,G}^{(k)}
    =
    \sum_{i=0}^{k-1}\mathcal C_{\ell,G}^{(i\to k)},
\]
where
\begin{align}
    \mathcal C_{\ell,G}^{(i\to k)}
    &:=
    -\frac{\eta_c}{L^2}\chi^{(i)}
    \phi'(H_{\ell-1}^{(k)})
    \phi(H_{\ell-1}^{(i)})
    \E\!\left[
        G_{\ell+1}^{(i)}
        G_{\ell+1}^{(k)}
    \right]
    \E\!\left[
        \phi'(H_{\ell}^{(i)})
        \phi'(H_{\ell}^{(k)})
    \right],
    \label{eq:finite-horizon-backward-correction}
\end{align}
with the convention that the boundary terms are omitted when the indices fall outside $\{0,\ldots,L\}$. 

Moreover, for the auxiliary dynamics with decoupled backward weights, the corresponding
limit
$\{(\bar H_\ell^{(k)},\bar G_\ell^{(k)})\}_{\ell=0}^{L}$
satisfies the same recursions but without the correction terms
$\mathcal C_{\ell,H}^{(k)}$ and $\mathcal C_{\ell,G}^{(k)}$.
\end{proposition}

This is the width-limit comparison used in
Theorem~\ref{thm:feature-learning-dynamics}; after summing over layers and using the moment bounds from Appendix~\ref{app:depth-convergence}, it yields
the coupling estimate in Corollary~\ref{cor:nfd-depth-rate}.

\section{Depth Convergence of Feature-Learning Dynamics via Stochastic Calculus}
\label{app:depth-convergence}

In this section, we use Proposition \ref{prop:finite-horizon-width-correction} to prove the convergence in Theorem \ref{thm:feature-learning-dynamics} and show that the rate of convergence as $L \to \infty$ is $1/L$.
Recall Assumption \ref{assmp:eigenvalue}.
For ease of writing, we consider the one sample case, and use lower case letters such as $h$ and $g$.

The limit as $n \to \infty$ for the $K$-th iteration can be written as
\begin{align}
    h_{\ell}^{(K),L} &= h_{\ell-1}^{(K),L} - \eta_0 \frac{1}{L} \sum_{k=0}^{K-1} \mathcal{L}^{\prime}(k,L) g_{\ell}^{(k),L} \E(\phi(h_{\ell-1}^{(k),L}) \phi(h_{\ell-1}^{(K),L})) + \frac{1}{\sqrt{L}} z_{\ell}^{(K),L} \notag \\
    &  \quad - \eta_0 \frac{1}{L^2} \sum_{k=0}^{K-1} \mathcal{L}^{\prime}(k,L) g_{\ell}^{(k),L} 
    \E(\phi(h_{\ell-2}^{(k),L}) \phi(h_{\ell-2}^{(K),L})) 
    \E(\phi'(h_{\ell-1}^{(k),L}) \phi'(h_{\ell-1}^{(K),L})), \notag \\
    g_{\ell-1}^{(K),L} &= g_{\ell}^{(K),L} - \eta_0 \frac{1}{L} \phi^{\prime}(h_{\ell-1}^{(K),L}) \sum_{k=0}^{K-1} \mathcal{L}^{\prime}(k,L) \phi(h_{\ell-1}^{(k),L}) \E(g_{\ell}^{(k),L}, g_{\ell}^{(K),L}) + \frac{1}{\sqrt{L}} \phi^{\prime}(h_{\ell-1}^{(K),L}) \tilde{z}_{\ell}^{(K),L} \notag \\
    % & \quad + \frac{1}{\sqrt{L}} \phi^{\prime}(h_{\ell-1}^{(K),L}) \tilde{z}_{\ell}^{(K),L} {\color{blue} {}-{} \eta_0 \frac{T^2}{L^2} \phi^{\prime}(h_{\ell-1}^{(K),L}) \sum_{k=0}^{K-1} \phi(h_{\ell-1}^{(k),L}) \Psi_{k,\ell}^{(K),L} \E(g_{\ell}^{(k),L}, g_{\ell}^{(K),L})},\\
    & \quad -\eta_0 \frac{1}{L^2} \phi^{\prime}(h_{\ell-1}^{(K),L}) \sum_{k=0}^{K-1} \mathcal{L}'(k,L)\phi(h_{\ell-1}^{(k),L})  \E(g_{\ell+1}^{(k),L} g_{\ell+1}^{(K),L}) \E[\phi'(h_{\ell}^{(k),L}) \phi'(h_{\ell}^{(K),L})], \label{eq:g-tau-squared}
\end{align}
where $\{(z_{\ell}^{(k),L})_k, (\tilde{z}_{\ell}^{(k),L})_k : \ell=1,\dotsc,L\}$ are independent Gaussian random vectors with mean $0$ and variance-covariance matrix
\begin{align*}
    \mbox{Cov}(z_{\ell}^{(k),L},z_{\ell}^{(k'),L}) & = \E[\phi(h_{\ell-1}^{(k),L})\phi(h_{\ell-1}^{(k'),L})], \\
    \mbox{Cov}(\tilde{z}_{\ell}^{(k),L},\tilde{z}_{\ell}^{(k'),L}) & = \E[g_{\ell}^{(k),L} g_{\ell}^{(k'),L}],
\end{align*}
and
\begin{align*}
    h_{0}^{(K),L} & = \frac{\norm{x}}{\sqrt{d}} u(0) -\eta_0 \sum_{k=0}^{K-1} \mathcal{L}^{\prime}(k,L)  g_{0}^{(k),L}\frac{\inn{x}{x}}{d}, \\
    % \frac{1}{\sqrt{d}}U(0)x -\eta_0 \sum_{k=0}^{K-1} \mathcal{L}^{\prime}(k)  \bar{g}^{0}(k)\frac{\inn{\bar{x}}{x}}{d}, \\
    g_{L}^{(K),L} & = v(0) - \eta_0 \sum_{k=0}^{K-1} \mathcal{L}^{\prime}(k,L) h_{L}^{(k),L}, \\
    \mathcal{L}^{\prime}(k,L) & = \mathcal{L}^{\prime}(\E [g_{L}^{(k),L}h_L^{(k),L}], y),
\end{align*}
and $u(0),v(0)$ are standard Gaussian. 
% {\color{blue}
% Here the term $\Psi_{k,\ell}^{(K),L}$ is given via some function $\Psi$ as
% \begin{equation}
%     \label{eq:Psi}
%     \Psi_{k,\ell}^{(K),L} = \Psi(\mathcal{L}^{\prime}(k,L),\E(\phi(h_{\ell-1}^{(k),L}) \phi(h_{\ell-1}^{(K),L})), 
%     \E(\phi'(h_{\ell-1}^{(k),L}) \phi'(h_{\ell-1}^{(K),L})))
% \end{equation}
% with boundedness of $\Psi$ on compact sets:
% \begin{equation}
%     \label{eq:Psi-bound}
%     \sup_{|y| \le C} \Psi(y) < \infty, \quad \forall\, C \in (0,\infty).
% \end{equation}
% }

% \begin{align*}
%     \begin{bmatrix}
%         h_t(x_1,k)\\
%         \vdots\\
%         h_t(x_b,k)
%     \end{bmatrix}
% \end{align*}

% \begin{align*}
%     h^{0}(x, K) & = \frac{1}{\sqrt{d}}U(0)x -\frac{\eta}{N} \frac{\alpha}{\sqrt{n}} \sum_{k=0}^{K-1} \sum_{i=1}^{N} \mathcal{L}^{\prime}_i(k)  \vlambda^{0}(x_i, k)\frac{\inn{x^{i}}{x}}{d}, \\
%     \vlambda^{L}(x, K) & = \vv(0) - \frac{\eta}{N} \frac{\alpha}{\sqrt{n}} \sum_{k=0}^{K-1} \sum_{i=1}^{N} \mathcal{L}^{\prime}_i(k) h^{L}(x_i,k), \\
%     \mathcal{L}^{\prime}(k) & = \E [v(x,k)h^L(x,k)] - y
% \end{align*}

% and
% \begin{align*}
%     C(g_{\ell}^{(K)}, \bar{g}^{\ell}(\bar{k})) := \frac{\inn{\vlambda_{\ell}^{(K)}}{\vlambda^{\ell}(\bar{x},\bar{k})}}{n}
% \end{align*}

Letting $L\rightarrow\infty$, we expect to have
\begin{align*}
    d h_t^{(K)} &= -\eta_0\sum_{k=0}^{K-1} \mathcal{L}^{\prime}(k) g_t^{(k)} \E [\phi(h_t^{(k)})\phi(h_t^{(K)})]dt + dw_t^{(K)}, \quad\forall t\in [0,1], \\
    d g_t^{(K)} &= -\eta_0\phi^{\prime}(h_t^{(K)}) \sum_{k=0}^{K-1} \mathcal{L}^{\prime}(k) \phi(h_t^{(k)}) \E [g_t^{(k)} g_t^{(K)}]dt + \phi^{\prime}(h_t^{(K)}) d\tilde{w}_t^{(K)},
    \quad\forall t\in [0,1].
\end{align*}
where $\{(w_t^{(k)})_k, (\tilde{w}_t^{(k)})_k : \ell=1,\dotsc,L\}$ are independent Brownian motions with mean $0$ and cross-variations
\begin{align*}
    d\langle w^{(k)},w^{(k')} \rangle_t & = \E[\phi(h_t^{(k)})\phi(h_t^{(k')})]\,dt, \\
    d\langle \tilde{w}^{(k)},\tilde{w}^{(k')} \rangle_t & = \E[g_t^{(k)}g_t^{(k')}]\,dt,
\end{align*}
and
\begin{align*}
    h_0^{(K)} & = \frac{\norm{x}}{\sqrt{d}} u(0) -\eta_0 \sum_{k=0}^{K-1} \mathcal{L}^{\prime}(k)  g_0^{(k)} \frac{\norm{x}^2}{d}, \\
    g_1^{(K)} & = v(0) - \eta_0 \sum_{k=0}^{K-1} \mathcal{L}^{\prime}(k) h_1^{(k)}, \\
    \mathcal{L}^{\prime}(k) & = \mathcal{L}^{\prime}(\E [g_1^{(k)}h_1^{(k)}], y).
\end{align*}

\begin{remark}
\label{rmk:well-FBSDE}
% \begin{enumerate}[label=(\alph*)]
    % \item 
    (a) The evolution of $g_t^{(K)}$ is written for ease of notation and is interpreted backward from $t=1$ to $t=0$.
    This should not to be confused with the classic notion of backward stochastic differential equations.
    The precise meaning, instead, is that $(g_t,\tilde{w}_t) = (\widehat{g}_{1-t},\widehat{w}_{1-t})$ and
    \begin{equation*}
        d \widehat{g}_t^{(K)} = -\eta_0\phi^{\prime}(h_{1-t}^{(K)}) \sum_{k=0}^{K-1} \mathcal{L}^{\prime}(k) \phi(h_{1-t}^{(k)}) \E [\widehat{g}_t^{(k)} \widehat{g}_t^{(K)}]dt + \phi^{\prime}(h_{1-t}^{(K)}) d\widehat{w}_t^{(K)},
        \quad\forall t\in [0,1].
    \end{equation*}
    % \item 
    (b) The first forward $\{h_t^{(0)}\}$ is adapted to the driven Brownian motion. Due to the backpropagation, $g_t^{(0)}$ and $\{h_t^{(k)},g_t^{(k)}, k=1,2,\dotsc\}$ are not adapted any more.
    However, thanks to the deterministic diffusion coefficients in front of $dw_t$ and $d\tilde{w}_t$, which are automatically adapted, the SDEs are well-posed, as is justified in Propositions \ref{prop:well-posed-h0} and \ref{prop:well-posed-g0} above and Proposition \ref{prop:uniqueness} below.
% \end{enumerate}
\end{remark}

For ease of analysis, we write the above dynamics of $h_t := (h_t^{(0)},\dotsc,h_t^{(K)})$ and $g_t:= (g_t^{(0)},\dotsc,g_t^{(K)})$ in the following more standard manner of McKean--Vlasov equations: 
\begin{align*}
    d h_t &= b_t\,dt + \sigma_t\,dW_t, \\
    d g_t &= c_t\,dt + D_t\theta_t\,dB_t,
\end{align*}
where $b_t=(b_{t,k})_{k=0}^K$ and $c_t=(c_{t,k})_{k=0}^K$ are vectors given by
\begin{align*}
    b_{t,k} & = -\eta_0\sum_{i=0}^{k-1} \mathcal{L}^{\prime}(i) g_t^{(i)} \E [\phi(h_t^{(i)})\phi(h_t^{(k)})], \\
    c_{t,k} &:= -\eta_0\phi^{\prime}(h_t^{(k)}) \sum_{i=0}^{k-1} \mathcal{L}^{\prime}(i) \phi(h_t^{(i)}) \E [g_t^{(i)} g_t^{(k)}],
\end{align*}
$D_t$ is a diagonal matrix given by
\begin{align*}
    D_t &= \text{diag} \{\phi^{\prime}(h_t^{(0)}),\dotsc,\phi^{\prime}(h_t^{(K)})\},
\end{align*}
$\sigma_t$ and $\theta_t$ are (the Cholesky decomposition) such that
\begin{align*}
    \sigma_t \sigma_t^\top = \Sigma_t & := \E[(\phi(h_t^{(0)}),\dotsc,\phi(h_t^{(K)}))^\top (\phi(h_t^{(0)}),\dotsc,\phi(h_t^{(K)}))], \\
    \theta_t \theta_t^\top = \Theta_t & := \E[(g_t^{(0)},\dotsc,g_t^{(K)})^\top (g_t^{(0)},\dotsc,g_t^{(K)})],
\end{align*}
and $W_t$ and $B_t$ are independent $(K+1)$-dimensional standard Brownian motions.

We note that the above system is nested: when $K$ increases by $1$, one simply adds one additional dimension to the evolution of $h_t$ and $g_t$.
This allows us to apply induction arguments in the proofs of later results.
We also note that the existence of solutions to the above system is already guaranteed via the convergence of $n \to \infty$.
% We make the following technical assumption on the eigenvalues of the corresponding $\Sigma_t$ and $\Theta_t$.

Denote by $\|\cdot\|$ the Frobenius norm of matrices (and vectors).
Denote by $\lambda_k(A)$ the eigenvalues of a symmetric matrix $A$.

\begin{lemma}
    \label{lem:h-moment-bound}
    If $\{h_t^{(k)},g_t^{(k)}, k=0,1,\dotsc,K\}$ is a solution, then
    \begin{equation*}
        \sup_{0 \le t \le 1} \E \|h_t\|^2 < \infty, \quad \sup_{0 \le t \le 1} \E \|g_t\|^2 < \infty.
    \end{equation*} 
\end{lemma}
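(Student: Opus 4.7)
I would proceed by joint induction on the iteration index $K$, establishing simultaneously that $\sup_{t\in[0,T]}\E|h_t^{(k)}|^2<\infty$ and $\sup_{t\in[0,T]}\E|g_t^{(k)}|^2<\infty$ for every $k=0,1,\dotsc,K$. The base case $K=0$ is handled by Propositions~\ref{prop:well-posed-h0} and~\ref{prop:well-posed-g0}, which already provide uniform $L^2$-bounds for $h_t^{(0)}$ and $g_t^{(0)}$. The key structural observation exploited in the inductive step is the nested triangular form of the system: the drift $b_{s,K}$ involves only $g_s^{(i)}$ and $h_s^{(i)}$ for $i<K$, coupled to $h_s^{(K)}$ only through the deterministic scalar $\E[\phi(h_s^{(i)})\phi(h_s^{(K)})]$, and analogously for $c_{s,K}$. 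Thanks to this structure, at each level one only needs to close a linear (in $\E|h_s^{(K)}|^2$ or $\E|g_s^{(K)}|^2$) Gronwall-type inequality. Since $\mathcal{L}^{\prime}(k)$ for $k<K$ is a deterministic scalar determined by $\E[g_T^{(k)}h_T^{(k)}]$, which is finite by Cauchy--Schwarz and the inductive hypothesis, Lipschitz continuity of $\mathcal{L}^{\prime}$ gives that all $\mathcal{L}^{\prime}(k)$ coefficients appearing in the drifts are bounded constants.

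\textbf{Bounding $h^{(K)}$.} I would first write
\begin{equation*}
    h_t^{(K)} = h_0^{(K)} + \int_0^t b_{s,K}\,ds + M_t^{(K)},
\end{equation*}
where $M_t^{(K)}$ is the $K$-th component of the martingale $\int_0^t \sigma_s\,dW_s$, with predictable quadratic variation $\int_0^t \Sigma_{s,KK}\,ds=\int_0^t \E[\phi(h_s^{(K)})^2]\,ds$. The initial value $h_0^{(K)}$ is an affine combination of a standard Gaussian and $\{g_0^{(i)}\}_{i<K}$, so $\E|h_0^{(K)}|^2<\infty$ by induction. Applying Cauchy--Schwarz to the expectation inside $b_{s,K}$ and using the Lipschitz (hence linear-growth) bound $\phi(x)^2\le C(1+x^2)$, together with the inductive moment bounds on $\{g_s^{(i)},h_s^{(i)}\}_{i<K}$, yields $\E|b_{s,K}|^2\le C(1+\E|h_s^{(K)}|^2)$. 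By Jensen / Cauchy--Schwarz on the time integral and It\^o isometry on $M_t^{(K)}$,
\begin{equation*}
    \E|h_t^{(K)}|^2 \le C + C\int_0^t\bigl(1+\E|h_s^{(K)}|^2\bigr)\,ds,
\end{equation*}
and Gronwall's inequality (Lemma~\ref{lem:Gronwall}) gives the desired uniform-in-$t$ bound.

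\textbf{Bounding $g^{(K)}$.} I would treat the backward evolution as a forward SDE in $\hat g_t^{(K)}:=g_{T-t}^{(K)}$, as described in Remark~\ref{rmk:well-FBSDE}. The terminal value $g_T^{(K)} = v(0)-\eta_0\sum_{i<K}\mathcal{L}^{\prime}(i)\,h_T^{(i)}$ lies in $L^2$ by the inductive hypothesis. Having just established a uniform $L^2$-bound on $h^{(K)}$, Lipschitz growth of $\phi^{\prime}$ gives $\sup_{s}\E[\phi^{\prime}(h_s^{(K)})^2]<\infty$; Cauchy--Schwarz applied to $c_{s,K}$ then yields $\E|c_{s,K}|^2\le C(1+\E|g_s^{(K)}|^2)$, while the diffusion of $g^{(K)}$ has quadratic variation $\int_0^t \phi^{\prime}(h_s^{(K)})^2\,\Theta_{s,KK}\,ds$ with expectation $\le C\int_0^t \E|g_s^{(K)}|^2\,ds$. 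Combining and applying Gronwall once more closes the estimate.

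\textbf{Main obstacle.} The conceptual subtlety is that the diffusion matrices $\sigma_s$ and $D_s\theta_s$ are themselves determined by moments of the very processes we are trying to bound, so at first glance the estimate appears circular. The resolution is exactly the triangular-in-$K$ structure: all cross-index quantities are frozen by induction, and self-dependence enters only linearly in $\E|h_s^{(K)}|^2$ (resp.\ $\E|g_s^{(K)}|^2$), so Gronwall closes it. A secondary, essentially bookkeeping, obstacle is the backward time-direction for $g$; but, as noted in Remark~\ref{rmk:well-FBSDE}, the deterministic (adapted) form of the diffusion coefficients makes the time-reversal harmless, reducing the analysis of $g$ to a standard forward SDE moment estimate.
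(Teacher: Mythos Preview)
Your proposal is correct and follows essentially the same route as the paper: induction on $K$ with base case from Propositions~\ref{prop:well-posed-h0}--\ref{prop:well-posed-g0}, then at each level a Gronwall closure using Cauchy--Schwarz on the drift, It\^o isometry on the diffusion, and the triangular dependence on lower indices. The only cosmetic difference is that where you invoke Lipschitz growth of $\phi'$ plus the freshly obtained $h^{(K)}$-bound to control $\E[\phi'(h_s^{(K)})^2]$, the paper simply notes that $\phi$ Lipschitz already forces $\phi'$ to be bounded outright.
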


\begin{proof}[Proof of Lemma \ref{lem:h-moment-bound}]
    We will prove by induction.
    The statement holds for $K=0$ by Propositions \ref{prop:well-posed-h0} and \ref{prop:well-posed-g0}.

    Now suppose the statement holds for $K$, namely
    \begin{equation*}
        \sup_{0 \le t \le 1} \sum_{k=0}^K \E [h_t^{(k)}]^2 < \infty, \quad \sup_{0 \le t \le 1} \sum_{k=0}^K \E [g_t^{(k)}]^2 < \infty.
    \end{equation*}
    We will show that
    \begin{equation*}
        \sup_{0 \le t \le 1} \E [h_t^{(K+1)}]^2 < \infty, \quad \sup_{0 \le t \le 1} \E [g_t^{(K+1)}]^2 < \infty.
    \end{equation*}
    For $h_t^{(K+1)}$, using Cauchy-Schwarz inequality, Lipschitz property of $\phi$, and induction assumption, we have
    \begin{align*}
        \E [h_t^{(K+1)}]^2 & \le C \E [h_0^{(K+1)}]^2 + C \sum_{k=0}^K \E \left( \int_0^t g_s^{(k)} \E [\phi(h_s^{(k)})\phi(h_s^{(K+1)})]\,ds \right)^2 + C \E [w_t^{(K+1)}]^2 \\
        & \le C + C \sum_{k=0}^K \int_0^t \E [g_s^{(k)}]^2 \E \phi^2(h_s^{(k)}) \E \phi^2(h_s^{(K+1)}) \,ds + C \int_0^t \E \phi^2(h_s^{(K+1)})\,ds \\
        & \le C + C \int_0^t \E [h_s^{(K+1)}]^2\,ds.
    \end{align*}
    It then follows from Gronwall's lemma that $\sup_{0 \le t \le 1} \E [h_t^{(K+1)}]^2 < \infty$.
    Since $\phi'$ is bounded, using similar arguments as above we can get $\sup_{0 \le t \le 1} \E [g_t^{(K+1)}]^2 < \infty$.
    Therefore the statement holds for $K+1$ and this completes the proof by induction.
\end{proof}

\begin{proposition}
    \label{prop:uniqueness}
    Suppose Assumption \ref{assmp:eigenvalue} holds. 
    Then pathwise uniqueness holds for $\{h_t^{(k)},g_t^{(k)}, k=0,1,\dotsc,K\}$.
\end{proposition}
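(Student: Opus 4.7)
The plan is to induct on $K$, where the base case $K=0$ is already furnished by Propositions \ref{prop:well-posed-h0} and \ref{prop:well-posed-g0}. For the inductive step, I would fix $K$ and assume pathwise uniqueness for the $K$-coordinate system, so that $(h_t^{(k)},g_t^{(k)})_{k\le K}$ is a.s.\ determined. Observe that $b_{t,K+1}$ only depends on previously-indexed coordinates together with the law of $h_t^{(K+1)}$, and that in the Cholesky factorization of $\Sigma_t^{(K+1)}$ the top-left $K\times K$ block equals $\sigma_t^{(K)}$ (lower-triangularity), so the block added by extending to $K+1$ coordinates is a function only of the previously-fixed data and of the law of $h_t^{(K+1)}$; the same structure holds for $\theta_t^{(K+1)}$. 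Hence uniqueness of the $(K+1)$-th coordinate can be studied in isolation, conditional on the induction hypothesis. To handle the ``backward'' $g$-equation, I would first time-reverse it via $\widehat{g}_t = g_{T-t}$ and $\widehat w_t=w_{T-t}$ as in Remark \ref{rmk:well-FBSDE}, turning $(h_t,\widehat g_t)$ into a standard forward McKean--Vlasov system on $[0,T]$ to which the usual arguments apply.

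Given two solutions $(h^{(K+1)},g^{(K+1)})$ and $(\bar h^{(K+1)},\bar g^{(K+1)})$ of the $(K+1)$-system on the same probability space, driven by the same Brownian motions $W^{(K+1)},B^{(K+1)}$ and with the same initial/terminal data, I would set
\begin{equation*}
u(t) := \sup_{s\in[0,t]} \E\!\left[|h_s^{(K+1)}-\bar h_s^{(K+1)}|^2 + |g_s^{(K+1)}-\bar g_s^{(K+1)}|^2\right]
\end{equation*}
and aim at an inequality of the form $u(t)\le C\int_0^t u(s)\,ds$, which would force $u\equiv 0$ by Gronwall (Lemma \ref{lem:Gronwall}). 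Differences in the drifts $b_{t,K+1}$ and $c_{t,K+1}$ are straightforward: each summand is either a product of a Lipschitz function of $h^{(K+1)}$ and an expectation in $h^{(K+1)}$, or an expectation of a Lipschitz function. Applying the Lipschitz hypotheses on $\phi,\phi',\mathcal{L}'$ together with the uniform moment bounds from Lemma \ref{lem:h-moment-bound} and Cauchy--Schwarz converts each drift difference into a term of the form $C\sqrt{u(s)}$, which integrates into the Gronwall form.

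The main obstacle is the difference of the diffusion coefficients $\sigma_t,\theta_t$, since these are Cholesky (equivalently matrix-square-root) factors of $\Sigma_t,\Theta_t$, and the square-root map is only locally Lipschitz on the cone of SPD matrices, with Lipschitz constant blowing up as the smallest eigenvalue approaches zero. This is precisely where Assumption \ref{assmp:eigenvalue}(2) is indispensable: the uniform bound $\lambda_{\min}(\Sigma_t^{(K+1)}),\lambda_{\min}(\Theta_t^{(K+1)})\ge\varepsilon$ along the reference solution makes the Cholesky map Lipschitz in an open $\|\cdot\|_F$-neighborhood of the reference covariances. To transfer this to the comparison solution, I would introduce the stopping time
\begin{equation*}
\tau_\delta := \inf\{t\in[0,T]:\ \|\bar\Sigma_t - \Sigma_t\|_F \vee \|\bar\Theta_t - \Theta_t\|_F \ge \delta\},
\end{equation*}
for some $\delta<\varepsilon/2$, so that on $[0,\tau_\delta]$ the comparison covariances are also uniformly bounded below and the Lipschitz square-root estimate $\|\sigma_t-\bar\sigma_t\|_F\le C_\varepsilon\|\Sigma_t-\bar\Sigma_t\|_F$ applies. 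Since $\|\Sigma_t-\bar\Sigma_t\|_F$ itself is controlled by $\sqrt{u(t)}$ via Lipschitzness of $\phi$ and Cauchy--Schwarz, Itô's isometry (for $\sigma_t$) combined with Burkholder--Davis--Gundy (for the $\phi'(h^{(K+1)})$-modulated $g$-martingale) yields the desired Gronwall-type bound for $u(\cdot\wedge\tau_\delta)$. One concludes $u\equiv 0$ up to $\tau_\delta$, which in turn forces $\tau_\delta=T$ a.s., completing the induction.
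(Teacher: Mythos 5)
Your inductive skeleton, the observation that the Cholesky factor extends by one block per coordinate, and the Gronwall endgame all match the paper. The route through the diffusion-coefficient Lipschitz estimate is where you diverge. The paper never needs a stopping time: it exploits two consequences of the induction hypothesis that you notice but do not press hard enough. First, because $(h^{(k)},g^{(k)})_{k\le K}$ are identical for both solutions, the top-left Cholesky block satisfies $\tilde\sigma_{11,t}=\sigma_{11,t}$ \emph{exactly}, so the estimate $\|\sigma_{21,t}-\tilde\sigma_{21,t}\|\le\|\sigma_{11,t}^{-1}\|\,\|\Sigma_{12,t}-\tilde\Sigma_{12,t}\|$ needs the eigenvalue floor only on the \emph{reference} covariance $\Sigma_{11,t}$ (via interlacing), never on the comparison one. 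Second, the bottom-right block is a scalar, and the paper writes $|\sigma_{22,t}-\tilde\sigma_{22,t}|=|\sigma_{22,t}^2-\tilde\sigma_{22,t}^2|/(\sigma_{22,t}+\tilde\sigma_{22,t})$, then bounds the denominator from below by $\sigma_{22,t}\ge\sqrt\varepsilon$ alone (as a Schur complement of $\Sigma_t$), with no control needed on $\tilde\sigma_{22,t}$. Together these two tricks make the Cholesky map Lipschitz along the comparison without ever bounding $\lambda_{\min}(\tilde\Sigma_t)$, which is precisely what your $\tau_\delta$ localization was designed to guarantee. Your approach is correct — the stopping time is deterministic (the covariances are expectations), $u\equiv 0$ on $[0,\tau_\delta]$ forces the covariances to agree there, and continuity then gives $\tau_\delta=T$ — but it imports a general ``Cholesky is locally Lipschitz on nondegenerate SPD matrices'' fact plus a localization step that the paper's more hands-on block computation renders unnecessary.

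One further simplification the paper takes that you could adopt: since $b_{t,K+1}$ involves only $g_t^{(i)}$, $i\le K$ (already fixed by induction) and the law of $h_t^{(K+1)}$, the $h^{(K+1)}$-equation decouples completely from the unknown $g^{(K+1)}$. The paper therefore proves uniqueness of $h^{(K+1)}$ first and then of $g^{(K+1)}$ given $h^{(K+1)}$, two short sequential Gronwall arguments, rather than the joint $u(t)$ you propose — which forces you into a time-reversal to reconcile a forward and a backward equation, even though there is no genuine forward--backward coupling at coordinate $K+1$.
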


\begin{proof}[Proof of Proposition \ref{prop:uniqueness}]
    % , by considering the map from the joint law to the joint law --- to be filled.
    % 
    % \red{First forward $h(0)$ and first backward $g(0)$ are well-posed. But things get complicated even for second forward $h(1)$.}
    % 
    % The evolution of first forward $h_t := h_t(0)$, as we are only considering dimension one, can be written as 
    % \begin{equation}
    %     dh_t = \sqrt{\E \phi^2(h_t)}\,dw_t,
    % \end{equation}
    % where $w_t$ is a standard one-dimensional Brownian motion.
    % Suppose there are two solutions $h_t$ and $\tilde{h}_t$.
    % Then
    % \begin{align*}
    %     \E[\sup_{u \le t} |h_u-\tilde{h}_u|^2] & = \E[\sup_{u \le t} |\int_0^u [\sqrt{\E \phi^2(h_s)} - \sqrt{\E \phi^2(\tilde{h}_s)}] \,dw_s|^2] \\
    %     & \le 4 \E |\int_0^t [\sqrt{\E \phi^2(h_s)} - \sqrt{\E \phi^2(\tilde{h}_s)}] \,dw_s|^2 \\
    %     & = 4 \int_0^t [\sqrt{\E \phi^2(h_s)} - \sqrt{\E \phi^2(\tilde{h}_s)}]^2 \,ds \\
    %     \mbox{(Minkowski's inequality)} \quad & \le 4 \int_0^t \E [\phi(h_s)-\phi(\tilde{h}_s)]^2 \,ds \\
    %     & \le C \int_0^t \E [h_s-\tilde{h}_s]^2 \,ds \\
    %     & \le C \int_0^t \E[\sup_{u \le s} |h_u-\tilde{h}_u|^2] \,ds.
    % \end{align*}
    % It then follows from Gronwall's inequality that
    % \begin{equation}
    %     \E[\sup_{u \le T} |h_u-\tilde{h}_u|^2] = 0.
    % \end{equation}
    % This gives (strong) uniqueness.
    % 
    % The above idea can be generalized to argue existence and uniqueness altogether.
    We will prove by induction. 
    By Propositions \ref{prop:well-posed-h0} and \ref{prop:well-posed-g0}, $h_t^{(0)}$ and $g_t^{(0)}$ are unique.
    So the statement holds for $K=0$.
    
    Now suppose the statement holds for $K$, namely $h_t^{(k)}$ and $g_t^{(k)}$, $k=0,1,\dotsc,K$, are unique.
    We will show that $h_t^{(k)}$ and $g_t^{(k)}$, $k=0,1,\dotsc,K+1$ are unique.
    Consider the solution $(h_t^{(k)}, g_t^{(k)})_{k=0}^{K+1}$ and any other solution $(\tilde{h}_t^{(k)}, \tilde{g}_t^{(k)})_{k=0}^{K+1}$.
    By the induction assumption on uniqueness, we must have $(h_t^{(k)}, g_t^{(k)})_{k=0}^{K} = (\tilde{h}_t^{(k)}, \tilde{g}_t^{(k)})_{k=0}^{K}$.
    Recall
    \begin{equation*}
        \sigma_t\sigma_t^\top = \Sigma_t = \E[(\phi(h_t^{(0)}),\dotsc,\phi(h_t^{(K+1)}))^\top (\phi(h_t^{(0)}),\dotsc,\phi(h_t^{(K+1)}))]
    \end{equation*}
    and let 
    \begin{align*}
        \tilde\sigma_t\tilde\sigma_t^\top = \tilde\Sigma_t & = \E[(\phi(\tilde h_t^{(0)}),\dotsc,\phi(\tilde h_t^{(K+1)}))^\top (\phi(\tilde h_t^{(0)}),\dotsc,\phi(\tilde h_t^{(K+1)}))] \\
        & = \E[(\phi(h_t^{(0)}),\dotsc,\phi(h_t^{(K)}),\phi(\tilde h_t^{(K+1)}))^\top (\phi(h_t^{(0)}),\dotsc,\phi(h_t^{(K)}),\phi(\tilde h_t^{(K+1)}))].
    \end{align*}
    Write $\Sigma_t$ in block matrix form
    \begin{align*}
        \Sigma_{11,t} & = \E[(\phi(h_t^{(0)}),\dotsc,\phi(h_t^{(K)}))^\top (\phi(h_t^{(0)}),\dotsc,\phi(h_t^{(K)}))], \\
        \Sigma_{21,t} & =\Sigma_{12,t}^\top = \E[\phi(h_t^{(K+1)}) (\phi(h_t^{(0)}),\dotsc,\phi(h_t^{(K)}))], \\
        \Sigma_{22,t} & = \E[\phi^2(h_t^{(K+1)})],
    \end{align*}
    corresponding to coordinates $0,1,\dotsc,K$ and $K+1$. 
    Also write $\sigma_t$, $\tilde\Sigma_t$ and $\tilde\sigma_t$ is the similar way.
    Then by Cholesky decomposition, we have
    \begin{align*}
        \sigma_{11,t} \sigma_{11,t}^\top & = \Sigma_{11,t}, & \tilde\sigma_{11,t} & = \sigma_{11,t}, \\
        \sigma_{12,t} &= 0, & \tilde\sigma_{12,t} &= 0, \\
        \sigma_{21,t}^\top &= \sigma_{11,t}^{-1} \Sigma_{12,t}, &  \tilde\sigma_{21,t}^\top &= \tilde\sigma_{11,t}^{-1} \tilde\Sigma_{12,t} = \sigma_{11,t}^{-1} \tilde\Sigma_{12,t}, \\
        \sigma_{22,t} &= \sqrt{\Sigma_{22,t} - \sigma_{21,t}\sigma_{21,t}^\top}, & \tilde\sigma_{22,t} &= \sqrt{\tilde\Sigma_{22,t} - \tilde\sigma_{21,t}\tilde\sigma_{21,t}^\top}.
    \end{align*}
    % where the first line uses the induction assumption that $(h_t(k))_{k=0}^{K} = (\tilde{h}_t(k))_{k=0}^{K}$.
    By Lipschitz property of $\phi$, we have
    \begin{equation}
        \label{eq:uniqueness-Sigma-difference}
        \|\Sigma_{12,t} - \tilde\Sigma_{12,t}\|^2 +  \|\Sigma_{22,t} - \tilde\Sigma_{22,t}\|^2 \le C \E[h_t^{(K+1)}-\tilde h_t^{(K+1)}]^2.
    \end{equation} 
    By Assumption \ref{assmp:eigenvalue}, there exits some $arepsilon>0$ such that all eigenvalues of $\Sigma_t$ are at least $arepsilon$.
    It then follows from the eigenvalue interlacing theorem (of principal submatrix) that all eigenvalues of $\Sigma_{11,t}$ are at least $arepsilon$. 
    Then we have
    \begin{equation}
        \label{eq:uniqueness-sigma11-bound}
        \|\sigma_{11,t}^{-1}\|^2 = \text{trace}((\sigma_{11,t}^{-1})^\top\sigma_{11,t}^{-1}) = \text{trace}(\Sigma_{11,t}^{-1}) = \sum_{k=0}^K \frac{1}{\lambda_k(\Sigma_{11,t})} \le \frac{K+1}{arepsilon}.
    \end{equation}
    Therefore
    \begin{equation}
        \|\sigma_{21,t}-\tilde\sigma_{21,t}\|^2
        % = \|(\sigma_{11,t}^{-1} \Sigma_{12,t} - \sigma_{11,t}^{-1} \tilde\Sigma_{12,t})^\top\|^2 
        = \|\sigma_{11,t}^{-1} (\Sigma_{12,t} - \tilde\Sigma_{12,t})\|^2 \le \|\sigma_{11,t}^{-1}\|^2 \|\Sigma_{12,t} - \tilde\Sigma_{12,t}\|^2
        % \le \frac{K+1}{arepsilon} \sum_{k=0}^K \left(\E[\phi(h_t(k))(\phi(h_t(K+1))-\phi(\tilde h_t(K+1)))]\right)^2
        \le C \E[h_t^{(K+1)}-\tilde h_t^{(K+1)}]^2. \label{eq:uniqueness-sigma21}
    \end{equation}  
    where the last inequality uses \eqref{eq:uniqueness-sigma11-bound} and \eqref{eq:uniqueness-Sigma-difference}.
    Similarly,
    \begin{align}
        & |\sigma_{21,t}\sigma_{21,t}^\top - \tilde\sigma_{21,t}\tilde\sigma_{21,t}^\top|^2 = |(\sigma_{21,t}-\tilde\sigma_{21,t}) (\sigma_{21,t}+\tilde\sigma_{21,t})^\top|^2 \notag \\
        & = |(\sigma_{21,t}-\tilde\sigma_{21,t}) \sigma_{11,t}^{-1} (\Sigma_{12,t}+\tilde\Sigma_{12,t})|^2
        \le \|\sigma_{21,t}-\tilde\sigma_{21,t}\|^2 \|\sigma_{11,t}^{-1}\|^2 \|\Sigma_{12,t}+\tilde\Sigma_{12,t}\|^2 \notag \\
        & \le C \E[h_t^{(K+1)}-\tilde h_t^{(K+1)}]^2, \label{eq:uniqueness-sigma21-square}
    \end{align}   
    where we have used Lemma \ref{lem:h-moment-bound} to get $\|\Sigma_{12,t}+\tilde\Sigma_{12,t}\|^2 \le C$.
    Also, note that
    \begin{equation*}
        \sigma_{22,t}^2 = \Sigma_{22,t} - \sigma_{21,t}\sigma_{21,t}^\top = \Sigma_{22,t} - \Sigma_{21,t}\Sigma_{11,t}^{-1}\Sigma_{12,t}
    \end{equation*}
    is the Schur complement of the block $\Sigma_{11,t}$ of the matrix $\Sigma_t$, so that its eigenvalues are at least $arepsilon$ as well.
    Therefore $\sigma_{22,t} \ge \sqrt{arepsilon}$ and hence
    \begin{align}
        \|\sigma_{22,t}-\tilde\sigma_{22,t}\|^2 & = \left(\frac{\sigma_{22,t}^2-\tilde\sigma_{22,t}^2}{\sigma_{22,t}+\tilde\sigma_{22,t}}\right)^2 \le \frac{1}{arepsilon} [(\Sigma_{22,t} - \sigma_{21,t}\sigma_{21,t}^\top)-(\tilde\Sigma_{22,t} - \tilde\sigma_{21,t}\tilde\sigma_{21,t}^\top)]^2 \notag \\
        & \le C|\Sigma_{22,t}-\tilde\Sigma_{22,t}|^2 + C|\sigma_{21,t}\sigma_{21,t}^\top - \tilde\sigma_{21,t}\tilde\sigma_{21,t}^\top|^2 \le C \E[h_t^{(K+1)}-\tilde h_t^{(K+1)}]^2, \label{eq:uniqueness-sigma22}
    \end{align}
    where the last inequality uses \eqref{eq:uniqueness-Sigma-difference} and \eqref{eq:uniqueness-sigma21-square}.   
    Now note that
    \begin{align*}
        h_u^{(K+1)}-\tilde{h}_u^{(K+1)} & = -\eta_0\sum_{k=0}^{K}\int_0^u \mathcal{L}^{\prime}(k) g_s^{(k)} \E [\phi(h_s^{(k)})(\phi(h_s^{(K+1)})-\phi(\tilde h_s^{(K+1)}))] \,ds \\
        & \quad + \int_0^u (\sigma_{21,s} - \tilde\sigma_{21,s}, \sigma_{22,s} - \tilde\sigma_{22,s}) \,dW_s.
    \end{align*}
    Therefore
    \begin{align}
        \E[\sup_{u \le t} |h_u^{(K+1)}-\tilde{h}_u^{(K+1)}|^2]
        & \le C \sum_{k=0}^{K} \E\left[\sup_{u \le t} \left|\int_0^u \mathcal{L}^{\prime}(k) g_s^{(k)} \E [\phi(h_s^{(k)})(\phi(h_s^{(K+1)})-\phi(\tilde h_s^{(K+1)}))] \,ds\right|^2\right] \notag \\
        & \quad + C \E\left[\sup_{u \le t} \left| \int_0^u (\sigma_{21,s} - \tilde\sigma_{21,s}, \sigma_{22,s} - \tilde\sigma_{22,s}) \,dW_s \right|^2\right]. \label{eq:uniqueness-pf}
    \end{align}
    Here using Cauchy-Schwarz inequality, we can bound the first term on the right side by
    \begin{align*}
        & C \sum_{k=0}^{K} \E\int_0^t \left|g_s^{(k)} \E [\phi(h_s^{(k)})(\phi(h_s^{(K+1)})-\phi(\tilde h_s^{(K+1)}))]\right|^2ds \\
        & \le C \int_0^t \E [\phi(h_s^{(K+1)})-\phi(\tilde h_s^{(K+1)})]^2\,ds \le C \int_0^t \E[\sup_{u \le s} |h_u^{(K+1)}-\tilde{h}_u^{(K+1)}|^2] \,ds.
    \end{align*}
    Using Doob's maximal inequality, we can bound the second term on the right side of \eqref{eq:uniqueness-pf} by
    \begin{align*}
        & C \E \left| \int_0^t (\sigma_{21,s} - \tilde\sigma_{21,s}, \sigma_{22,s} - \tilde\sigma_{22,s}) \,dW_s \right|^2
        = C \int_0^t [\|\sigma_{21,s} - \tilde\sigma_{21,s}\|^2 + \|\sigma_{22,s} - \tilde\sigma_{22,s}\|^2] \,ds \\
        % \mbox{(Minkowski's inequality)} \quad 
        & \quad \le C \int_0^t \E[h_s^{(K+1)}-\tilde h_s^{(K+1)}]^2 \,ds
        \le C \int_0^t \E[\sup_{u \le s} |h_s^{(K+1)}-\tilde h_s^{(K+1)}|^2] \,ds.
    \end{align*}
    where the first inequality uses \eqref{eq:uniqueness-sigma21} and \eqref{eq:uniqueness-sigma22}.
    Combining above three estimates, we have
    \begin{align*}
        \E[\sup_{u \le t} |h_u^{(K+1)}-\tilde{h}_u^{(K+1)}|^2] \le C \int_0^t \E[\sup_{u \le s} |h_s^{(K+1)}-\tilde h_s^{(K+1)}|^2] \,ds.
    \end{align*}
    It then follows from Gronwall's inequality that
    \begin{equation*}
        \E[\sup_{u \le 1} |h_u^{(K+1)}-\tilde{h}_u^{(K+1)}|^2] = 0.
    \end{equation*}    
    This gives uniqueness of $h_t^{(K+1)}$.
    Since $\phi'$ is bounded, similar arguments as above give uniqueness of $g_t^{(K+1)}$.
    Therefore the statement holds for $K+1$ and this completes the proof by induction.
\end{proof}

Before proving the convergence rate as $L \to \infty$, we will need the following two preparation results.
Recall $t_L:=\lfloor tL \rfloor /L$ and $\tilde{t}_L:=\lceil tL \rceil /L$ are the times corresponding to the discrete step.

\begin{lemma}
    \label{lem:h-fluctuation}
    If $\{h_t^{(k)},g_t^{(k)}, k=0,1,\dotsc,\kappa\}$ is a solution, then
    \begin{equation*}
        \E \|h_t-h_{t_L}\|^2 \le C(t-t_L) \le C/L, \quad \E \|g_t-g_{\tilde{t}_L}\|^2 \le C(\tilde{t}_L-t) \le C/L.
    \end{equation*}
\end{lemma}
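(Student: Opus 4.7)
The plan is to treat this as a standard time-regularity estimate for the McKean--Vlasov SDE system, leveraging the moment bounds already in hand from Lemma \ref{lem:h-moment-bound} and the Lipschitz hypotheses on $\phi$ and $\phi'$. I would not use induction on $K$ here: given that a solution $\{(h_t^{(k)},g_t^{(k)})\}_{k=0}^{K}$ exists with $\sup_{0\le t\le T}\E\|h_t\|^2<\infty$ and $\sup_{0\le t\le T}\E\|g_t\|^2<\infty$, the estimates follow directly from It\^o's formula together with moment and isometry bounds.

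For the forward part, I would write
\begin{equation*}
    h_t - h_{t_L} \;=\; \int_{t_L}^{t} b_s\,ds \;+\; \int_{t_L}^{t} \sigma_s\,dW_s,
\end{equation*}
apply $(a+b)^2\le 2a^2+2b^2$, Cauchy--Schwarz on the drift integral, and It\^o's isometry on the martingale integral, to obtain
\begin{equation*}
    \E\|h_t-h_{t_L}\|^2 \;\le\; 2(t-t_L)\int_{t_L}^{t}\E\|b_s\|^2\,ds \;+\; 2\int_{t_L}^{t}\E\|\sigma_s\|_F^2\,ds.
\end{equation*}
The drift $b_{s,k}=-\eta_0\sum_{i<k}\mathcal{L}'(i)\,g_s^{(i)}\,\E[\phi(h_s^{(i)})\phi(h_s^{(k)})]$ is bounded in $L^2$ by $C$, using Lipschitz continuity of $\mathcal{L}'$ and $\phi$ (which gives at most linear growth), boundedness of $\mathcal{L}'(i)$ (a deterministic number), and Lemma \ref{lem:h-moment-bound} to control $\E[g_s^{(i)}]^2$ and $\E\phi^2(h_s^{(k)})$. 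The Frobenius norm of the diffusion satisfies $\|\sigma_s\|_F^2=\mathrm{tr}(\Sigma_s)=\sum_k\E\phi^2(h_s^{(k)})\le C$ by the same moment bound. Hence both terms combine to give $\E\|h_t-h_{t_L}\|^2\le C(t-t_L)^2+C(t-t_L)\le C(t-t_L)$, with the worst-case $t-t_L\le T/L$ delivering the $C/L$ bound.

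For the backward part, I would reinterpret $g_t$ in forward time via $\widehat{g}_u:=g_{T-u}$ as in Remark \ref{rmk:well-FBSDE}, so that
\begin{equation*}
    g_t - g_{\tilde{t}_L} \;=\; -\int_{t}^{\tilde{t}_L} c_s\,ds \;-\; \int_{t}^{\tilde{t}_L} D_s\theta_s\,dB_s,
\end{equation*}
and repeat the same two-term split. The drift $c_{s,k}=-\eta_0\phi'(h_s^{(k)})\sum_{i<k}\mathcal{L}'(i)\phi(h_s^{(i)})\E[g_s^{(i)}g_s^{(k)}]$ is bounded in $L^2$ because $|\phi'|\le K_1$ (a consequence of $\phi$ being $K_1$-Lipschitz), while $\phi$ and $g^{(k)}$ have bounded second moments by Lemma \ref{lem:h-moment-bound}. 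For the diffusion, $\|D_s\theta_s\|_F^2=\mathrm{tr}(D_s\Theta_s D_s)\le K_1^2\,\mathrm{tr}(\Theta_s)=K_1^2\sum_k\E[g_s^{(k)}]^2\le C$. The same Cauchy--Schwarz plus It\^o-isometry bound then yields $\E\|g_t-g_{\tilde{t}_L}\|^2\le C(\tilde{t}_L-t)\le C/L$.

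I do not expect a genuine obstacle here; the main thing to be careful about is that the constant $C$ depends on $K$ (since the dimension of $h_t$ and $g_t$ grows with the number of training iterations), on $T$, on the Lipschitz constants $K_1,K_2$, on $\eta_0$, and on $\sup_t\E\|h_t\|^2$ and $\sup_t\E\|g_t\|^2$, but crucially \emph{not} on $L$. This $L$-uniformity is precisely what makes the estimate useful in the subsequent Euler--Maruyama error analysis used to pass from the discrete system in Proposition \ref{prop: width convergence in kth pass} to the forward--backward SDE limit. In particular, any appearance of $\|\sigma_t^{-1}\|$ or $\|\sigma_{11,t}^{-1}\|$ (as used in the uniqueness proof of Proposition \ref{prop:uniqueness}) is avoided, so Assumption \ref{assmp:eigenvalue} is not needed for this fluctuation bound.
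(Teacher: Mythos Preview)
Your proposal is correct and follows essentially the same route as the paper's own proof: bound $\E\|b_t\|^2$, $\|\sigma_t\|_F^2=\mathrm{tr}(\Sigma_t)$, $\E\|c_t\|^2$, and $\|\theta_t\|_F^2=\mathrm{tr}(\Theta_t)$ uniformly in $t$ via Lemma~\ref{lem:h-moment-bound} and the Lipschitz hypotheses, then conclude the $C(t-t_L)$ time-regularity estimate. The paper's proof is terser (it simply records these four bounds and says ``these give the desired result''), whereas you have spelled out the Cauchy--Schwarz/It\^o-isometry step and the handling of $D_s$; your additional remarks about $K$-dependence of the constant and the non-use of Assumption~\ref{assmp:eigenvalue} are accurate and consistent with the paper.
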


\begin{proof}[Proof of Lemma \ref{lem:h-fluctuation}]
    Using Lemma \ref{lem:h-moment-bound} and Lipscthiz property of $\phi$, we can deduce
    \begin{align*}
        \E \|b_t\|^2 \le C, \qquad \|\sigma_t\|^2 = \text{trace}(\sigma_t\sigma_t^\top) = \text{trace}(\Sigma_t) \le C,
    \end{align*}
    and similarly $\E \|c_t\|^2 \le C$ and $\|\theta_t\|^2 \le C$.
    These give the desired result.
\end{proof}

Recall the infinite width limit $h_\ell^L := (h_t^{(0),L},\dotsc,h_t^{(K),L})$ and $g_\ell^L := (g_t^{(0),L},\dotsc,g_t^{(K),L})$.

\begin{lemma}
    \label{lem:h-L-moment-bound}
    $\displaystyle \sup_{L \ge 1} \sup_{\ell=0,\dotsc,L} \E\|h_\ell^L\|^2 < \infty$ and $\displaystyle \sup_{L \ge 1} \sup_{\ell=0,\dotsc,L} \E\|g_\ell^L\|^2 < \infty$.
\end{lemma}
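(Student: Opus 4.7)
The plan is to induct on $K$, using the layer-wise recursions for $h_\ell^{(K),L}$ and $g_\ell^{(K),L}$ together with discrete Gronwall (Lemma~\ref{lem:Gronwall-discrete}). The base case $K=0$ is already given by Lemmas~\ref{lem:h-L-moment-bound-first} and~\ref{lem:g-L-moment-bound-first}, which supply uniform bounds $\sup_{L,\ell}\E[h_\ell^{(0),L}]^2<\infty$ and $\sup_{L,\ell}\E[g_\ell^{(0),L}]^2<\infty$. Assume inductively that the analogous uniform bounds hold for all iterates $k=0,\ldots,K-1$. Then, by Cauchy--Schwarz and the Lipschitz continuity of $\mathcal{L}'$, $\sup_L\,|\mathcal{L}'(k,L)|\le C$ for every $k\le K-1$.

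Next I would bound $\E[h_\ell^{(K),L}]^2$. The initial value $h_0^{(K),L}=\tfrac{\|\vx\|}{\sqrt{d}}u(0)-\eta_0\sum_{k<K}\mathcal{L}'(k,L)g_0^{(k),L}\tfrac{\|\vx\|^2}{d}$ is bounded uniformly in $L$ by the induction hypothesis. For the layer recursion, I would take expectations of the squared norm, using: (i) the independence of $\{z_u^{(K),L}\}_u$ to collapse the noise contribution into $\tfrac{C}{L}\sum_{u=0}^{\ell-1}\E\phi^2(h_{u-1}^{(K),L})$; (ii) Cauchy--Schwarz and induction on the drift terms, noting $|\E[\phi(h_{u-1}^{(k),L})\phi(h_{u-1}^{(K),L})]|\le C(1+\sqrt{\E[h_{u-1}^{(K),L}]^2})$; (iii) the $K_1$-Lipschitz bound $\phi^2(x)\le C(1+x^2)$; and (iv) the $\tfrac{T^2}{L^2}$-scaled correction, whose $\phi'$ factors are uniformly bounded by $K_1^2$, so that after summation it contributes at most $\tfrac{C}{L}$ times a bounded quantity. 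Collecting these estimates gives
\begin{equation*}
\E[h_\ell^{(K),L}]^2 \ \le\ C \ +\ \frac{C}{L}\sum_{u=0}^{\ell-1}\E[h_u^{(K),L}]^2,
\end{equation*}
so discrete Gronwall yields $\E[h_\ell^{(K),L}]^2\le Ce^{C\ell/L}\le Ce^C$, uniformly in $L$ and $\ell$.

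The argument for $g_\ell^{(K),L}$ is parallel but run backward in $\ell$. The terminal value $g_L^{(K),L}=v(0)-\eta_0\sum_{k<K}\mathcal{L}'(k,L)h_L^{(k),L}$ is bounded by induction. In the recursion, $\phi'(h_{\ell-1}^{(K),L})$ is uniformly bounded by $K_1$ (since $\phi$ is $K_1$-Lipschitz), and the drift factor $\E[g_\ell^{(k),L}g_\ell^{(K),L}]$ is controlled by $C\sqrt{\E[g_\ell^{(K),L}]^2}$ via Cauchy--Schwarz and induction on $k<K$. The independent noises $\{\tilde z_u^{(K),L}\}_u$ contribute a term proportional to $\tfrac{1}{L}\sum_u\E[g_u^{(K),L}]^2$ after using boundedness of $\phi'$ and the variance identity. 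The $\tfrac{T^2}{L^2}$ correction in \eqref{eq:g-tau-squared} is handled identically to the forward case. One then obtains
\begin{equation*}
\E[g_\ell^{(K),L}]^2 \ \le\ C\ +\ \frac{C}{L}\sum_{u=\ell+1}^{L}\E[g_u^{(K),L}]^2,
\end{equation*}
and a backward discrete Gronwall inequality (obtained by reversing the time index) closes the induction.

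The main obstacle I anticipate is keeping the Gronwall constants independent of $L$ in the presence of the mean-field cross terms $\E[\phi(h^{(k),L})\phi(h^{(K),L})]$ and $\E[g^{(k),L}g^{(K),L}]$: these must be bounded by $C(1+\sqrt{\E[\,\cdot^{(K),L}\,]^2})$ rather than by the product $\sqrt{\E[\,\cdot^{(k),L}\,]^2}\sqrt{\E[\,\cdot^{(K),L}\,]^2}$ in a way that propagates cleanly under the induction, and this is where the inductive bounds on $k<K$ are essential. Once that estimate is in place, the Euler--Maruyama-style sums collapse and Gronwall delivers the uniform bounds.
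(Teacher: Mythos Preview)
Your proposal is correct and follows essentially the same route as the paper: induction on $K$ with the base case from Lemmas~\ref{lem:h-L-moment-bound-first} and~\ref{lem:g-L-moment-bound-first}, bounding the initial value via the induction hypothesis, collapsing the noise sum by independence of $\{z_u\}$, controlling the drift and $T^2/L^2$ correction via Cauchy--Schwarz, boundedness of $\phi'$, and the inductive bounds for $k<K$, and then closing with discrete Gronwall. The paper's write-up is slightly terser (it absorbs the bound on $\mathcal{L}'(k,L)$ into the constant without comment), but the structure and all the key estimates match yours.
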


\begin{proof}[Proof of Lemma \ref{lem:h-L-moment-bound}]
    We will prove by induction. 
    By Lemmas \ref{lem:h-L-moment-bound-first} and \ref{lem:g-L-moment-bound-first}, the statement holds for $K=0$. 

    Now suppose the statement holds for $K$, namely  
    \begin{equation*}
        \sup_{L \ge 1} \sup_{\ell=0,\dotsc,L} \sum_{k=0}^K \E [h_\ell^{(k),L}]^2 < \infty, \quad \sup_{L \ge 1} \sup_{\ell=0,\dotsc,L} \sum_{k=0}^K \E [g_\ell^{(k),L}]^2 < \infty.
    \end{equation*}
    We will show that
    \begin{equation*}
        \sup_{L \ge 1} \sup_{\ell=0,\dotsc,L} \E [h_\ell^{(K+1),L}]^2 < \infty, \quad \sup_{L \ge 1} \sup_{\ell=0,\dotsc,L} \E [g_\ell^{(K+1),L}]^2 < \infty.
    \end{equation*}
    From the evolution of $h_\ell^{(K+1),L}$ and independence of $z_\cdot^{(K+1),L}$, we have
    \begin{align*}
        & \E [h_\ell^{(K+1),L}]^2 \\
        & \le 4\E [h_0^{(K+1),L}]^2 + 4\E \left[\sum_{u=1}^\ell \frac{1}{\sqrt{L}} z_u^{(K+1),L} \right]^2 \\
        & \quad + 4\E \left[ \sum_{u=1}^\ell \eta_0 \frac{1}{L} \sum_{k=0}^{K} \mathcal{L}^{\prime}(k,L) g_{u}^{(k),L} \E(\phi(h_{u-1}^{(k),L}) \phi(h_{u-1}^{(K+1),L})) \right]^2 \\
        & + 4\E \left[ \sum_{u=1}^\ell \eta_0 \frac{1}{L^2} \sum_{k=0}^{K} \mathcal{L}^{\prime}(k,L) g_{u}^{(k),L} \E(\phi(h_{u-2}^{(k),L}) \phi(h_{u-2}^{(K+1),L})) \E(\phi^{\prime}(h_{u-1}^{(k),L}) \phi^{\prime}(h_{u-1}^{(K+1),L})) \right]^2 \\
        & \le C + \frac{C}{L}\sum_{u=1}^\ell \E \phi^2(h_{u-1}^{(K+1),L}) + \frac{C\ell}{L^2}\sum_{u=1}^\ell \E \phi^2(h_{u-1}^{(K+1),L}) \\
        & \quad + \frac{C\ell}{L^4}\sum_{u=1}^\ell \left[ \E \phi^2(h_{u-2}^{(K+1),L}) + \E [\phi^{\prime}(h_{u-1}^{(K+1),L})]^2 \right] \\
        & \le C + \frac{C}{L} \sum_{u=0}^{\ell-1} \E [h_{u}^{(K+1),L}]^2.
    \end{align*}
    It then follows from discrete Gronwall's lemma again that
    \begin{equation*}
        \E [h_\ell^{(K+1),L}]^2 \le C e^{C\ell/L}.
    \end{equation*}
    Therefore $\displaystyle \sup_{L \ge 1} \sup_{\ell=0,\dotsc,L} \E[h_\ell^{(K+1),L}]^2 < \infty$.
    A similar argument applied to \eqref{eq:g-tau-squared} gives $\displaystyle \sup_{L \ge 1} \sup_{\ell=0,\dotsc,L} \E[g_\ell^{(K+1),L}]^2 < \infty$ and hence the statement also holds for $K+1$.
    This completes the proof by induction.
\end{proof}

Now we couple $h_\ell^L$ and $g_\ell^L$ with $h_t$ and $g_t$ respectively, and state our result on the convergence rate of $1/L$ as $L \to \infty$.
Denote by $L_t:=\lfloor tL \rfloor$, $L_s:=\lfloor \frac{s}L \rfloor$, $\tilde{L}_t:=\lceil tL \rceil$, and $\tilde{L}_s:=\lceil sL \rceil$  
for $s,t \in [0,1]$.
We can write $h_\ell^L = h_{\ell/L}^{(L)}$ and $g_\ell^L = g_{\ell/L}^{(L)}$, where $h_t^{(L)}$ and $g_t^{(L)}$ are continuous interpolations using the same Brownian motions $W_t$ and $B_t$:
\begin{align*}
    d h_t^{(L)} &= b_{L_t}^{(L)}\,dt + \sigma_{L_t}^{(L)}\,dW_t, \\
    d g_t^{(L)} &= c_{\tilde{L}_t}^{(L)}\,dt + D_{\tilde{L}_t}^{(L)}\theta_{\tilde{L}_t}^{(L)}\,dB_t.
\end{align*}
Here $b_{\ell}^{(L)}=(b_{\ell,k}^{(L)})_{k=0}^K$ and $c_\ell^{(L)}=(c_{\ell,k}^{(L)})_{k=0}^K$ are vectors given by
\begin{align}
    b_{\ell,k}^{(L)} & = -\eta_0 \sum_{i=0}^{k-1} \mathcal{L}^{\prime}(i,L) g_{\ell}^{(i),L} \E(\phi(h_{\ell-1}^{(i),L}) \phi(h_{\ell-1}^{(k),L})) \notag \\
    & \quad - \eta_0 \frac{1}{L} \sum_{i=0}^{k-1} \mathcal{L}^{\prime}(i,L) g_{\ell}^{(i),L} 
    \E(\phi(h_{\ell-2}^{(i),L}) \phi(h_{\ell-2}^{(k),L})) 
    \E(\phi'(h_{\ell-1}^{(i),L}) \phi'(h_{\ell-1}^{(k),L})), \notag \\
    c_{\ell,k}^{(L)} &:= -\eta_0 \phi^{\prime}(h_{\ell-1}^{(k),L}) \sum_{i=0}^{k-1} \mathcal{L}^{\prime}(i,L) \phi(h_{\ell-1}^{(i),L}) \E [g_{\ell}^{(i),L} g_{\ell}^{(k),L}] \notag \\
    & \quad -\eta_0 \frac{1}{L} \phi^{\prime}(h_{\ell-1}^{(k),L}) \sum_{i=0}^{k-1} \mathcal{L}'(i,L) \phi(h_{\ell-1}^{(i),L}) \E(g_{\ell+1}^{(i),L} g_{\ell+1}^{(k),L}) \E[\phi'(h_{\ell}^{(i),L}) \phi'(h_{\ell}^{(k),L})], \label{eq:g-c-term}
\end{align}
$D_\ell^{(L)}$ is a diagonal matrix given by
\begin{align*}
    D_\ell^{(L)} &= \text{diag} \{\phi^{\prime}(h_{\ell-1}^{(0),L}),\dotsc,\phi^{\prime}(h_{\ell-1}^{(K),L})\},
\end{align*}
and $\sigma_\ell^{(L)}$ and $\theta_\ell^{(L)}$ are (the Cholesky decomposition) such that
\begin{align*}
    \sigma_\ell^{(L)} (\sigma_\ell^{(L)})^\top = \Sigma_\ell^{(L)} & := \E[(\phi(h_{\ell-1}^{(0),L}),\dotsc,\phi(h_{\ell-1}^{(K),L}))^\top (\phi(h_{\ell-1}^{(0),L}),\dotsc,\phi(h_{\ell-1}^{(K),L}))], \\
    \theta_\ell^{(L)} (\theta_\ell^{(L)})^\top = \Theta_\ell^{(L)} & := \E[(g_\ell^{(0),L},\dotsc,g_\ell^{(K),L})^\top (g_\ell^{(0),L},\dotsc,g_\ell^{(K),L})].
\end{align*}

The following proposition says that the $L^2$ error decays at a rate of $1/L$ for the coupled difference between $h_\ell^L$ (resp.\ $g_\ell^L$), the finite depth process at discrete step $\ell$, and $h_{\ell/L}$ (resp.\ $g_{\ell/L}$), the corresponding infinite-depth process at time $\ell/L$.

\begin{proposition}
    \label{prop:convergence-L}
    Suppose Assumption \ref{assmp:eigenvalue} holds.
    Then for all $L \ge 1$,
    \begin{equation*}
        \sup_{\ell=0,1,\dotsc,L} \E \|h_\ell^L - h_{\ell/L}\|^2 \le C/L, \quad \sup_{\ell=0,1,\dotsc,L} \E \|g_\ell^L - g_{\ell/L}\|^2 \le C/L.
    \end{equation*}
\end{proposition}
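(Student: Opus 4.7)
My plan is to proceed by induction on the training index $K$, with the base case $K=0$ already established in Propositions~\ref{prop:convergence-L-h0} and \ref{prop:convergence-L-g0}. For the inductive step, I assume the bounds hold for coordinates $0,1,\dots,K$ and extend them to coordinate $K+1$, using the nested structure of the system (the equation for the new coordinate depends only on already-controlled ones and itself). I fix a synchronous coupling in which $h_t^{(L)}$ and $h_t$ are driven by the same Brownian motion $W_t$, and similarly $g_t^{(L)}$ and $g_t$ by $B_t$. Writing
\begin{equation*}
    h_{\ell T/L}^{(L)} - h_{\ell T/L} = \int_0^{\ell T/L}\bigl(b_{L_s}^{(L)} - b_s\bigr)\,ds + \int_0^{\ell T/L}\bigl(\sigma_{L_s}^{(L)} - \sigma_s\bigr)\,dW_s,
\end{equation*}
and an analogous identity for the (time-reversed, cf.\ Remark~\ref{rmk:well-FBSDE}) $g$-process, I define the running errors $\rho_\ell^h := \sup_{u\le \ell}\E\|h_u^L - h_{uT/L}\|^2$ and $\rho_\ell^g := \sup_{u\le \ell}\E\|g_u^L - g_{uT/L}\|^2$, whose bound $\le C/L$ is the goal.

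\textbf{Error decomposition.} I split the drift difference $b_{L_s}^{(L)}-b_s$ into three pieces: (i) the higher-order $\tau^2$ correction terms present only in the discrete system (the second line of \eqref{eq:g-c-term} and its analog for $b^{(L)}$), whose integrated $L^2$ contribution is at most $O(1/L^2)$ by the moment bound of Lemma~\ref{lem:h-L-moment-bound}; (ii) the temporal discretization $b_{s_L}-b_s$ controlled by Lemma~\ref{lem:h-fluctuation} via Lipschitzness of $\phi,\phi',\mathcal{L}'$ against its arguments, contributing $O(1/L)$; and (iii) the substitution error replacing $(h^{(k),L}_{s_L}, g^{(k),L}_{s_L})$ by $(h^{(k)}_s, g^{(k)}_s)$, which is $O(1/L)$ for $k\le K$ by the induction hypothesis and is bounded by $C(\rho_{L_s}^h + \rho_{L_s}^g)$ for the new coordinate $k=K+1$. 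For the diffusion, the It\^o isometry reduces the bound to $\int_0^{\ell T/L}\|\sigma_{L_s}^{(L)}-\sigma_s\|^2\,ds$ and analogously for $\theta$.

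\textbf{Main obstacle: Cholesky stability under SPD.} The crux is controlling $\|\sigma_{L_s}^{(L)}-\sigma_s\|^2$ in terms of the running error. This is precisely the Cholesky Lipschitz argument developed in the uniqueness proof of Proposition~\ref{prop:uniqueness}—block-decompose along the old and new coordinates, exploit the sub-block inverse bound via \eqref{eq:uniqueness-sigma11-bound}, and conclude as in \eqref{eq:uniqueness-sigma21}--\eqref{eq:uniqueness-sigma22}. The delicate point is that this argument requires a uniform SPD lower bound not just on $\Sigma_s$ (given by Assumption~\ref{assmp:eigenvalue}) but on the discrete $\Sigma_{L_s}^{(L)}$ as well. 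I propagate this bound by combining $\|\Sigma_{L_s}^{(L)} - \Sigma_s\| \le C\rho_{L_s}^h + C/L$ (from Lipschitzness of $\phi$ plus time discretization) with Weyl's eigenvalue inequality, which gives $\lambda_{\min}(\Sigma_{L_s}^{(L)}) \ge \varepsilon/2$ as long as $\rho_{L_s}^h + 1/L$ is small. This is a bootstrap that must be closed alongside the main Gronwall argument, since $\rho^h$ itself is what we are trying to bound; in practice one runs induction on a stopping time or uses that the bound $\rho_\ell^h \le C/L$ is self-consistent for $L$ large enough. The diagonal matrix $D^{(L)}$ in the $g$-equation is handled by the Lipschitz property of $\phi'$.

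\textbf{Closing via Gronwall.} Combining all the estimates yields the discrete coupled inequality
\begin{equation*}
    \rho_\ell^h + \rho_\ell^g \;\le\; \frac{C}{L} + \frac{C}{L}\sum_{u=0}^{\ell-1}\bigl(\rho_u^h + \rho_u^g\bigr),
\end{equation*}
and the discrete Gronwall inequality (Lemma~\ref{lem:Gronwall-discrete}) produces $\rho_L^h + \rho_L^g \le C e^{C}/L$, completing the induction step. The time-reversed nature of the $g$-dynamics introduces no new obstacle since $\widehat g_t := g_{T-t}$ solves a standard forward SDE with deterministic (adapted) diffusion coefficient, to which the same synchronous-coupling analysis applies.
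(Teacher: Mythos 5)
Your proposal reproduces the paper's argument in all essential respects: induction on the training index $K$ with base case from Propositions~\ref{prop:convergence-L-h0}--\ref{prop:convergence-L-g0}, synchronous coupling of the discrete interpolation $(h_t^{(L)}, g_t^{(L)})$ with the NFD limit against the same Brownian motions, a drift/diffusion/$\tau^2$-correction decomposition of the error, the Cholesky-Lipschitz machinery from the uniqueness proof for the $\sigma$-difference, and a closing discrete Gronwall step. You also correctly flag the one place where the paper is terse: the assertion that \eqref{eq:uniqueness-sigma-Lipschitz}--\eqref{eq:uniqueness-sigma22} ``still hold when comparing $\sigma_{s_L}$ and $\sigma_{L_s}^{(L)}$'' implicitly needs the \emph{discrete} covariance block to be well-conditioned, whereas Assumption~\ref{assmp:eigenvalue} is stated only for the continuous $\Sigma_t$.

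Where you over-complicate is the proposed bootstrap. Your bound $\|\Sigma_{L_s}^{(L)}-\Sigma_s\|\le C\rho_{L_s}^h+C/L$ pulls in the full running error $\rho^h$, which contains the coordinate $K{+}1$ you are trying to control, and you then acknowledge the circularity by appealing to a stopping time or self-consistency argument. That machinery is avoidable: in the nested Cholesky computation, only the upper-left $\Sigma_{11}$ block (coordinates $0,\dotsc,K$) enters through an inverse, and by the induction hypothesis $\|\Sigma_{11,L_s}^{(L)}-\Sigma_{11,s_L}\|=O(1/\sqrt{L})$ is already controlled. Weyl's inequality then gives $\lambda_{\min}(\Sigma_{11,L_s}^{(L)})\ge \varepsilon/2$ for $L$ large without reference to $\rho^h$. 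For the $\sigma_{22}$-estimate, the denominator $\sigma_{22,s_L}+\sigma_{22,L_s}^{(L)}$ is bounded below by $\sqrt{\varepsilon}$ because $\sigma_{22,L_s}^{(L)}\ge 0$ automatically (it is a Cholesky diagonal), so no lower bound on the discrete Schur complement is needed at all. With these two observations the bootstrap disappears and the Gronwall inequality closes in one pass, matching the paper's proof.
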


% \begin{remark}
    % \begin{enumerate}[label=(\alph*)]
    % \item 
        % Proposition \ref{prop:convergence-L} says that the $L^2$ error decays at a rate of $1/L$ for the coupled difference between $h_\ell^L$ (resp.\ $g_\ell^L$), the finite depth process at discrete step $\ell$, and $h_{\ell/L}$ (resp.\ $g_{\ell/L}$), the corresponding infinite-depth process at time $\ell/L$.
    % \item
    %     \red{This could be removed if not necessary.} Classic arguments for the Euler--Maruyama method, under Lipschitz assumptions of drift and diffusion coefficients, would give a stronger result that estimates errors at the process level
    %     \begin{equation}
    %         \sup_{0 \le t \le T} \E \|h_t^{(L)} - h_t\|^2 \le C/L.
    %     \end{equation}        
    %     % where $h_t^{(L)}$ is certain continuous interpolation of $h_\ell^{(L)}$.
    %     In the setup here, however, it not clear whether the square root matrices of variance matrices of $z_\ell^{(L)}$ and $\tilde{z}_\ell^{(L)}$ depend on the law of $h^{(L)}$ and $g^{(L)}$ in a Lipschitz manner.
    %     Under further assumptions such as uniformly positive lower bounds on the eigenvalues that is similar to Assumption \ref{assmp:eigenvalue}), one can adapt the estimates in the proof of Proposition \ref{prop:uniqueness} to show such a Lipschitz property.
    %     But this is beyond the scope of this work.
    % \end{enumerate}  
% \end{remark}

\begin{proof}[Proof of Proposition \ref{prop:convergence-L}]
    We first note that the Lipschitz estimates in \eqref{eq:uniqueness-sigma-Lipschitz}, \eqref{eq:uniqueness-sigma21}, and \eqref{eq:uniqueness-sigma22} still hold when comparing $\sigma_{s_L}$ and $\sigma_{L_s}^{(L)}$, thanks to Assumption \ref{assmp:eigenvalue}.    
    We will again prove by induction.
    By Propositions \ref{prop:convergence-L-h0} and \ref{prop:convergence-L-g0}, the statement holds for $K=0$.

    Now suppose the statement holds for $K$, namely  
    \begin{equation*}
        \sup_{\ell=0,\dotsc,L} \sum_{k=0}^K \E [h_\ell^{(k),L} - h_{\ell/L}^{(k)}]^2 \le C/L, \quad \sup_{\ell=0,\dotsc,L} \sum_{k=0}^K \E [g_\ell^{(k),L} - g_{\ell/L}^{(k)}]^2 \le C/L.
    \end{equation*}
    We will show that
    \begin{equation*}
        \sup_{\ell=0,\dotsc,L} \E [h_\ell^{(K+1),L} - h_{\ell/L}^{(K+1)}]^2 \le C/L, \quad \sup_{\ell=0,\dotsc,L} \E [g_\ell^{(K+1),L} - g_{\ell/L}^{(K+1)}]^2 \le C/L.
    \end{equation*}
    Note that
    \begin{align*}
        \E [h_\ell^{(K+1),L} - h_{\ell/L}^{(K+1)}]^2
        & \le 3\E [h_0^{(K+1),L} - h_0^{(K+1)}]^2 + 3\E \left[\int_0^{\ell/L} (b_{L_s,K+1}^{(L)}-b_{s,K+1})\,ds\right]^2 \\
        & \quad + 3\E \left[\int_0^{\ell/L} (\sigma_{21,L_s}^{(L)} - \sigma_{21,s}, \sigma_{22,L_s}^{(L)} - \sigma_{22,s}) \,dW_s\right]^2.
    \end{align*}
    By the induction assumption, we can get
    \begin{equation*}
        \E [h_0^{(K+1),L} - h_0^{(K+1)}]^2 \le C/L, \qquad \E|b_{s_L,K+1}-b_{s,K+1}|^2 \le C/L.
    \end{equation*}
    By Lemmas \ref{lem:h-moment-bound}, \ref{lem:h-fluctuation}, and \ref{lem:h-L-moment-bound} we have
    \begin{align*}
        & \E \left[\int_0^{\ell/L} (b_{L_s,K+1}^{(L)}-b_{s,K+1})\,ds\right]^2 \\
        & \le C \int_0^{\ell/L} \E|b_{L_s,K+1}^{(L)}-b_{s_L,K+1}|^2\,ds + C \int_0^{\ell/L} \E|b_{s_L,K+1}-b_{s,K+1}|^2\,ds \\
        & = \frac{C}{L} \sum_{u=1}^{\ell} \E [b_{u,K+1}^{(L)} - b_{u /L,K+1}]^2 + \frac{C}{L} \\
        & \le \frac{C}{L} \sum_{u=0}^{\ell-1} \E [h_u^{(K+1),L} - h_{u /L}^{(K+1)}]^2 +\frac{C}{L^2}+ \frac{C}{L},
    \end{align*}
    where the faster-vanishing term $C/L^2$ arises from the last term in $b_{u,K+1}^{(L)}$.
    By Lemmas \ref{lem:h-moment-bound}, \ref{lem:h-fluctuation}, and \ref{lem:h-L-moment-bound}, and Lipschitz property of $\sigma$'s, we have
    \begin{align*}
        & \E \left[\int_0^{\ell/L} (\sigma_{21,L_s}^{(L)} - \sigma_{21,s}, \sigma_{22,L_s}^{(L)} - \sigma_{22,s}) \,dW_s\right]^2 \\
        & = \int_0^{\ell/L} [\|\sigma_{21,L_s}^{(L)} - \sigma_{21,s}\|^2 + \|\sigma_{22,L_s}^{(L)} - \sigma_{22,s}\|^2] \,ds \\
        & \le 2\int_0^{\ell/L} [\|\sigma_{21,L_s}^{(L)} - \sigma_{21,s_L}\|^2 + \|\sigma_{22,L_s}^{(L)} - \sigma_{22,s_L}\|^2] \,ds \\
        & \qquad + 2\int_0^{\ell/L} [\|\sigma_{21,s_L} - \sigma_{21,s}\|^2 + \|\sigma_{22,s_L} - \sigma_{22,s}\|^2] \,ds \\
        & \le \frac{C}{L} \sum_{u=0}^{\ell-1} \E [h_u^{(K+1),L} - h_{u /L}^{(K+1)}]^2 + \frac{C}{L}.
    \end{align*}    
    Combining the above estimates gives
    \begin{align*}
        \E [h_\ell^{(K+1),L} - h_{\ell/L}^{(K+1)}]^2 \le \frac{C}{L} \sum_{u=0}^{\ell-1} \E [h_u^{(K+1),L} - h_{u /L}^{(K+1)}]^2 + \frac{C}{L}.
    \end{align*}
    It then follows from discrete Gronwall's lemma that
    \begin{equation*}
        \E [h_\ell^{(K+1),L} - h_{\ell/L}^{(K+1)}]^2 \le \frac{C}{L} e^{C\ell/L}.
    \end{equation*}
    Therefore $\displaystyle \sup_{\ell=0,1,\dotsc,L} \E [h_\ell^{(K+1),L} - h_{\ell/L}^{(K+1)}]^2 \le C/L$.
    A similar argument applied to \eqref{eq:g-tau-squared} and \eqref{eq:g-c-term} gives $\displaystyle \sup_{\ell=0,1,\dotsc,L} \E [g_\ell^{(K+1),L} - g_{\ell/L}^{(K+1)}]^2 \le C/L$ and hence the statement holds for $K+1$.
    This completes the proof by induction.
\end{proof}

Theorem \ref{thm:feature-learning-dynamics} then follows from Proposition \ref{prop:convergence-L}.

\section{Additional Experiments}\label{app:experiments}

\begin{figure}[t]
    \centering
    % \newline
    \begin{minipage}{0.33\linewidth}
        \centering
        \includegraphics[width=\linewidth]{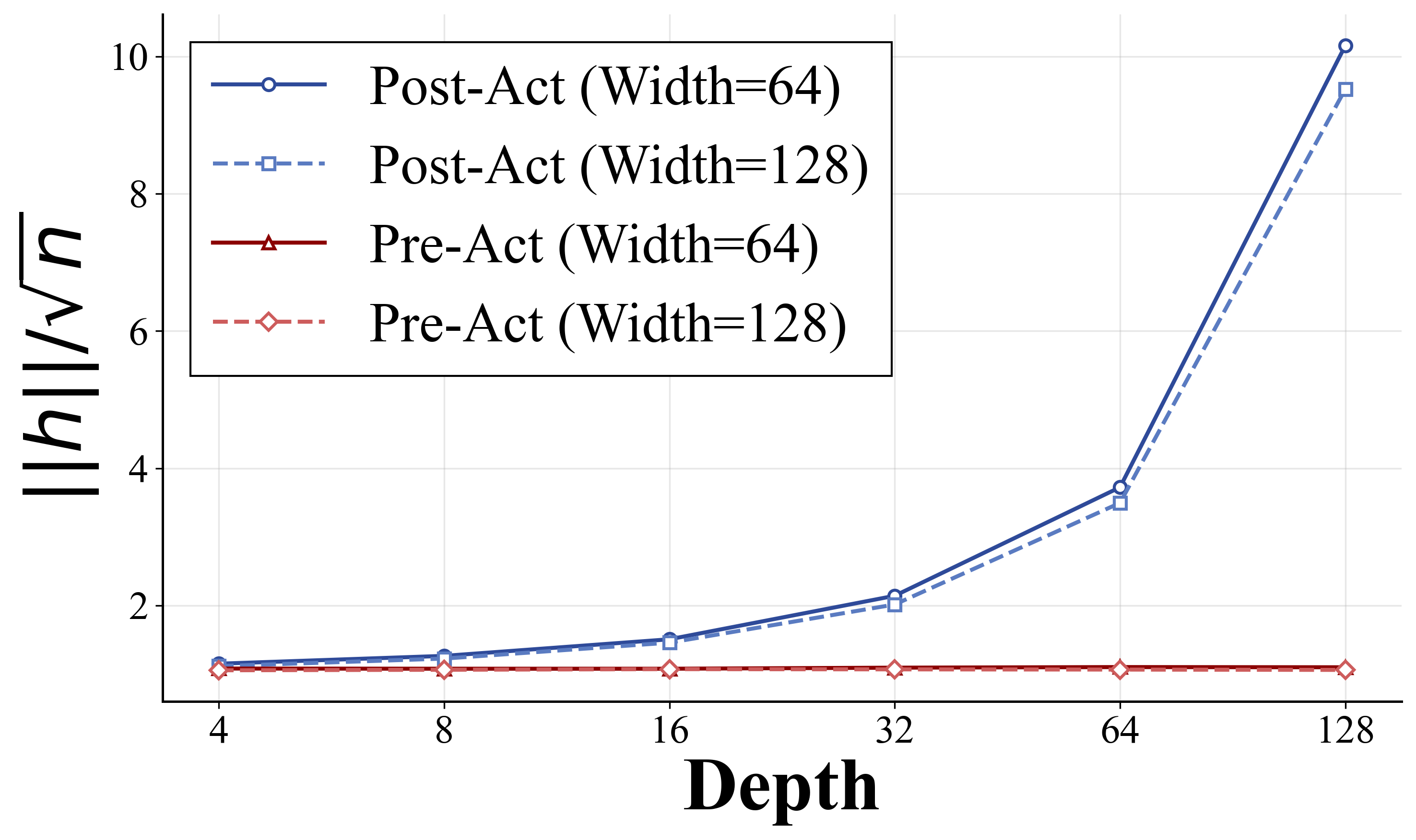}
        \textbf{(a)} Feature Norms vs. Depth
    \end{minipage}%
    \hfill
    \begin{minipage}{0.33\linewidth}
        \centering
        \includegraphics[width=\linewidth]{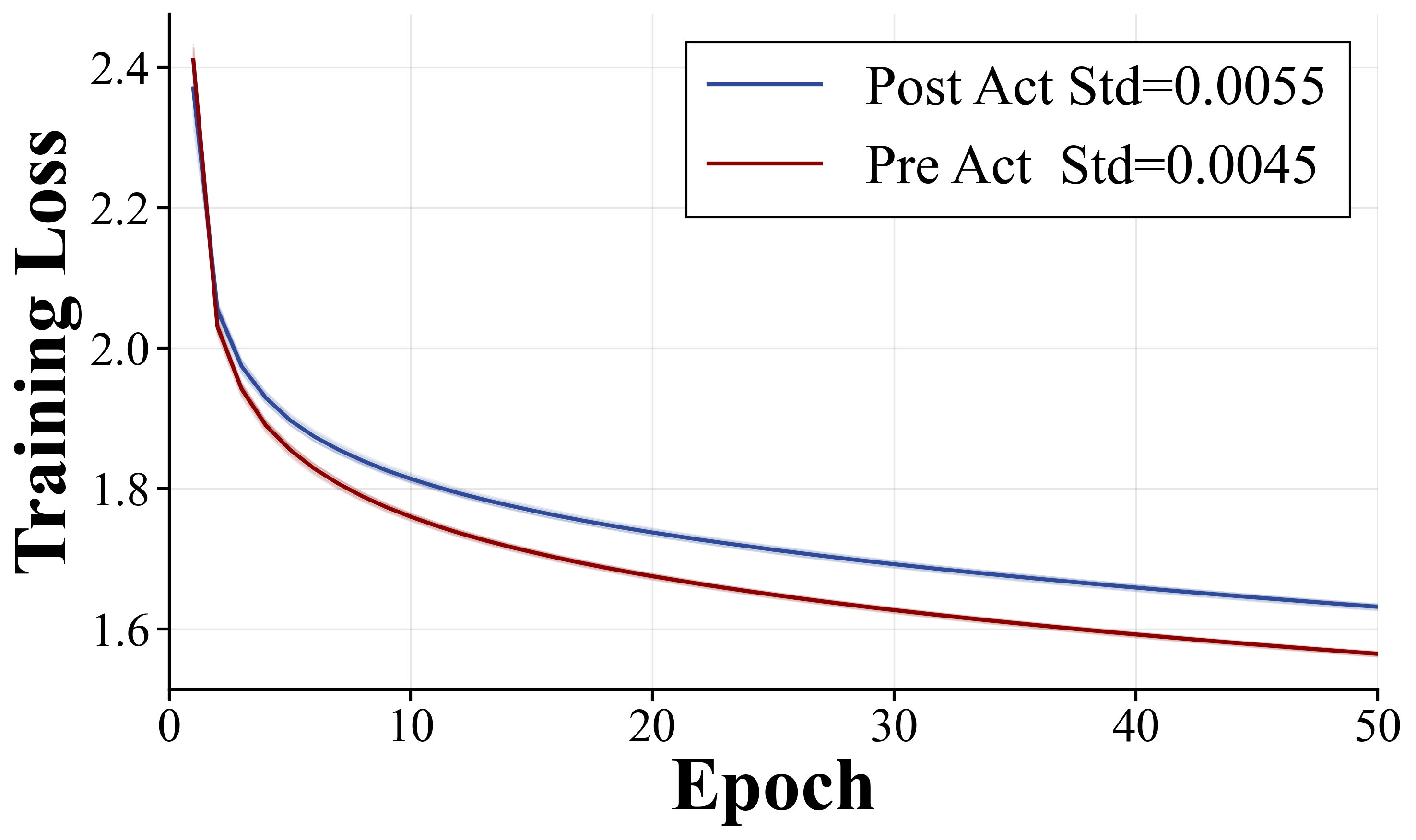}
        (b) Training Loss
    \end{minipage}
    \hfill
    \begin{minipage}{0.33\linewidth}
        \centering
        \includegraphics[width=\linewidth]{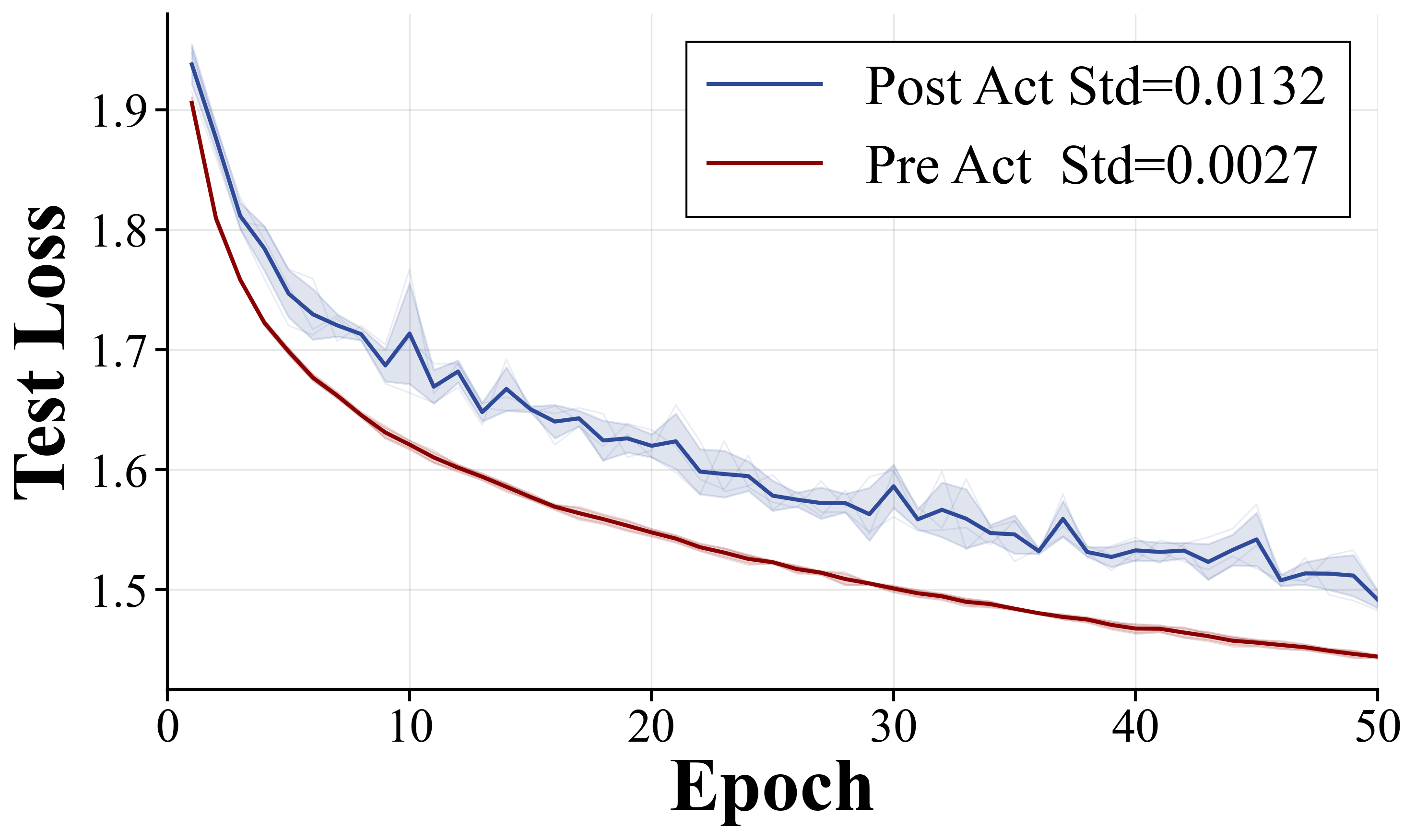}
        (c) Test Loss
    \end{minipage}
    \caption{\textbf{Pre- vs. post-activation ResNets under depth-$\mu$P.}
    (a) Pre-activation keeps feature norms stable across depth, whereas post-activation exhibits rapid growth.
    (b)--(c) For depth-64, width-128 ReLU ResNets trained on CIFAR-10 with SGD and batch size 128, pre-activation yields faster convergence, lower test loss, and smaller variance across runs.}
    \label{fig:pre-act-vs-post-act}
    % \vspace{-1.5em}
\end{figure}

In this section, we provide supplementary experiments that complement the results reported in the main paper. Unless otherwise specified, our default setup trains the ResNet in \eqref{eq:resnet} using mini-batch SGD with batch size 128 and effective learning rate $\eta_c=0.01$, across widths $n\in\{128,256\}$ and depths $L\in\{2,4,8,16\}$.

\paragraph{Pre- vs. Post-activation ResNets.}
We provide additional experiments supporting the architectural choice used in the main paper. In particular, we compare pre-activation and post-activation ResNets under depth-$\mu$P scaling. As suggested by Proposition~\ref{prop:post-act}, the pre-activation design used in our theoretical analysis is more stable at large depth, whereas post-activation ResNets can exhibit rapid feature-norm growth when the activation satisfies the positive dominance condition. Figure~\ref{fig:pre-act-vs-post-act} supports this behavior and shows that pre-activation also leads to more stable train/test loss curves.

\paragraph{GIA Restoration and Dynamics Alignment.}
\begin{figure*}[t]
    \centering

    %----------------------- Column 1: Vanilla DNN -----------------------
    \begin{minipage}[t]{0.32\textwidth}
        \centering
        \includegraphics[width=\linewidth]{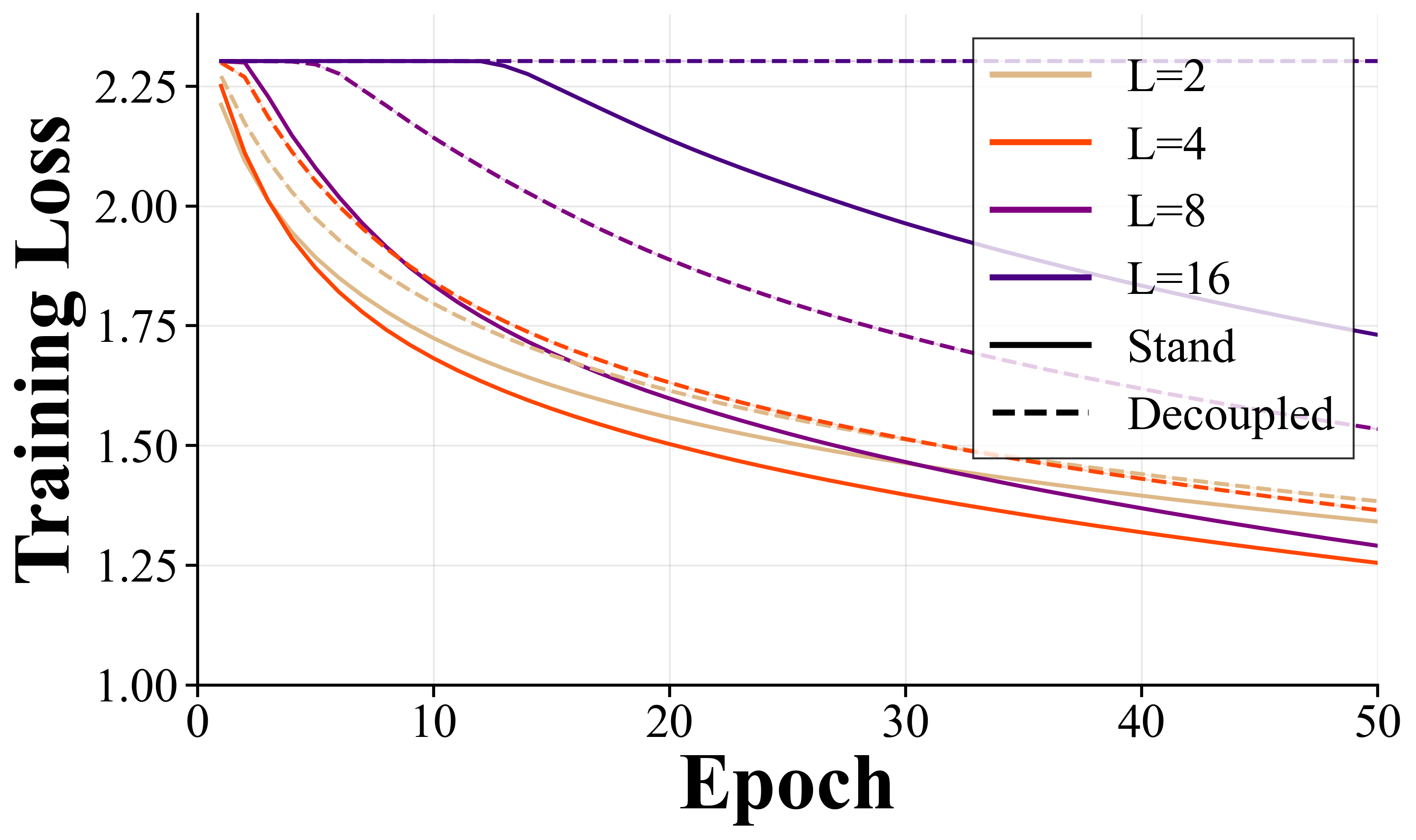}
        \vspace{0.3em}

        \includegraphics[width=\linewidth]{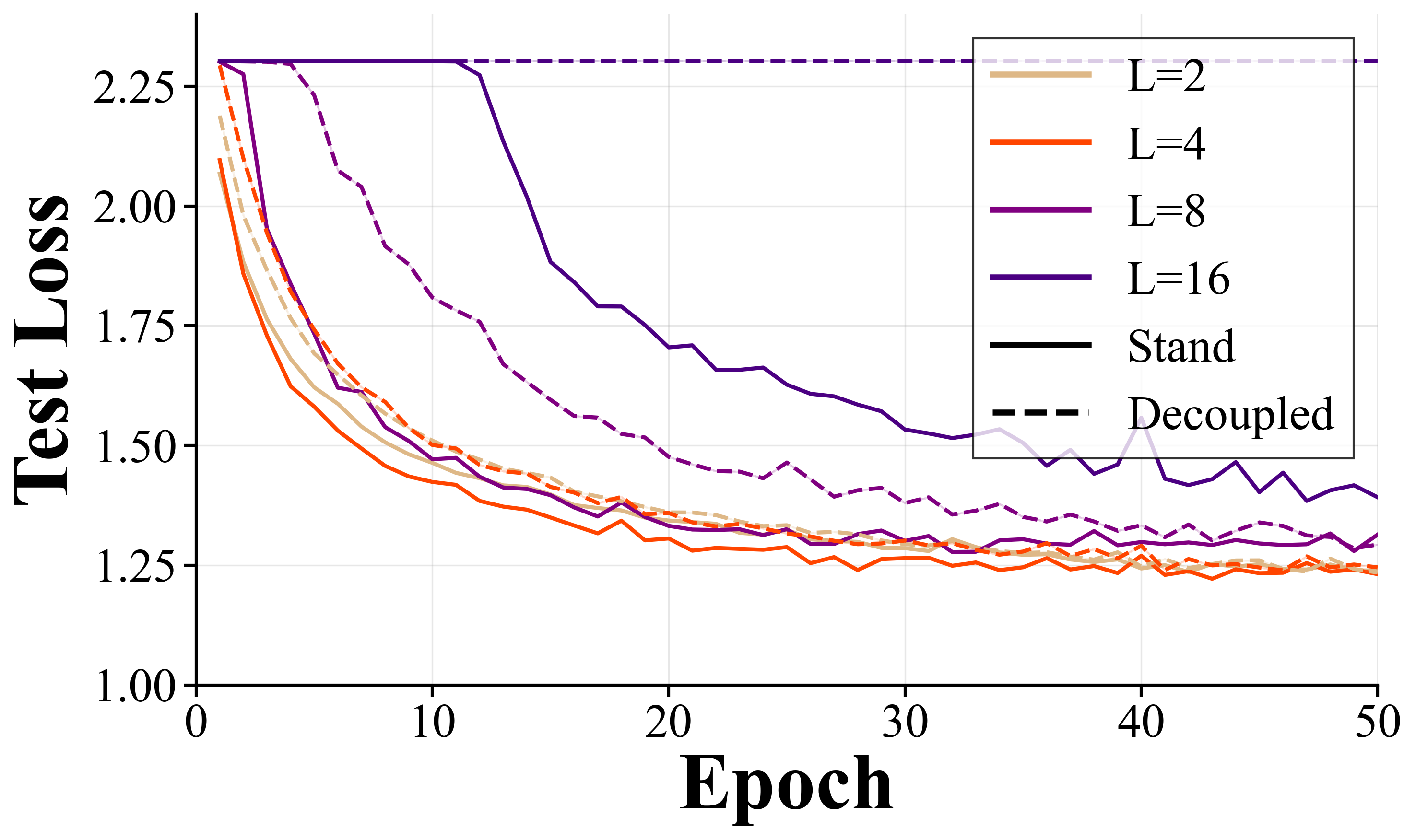}

        \vspace{0.3em}
        \textbf{(a) Vanilla DNN}
    \end{minipage}
    \hfill
    %----------------------- Column 2: $\mu$P ResNet -----------------------
    \begin{minipage}[t]{0.32\textwidth}
        \centering
        \includegraphics[width=\linewidth]{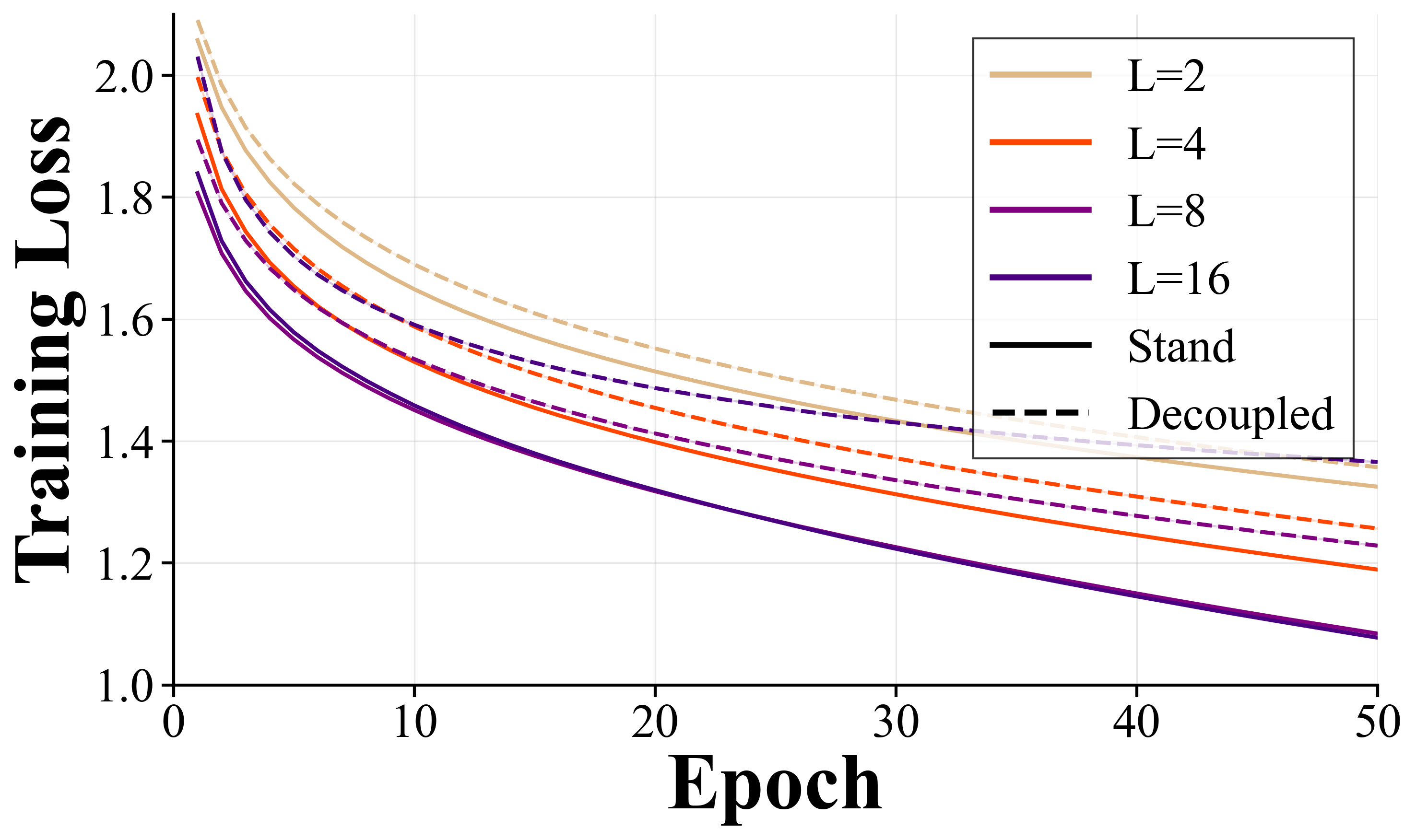}
        \vspace{0.3em}

        \includegraphics[width=\linewidth]{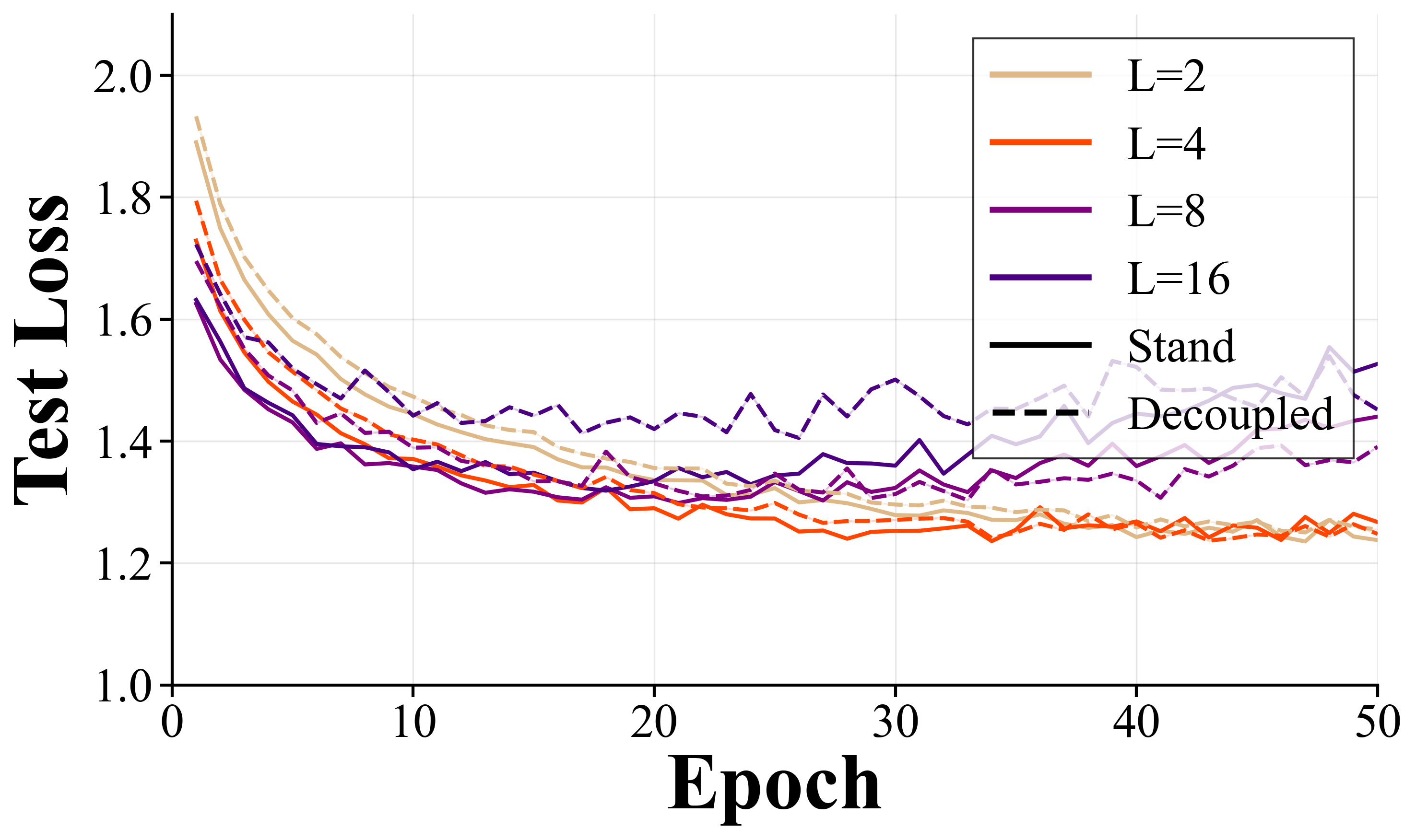}

        \vspace{0.3em}
        \textbf{(b) ResNet under $\mu$P}
    \end{minipage}
    \hfill
    %----------------------- Column 3: depth-$\mu$P ResNet -----------------------
    \begin{minipage}[t]{0.32\textwidth}
        \centering
        \includegraphics[width=\linewidth]{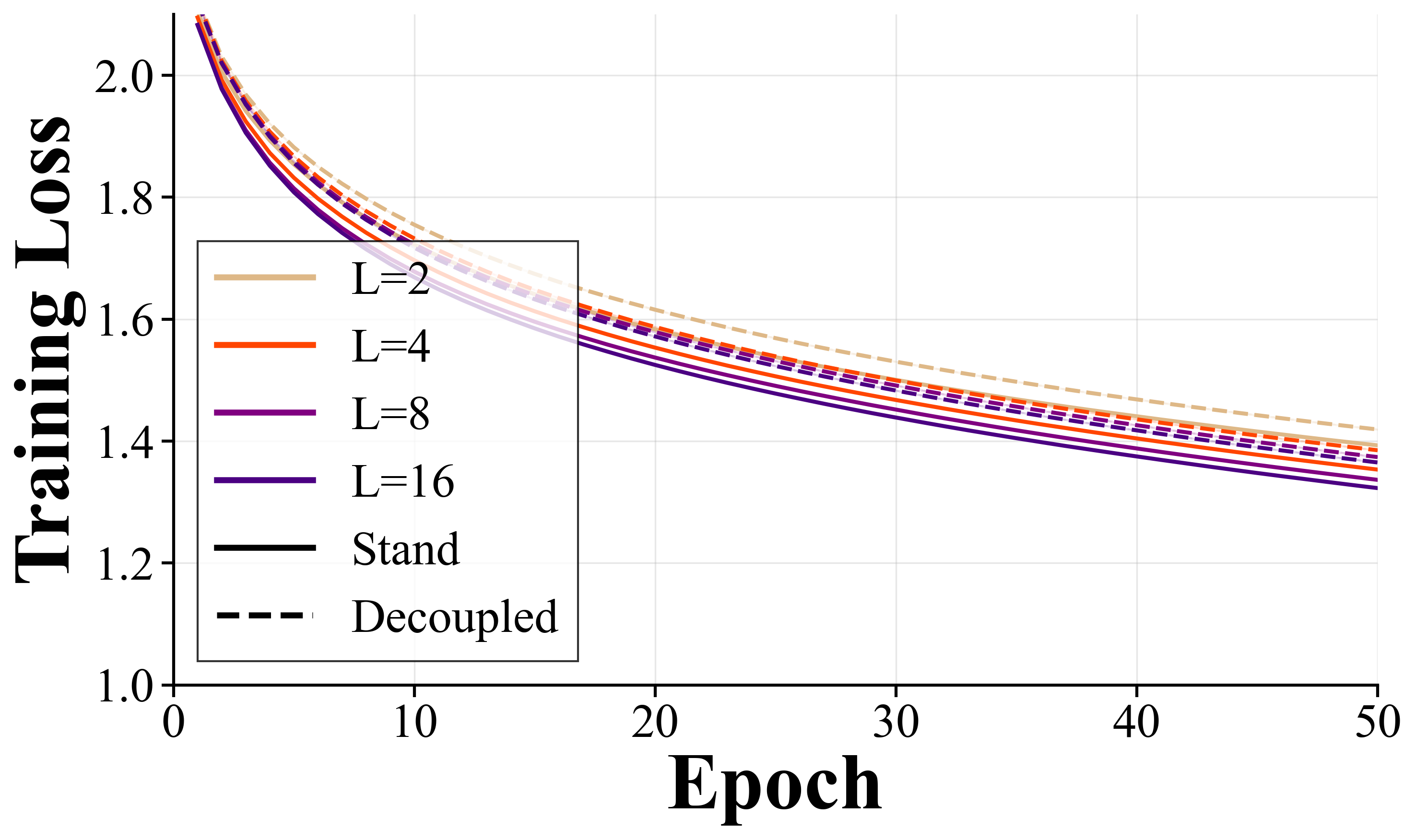}
        \vspace{0.3em}

        \includegraphics[width=\linewidth]{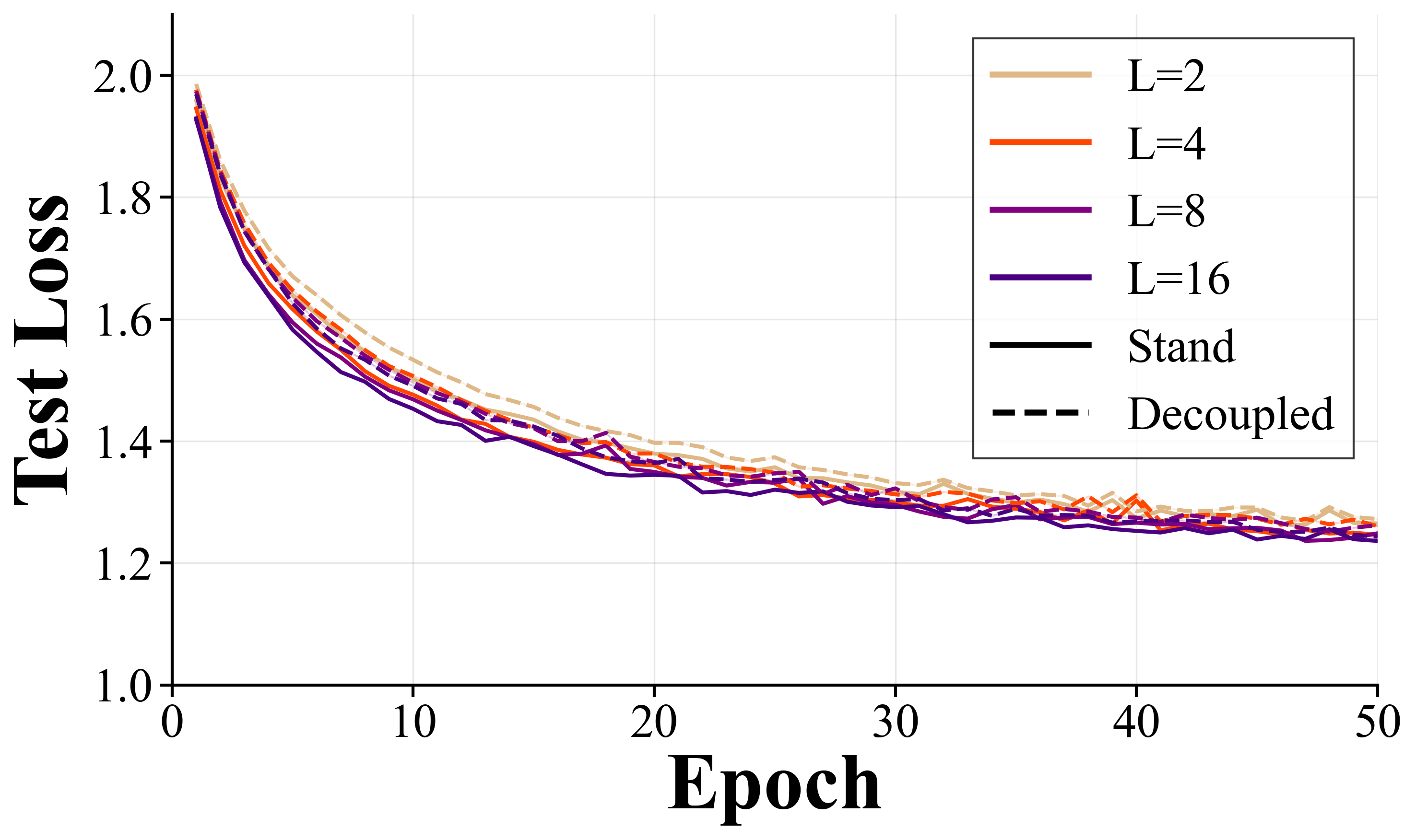}

        \vspace{0.3em}
        \textbf{(c) ResNet under depth-$\mu$P}
    \end{minipage}

    \caption{
    \textbf{Empirical evaluation of GIA restoration at width 256.}
    We repeat Figure~\ref{fig:gia_alignment} at width 256, again comparing standard training with a decoupled backward pass.
    \textbf{(a)} Vanilla DNNs are more stable than at width 128, but the two trajectories remain misaligned.
    \textbf{(b)} ResNets under $\mu$P also exhibit reduced instability without clear alignment.
    \textbf{(c)} Depth-$\mu$P further smooths the dynamics and strengthens the agreement between standard and decoupled trajectories, reinforcing empirical GIA restoration.
    }
    \label{fig:gia_alignment2}
\end{figure*}

We empirically evaluate the GIA restoration predicted by Theorem~\ref{thm:feature-learning-dynamics} and Corollary~\ref{cor:nfd-depth-rate}. For each architecture---vanilla DNN, $\mu$P-ResNet, and depth-$\mu$P ResNet---we compare \emph{standard} training, where the backward pass reuses the forward weights, with a \emph{decoupled} variant, where the backward pass uses an i.i.d.\ copy of the forward weights. Figure~\ref{fig:gia_alignment} reports training and test losses at width~128 across increasing depths. As depth grows, vanilla DNNs suffer from vanishing gradients and the standard and decoupled trajectories remain misaligned; $\mu$P-ResNets also fail to align and become unstable at large depth. In contrast, depth-$\mu$P improves both training and test performance with depth, and the standard and decoupled trajectories move closer to each other, supporting the restoration of GIA in the large-depth regime. Figure~\ref{fig:gia_alignment2} shows the corresponding width-256 results: larger width reduces some instability in vanilla DNNs and $\mu$P-ResNets but does not resolve their misalignment, while further strengthening the trajectory alignment and performance gains under depth-$\mu$P.

%%%%%%%%%%%%%%%%%%%%%%%%%%%%%%%%%%%%%%%%%%%%%%%%%%%%%%%%%%%%

% \newpage
% \input{checklist.tex}

\end{document}